\documentclass[11pt,letterpaper]{article}

\usepackage{silence}
\usepackage[margin=1in]{geometry}
\usepackage[T1]{fontenc}
\usepackage{times}
\usepackage[authoryear,round]{natbib}
\setcitestyle{citesep={;},aysep={,},yysep={;}}

\usepackage{amsmath,amsfonts,bm}

\renewcommand{\eqref}[1]{\textup{(\ref{#1})}}

\usepackage{url}
\usepackage{amssymb}
\usepackage{bbm}
\usepackage{dsfont}
\usepackage{amsthm}
\usepackage{xcolor}
\usepackage{algorithm}
\usepackage{algorithmic}
\usepackage{booktabs}
\usepackage{hyperref}

\theoremstyle{plain}
\newtheorem{theorem}{Theorem}[section]
\newtheorem{lemma}[theorem]{Lemma}
\newtheorem{proposition}[theorem]{Proposition}
\newtheorem{corollary}[theorem]{Corollary}
\theoremstyle{definition}
\newtheorem{definition}[theorem]{Definition}
\newtheorem{assumption}[theorem]{Assumption}
\theoremstyle{remark}
\newtheorem{remark}[theorem]{Remark}

\newcommand{\EE}{\mathbb{E}}
\newcommand{\PP}{\mathbb{P}}
\newcommand{\cW}{\mathcal{W}}
\newcommand{\cF}{\mathcal{F}}
\newcommand{\cL}{\mathcal{L}}
\newcommand{\cZ}{\mathcal{Z}}
\newcommand{\cG}{\mathcal{G}}
\newcommand{\inner}[2]{\langle #1, #2 \rangle}

\newcommand{\ind}[1]{\mathbbm{1}_{#1}}
\newcommand{\mbar}{\overline{m}}
\newcommand{\bF}{b_F}

\title{High-Probability Guarantees for SGD under $\beta$-Heavy-Tailed Gradient Noise}

\author{Qijun Tong\textsuperscript{1} \quad
  Masahiro Ikeda\textsuperscript{2} \quad
  Ryota Kawasumi\textsuperscript{3} \\[0.5em]
  \small\textsuperscript{1}The University of Electro-Communications \\
  \small\textsuperscript{2}The University of Osaka \\
  \small\textsuperscript{3}Gunma University
}
\date{}

\begin{document}

\maketitle

\begin{abstract}
  \noindent Stochastic gradient descent (SGD) is widely used to train machine learning
  models, but subsampling the training data introduces noise into its updates.
  The strength and applicability of high-probability guarantees therefore
  depend critically on how the tails of gradient noise are modeled.
  Reports of heavy-tailed gradient noise in deep learning motivate relaxing
  the bounded-noise and sub-Gaussian assumptions commonly used in
  high-probability analyses of SGD. We use Young
  functions from Orlicz space theory to describe noise tails in a common
  framework. We model SGD gradient noise by adopting a Young function that
  preserves the finiteness of all polynomial moments while allowing tails
  heavier than sub-Weibull, including lognormal distributions. The resulting
  class is called $\beta$-heavy-tailed, with $\beta$ controlling the tail
  heaviness. We establish concentration inequalities for $\beta$-heavy-tailed
  noise and combine them with a uniform bound on the difference
  between empirical and population gradients along the SGD trajectory to obtain
  high-probability bounds on optimization
  and population-risk stationarity for smooth nonconvex losses under
  trajectory assumptions. The bounds
  are not restricted to a particular learning-rate decay rule and make
  explicit the effects of noise tails and learning-rate schedules.
  Under the Polyak--{\L}ojasiewicz condition, we bound the risk at the last
  iterate. We also analyze SGD with gradient clipping under the
  $\beta$-heavy-tailed noise model.
\end{abstract}

\section{Introduction}
\label{sec:intro}

Stochastic gradient descent (SGD) is a principal optimization algorithm used
to train modern machine learning models \citep{bottou2018optimization}, and
characterizing its generalization behavior is a central problem in learning
theory. SGD reduces computational cost by computing a gradient from a subset
of the training data at each step. This introduces a random difference
between that gradient and the full-batch gradient over the entire training
set, referred to as gradient noise. Quantifying how this noise affects
convergence and learning through successive iterations requires control of
its magnitude and the probability of large deviations. Existing analyses
therefore impose assumptions such as bounded variance, with boundedness or
sub-Gaussian tail conditions often used to obtain high-probability
guarantees. However, heavy-tailed gradient noise that is poorly described
by a Gaussian approximation has been reported in deep learning
\citep{simsekli2019tail,zhang2020adaptive,imekli:tel-05434401}. For quantities distinct from
gradient noise itself, near-lognormal distributions have also been reported
for the magnitudes of gradients backpropagated through networks
\citep{chmiel2021neural}, and asymptotic lognormality has been established
for gradient norms in randomly initialized fully connected ReLU networks
\citep{hanin2020products}. These
findings motivate analyzing SGD under weaker tail conditions.

Existing analyses of heavy tails take two approaches. One assumes only a
bounded $p$-th moment for some $p\in(1,2]$, allowing infinite variance when
$p<2$. In this setting, gradient clipping limits the magnitude of a
stochastic gradient when its norm exceeds a threshold, reducing the effect
of rare large gradients \citep{zhang2020adaptive}. High-probability
convergence guarantees have been obtained for methods combining clipping
with momentum and normalization \citep{cutkosky2021high}. Clipping,
however, changes the update rule and generally introduces bias into the
gradient estimate. The other approach relaxes sub-Gaussian assumptions to
sub-Weibull conditions and obtains high-probability guarantees for SGD \citep{li2022high,madden2024high}. Lognormal distributions
still fall outside this setting: their tails decay more slowly than
$\exp(-ct^p)$ for every $p>0$ and $c>0$. Although they satisfy the finite
$p$-th moment assumption, they have all polynomial moments, a structure
that analyses based only on low-order moments do not fully use. We study
high-probability guarantees for SGD in this intermediate
regime, which permits tails heavier than sub-Weibull while retaining the
finiteness of all polynomial moments.

To describe this intermediate regime, we use the class of
$\beta$-heavy-tailed random variables introduced by
\citet{chamakh2021orlicz} in their study of Orlicz norms and concentration
inequalities. This class is defined through the Orlicz norm associated with
the Young function $\Psi_\beta(x)=\exp(\log^\beta(x+1))-1$ for $\beta>1$.
Sub-Gaussian and sub-exponential conditions are Orlicz conditions associated
with $\exp(x^2)-1$ and $\exp(x)-1$, respectively; a more slowly growing
Young function allows heavier tails. A random variable satisfying
$\|X\|_{\Psi_\beta}<\infty$ (Definition~\ref{def:betaheavy}) has a tail
probability bounded by an expression of the form
$\exp(-c(\log t)^\beta)$ (Proposition~\ref{thm:tail}). This bound decays
faster than any polynomial rate $t^{-p}$, ensuring that all polynomial
moments are finite, but more slowly than $\exp(-ct^\epsilon)$ for any
$\epsilon>0$, allowing tails beyond the sub-Weibull scale. Smaller values
of $\beta$ give weaker tail conditions, and larger values give stronger
ones. Our approach is to adopt this probabilistic framework as a model
for SGD gradient noise and develop martingale concentration inequalities
that control its accumulation along the trajectory, yielding a
high-probability analysis of SGD.

\paragraph{Contributions.} Our main contributions are as follows.
\begin{itemize}
  \item We formulate a $\beta$-heavy-tailed gradient noise model
    (Assumption~\ref{asm:gradnoise}) and develop concentration tools for
    its analysis. We show that the associated generating functions are
    equivalent to Young functions even when squaring induces
    nonconvexity, enabling control of sums for
    every $\beta>1$ (Lemma~\ref{lem:convexify}). We also prove martingale
    concentration inequalities, including a Freedman-type bound that
    accommodates noise scales and variance proxies depending on the
    past trajectory (Proposition~\ref{thm:freedman}).
  \item For SGD, we combine empirical optimization and uniform gradient
    generalization bounds to control the weighted average squared
    gradient norm of the population risk with high probability
    (Theorems~\ref{thm:main1}--\ref{thm:main3}). The bounds accommodate
    different step-size schedules; we instantiate them for $1/t$,
    $1/\sqrt t$, and cosine decay.
  \item Under a Polyak--{\L}ojasiewicz (PL) condition on the empirical risk
    and a suitable $1/t$-type schedule, we obtain a high-probability
    $1/T$ rate for the last-iterate optimization error, up to
    logarithmic and tail factors
    (Theorem~\ref{thm:pl}). With a PL condition on the population risk
    as well, $T\asymp n$ iterations yield a $d/n$ excess population
    risk rate up to similar factors (Theorem~\ref{thm:pl-excess}).
  \item Under the same noise model, a noise-dependent clipping threshold
    yields a high-probability guarantee without the gradient cap
    (Assumption~\ref{asm:cap}) or a smoothness-based step-size upper
    bound (Theorem~\ref{thm:clip}). The guarantee is quadratic in the
    gradient norm below the threshold and linear above it, retaining
    the same tail factor as the bound for SGD.
\end{itemize}

\paragraph{Organization.} Section~\ref{sec:related-work} reviews related
work, and Section~\ref{sec:setup} introduces the notation and assumptions.
Section~\ref{sec:main-results} presents the main bounds and instantiates
them for specific step-size schedules. Sections~\ref{sec:pl}
and~\ref{sec:clipping} address the PL condition and gradient clipping.

\subsection{Related Work}
\label{sec:related-work}

\paragraph{High-probability guarantees for SGD.}
Nonconvex SGD has been analyzed in expectation
\citep{ghadimi2013stochastic}, while \citet{madden2024high} give
high-probability convergence guarantees under sub-Weibull noise.
The work closest to ours, \citet{li2022high}, analyzes both optimization
and generalization under the same tail condition. We relax this condition
and express the guarantees through the sum of the step sizes and the sum
of their squares, rather than restricting the analysis to the
$1/\sqrt t$ decay used in that work. Approaches to generalization include
algorithmic stability
\citep{bousquet2002stability,hardt2016train,lei2020fine,lei2021learning}
and uniform convergence of empirical to population gradients
\citep{mei2018landscape,foster2018uniform}. To evaluate stationarity, we
use the latter approach through the uniform gradient bound of
\citet{lei2021learning} (Lemma~\ref{lm:gradgen}).
\citet{liu2024revisiting} remove the compact-domain and bounded-noise
restrictions from last-iterate analyses of convex composite objectives under
a stochastic oracle.

Under finite $p$-th moment assumptions, high-probability convergence rates
for methods using gradient clipping have been refined
\citep{sadiev2023high,nguyen2023improved}, and \citet{liu2025nonconvex}
obtain an optimal rate without clipping by using batched normalized SGD
with momentum.
\citet{koloskova2023revisiting} study the bias introduced by clipping and
its convergence guarantees, while \citet{armacki2025high} consider more
general nonlinear transformations. Our clipping analysis concerns noise
with all polynomial moments finite and does not cover the full setting
in which only low-order moments are assumed finite.

\paragraph{Heavy-tailed gradient noise in neural networks.}
The precise tail law of gradient noise remains under debate. Following
the $\alpha$-stable model of \citet{simsekli2019tail},
\citet{xie2023overlooked} reject power-law tails for minibatch gradient
noise, while \citet{battash2024revisiting} revisit the question in support
of an $\alpha$-stable model. \citet{gurbuzbalaban2021heavy} also show
that heavy-tailed stationary distributions of SGD iterates can arise from
the dynamics themselves. Studies of
low-precision training report gradient magnitudes concentrated near zero
and spanning a wide dynamic range, without fitting a specific distribution
family \citep{micikevicius2018mixed,zhu2020towards,sun2020ultra}.
Motivated by these observations, we control noise tails through an Orlicz
condition without assuming a specific distribution family.

\paragraph{Orlicz norms and martingale concentration.}
Deviation bounds based on Orlicz norms have been used to refine
Bernstein-type inequalities \citep{van2011bernstein} and to study
concentration for sub-Weibull variables
\citep{vladimirova2020subweibull,kuchibhotla2022moving}.
\citet{li2024concentration} give a general framework for broader families
of heavy-tailed variables. In the $\beta$-heavy-tailed framework we use,
the concentration inequalities of \citet{chamakh2021orlicz} concern sums
of independent variables. SGD noise $\epsilon_t$, however, depends on the
past sampling history through $\bm{w}_t$, so these results do not apply
directly. We extend the analysis to the martingale setting, following
\citet{freedman1975tail} (Proposition~\ref{thm:freedman}).

\section{Problem Setup}
\label{sec:setup}

\subsection{Notation}
\label{sec:notation}

For $n \in \mathbb{N}$ we write $[n] := \{1,\dots,n\}$, and $\|\cdot\|$ for the
Euclidean norm on $\mathbb{R}^d$. For $x \in \mathbb{R}$ we write
$(x)_+ := \max\{x,0\}$, and $\ind{E}$ for the indicator of an event $E$. For
positive quantities $a$ and $b$ we write $a \asymp b$ if $c\,b \le a \le C\,b$
for constants $0 < c \le C$ that do not depend on $n$, $d$, $T$ or $\delta$. We
abbreviate $\log^\beta(x) := (\log x)^\beta$,
and for $\beta>1$ define
\[
  \Psi_\beta(x) := \exp\!\left(\log^\beta(x+1)\right) - 1, \qquad
  \overline{\Psi}_\beta(x) := \exp\!\left(-\log^\beta(x+1)\right)
  = \frac{1}{\Psi_\beta(x)+1}.
\]
Here $\Psi_\beta$ is the Young function whose Orlicz norm
$\|X\|_{\Psi_\beta}$ measures the tail of a random variable $X$ throughout the
paper. Appendix~\ref{sec:orlicz} recalls the definition of a Young function and
of its Orlicz norm, together with the basic properties we use.

Let $\cZ$ be the sample space, a Borel subset of a Euclidean space, $P$ a
probability distribution on $\cZ$, and $\cW := \mathbb{R}^d$ the parameter
space. The loss function $\ell : \cW \times \cZ \to
\mathbb{R}$ is differentiable in its first argument, and the learning problem is
the minimization over $\cW$ of the population risk $\cL(\bm{w}) := \EE_{z \sim
P}[\ell(\bm{w}; z)]$. We denote its infimum by
$\cL^\star := \inf_{\bm{w} \in \cW}\cL(\bm{w})$.
Since $P$ is accessible only through data, one minimizes
instead the empirical risk $\cL_S(\bm{w}) := \frac{1}{n}\sum_{i\in[n]}
\ell(\bm{w}; z_i)$ over a training set $S = \{z_1,\dots,z_n\}$ of i.i.d.\
samples from $P$. We write $\bm{w}(S)$ for a minimizer of $\cL_S$ over $\cW$,
which we assume to exist. We study the standard stochastic gradient descent iteration of
Algorithm~\ref{alg:sgd} applied to $\cL_S$, where at each step an index $j_t$ is
drawn uniformly from $[n]$ and the iterate is updated along the corresponding
per-sample gradient. Let $T$ denote the total number of SGD updates (the time
horizon). We fix $\bm{w}_1 = \bm{0}$, a normalization that
places the origin at the initialization. For a general $\bm{w}_1$ independent
of $S$, the results below hold with $\|\bm{w}_t\|$ replaced by
$\|\bm{w}_t-\bm{w}_1\|$. The choice of the initialization is not studied here.

\begin{algorithm}[h]
  \caption{Stochastic Gradient Descent}
  \label{alg:sgd}
  \begin{algorithmic}[1]
    \STATE {\bfseries Input:} initial point $\bm{w}_1 = \bm{0}$, step-size sequence
    $\{\eta_t\}_t$, dataset $S = \{z_1,\dots,z_n\}$.
    \FOR{$t = 1,\dots,T$}
    \STATE Draw $j_t$ uniformly from $[n]$.
    \STATE $\bm{w}_{t+1} \leftarrow \bm{w}_t - \eta_t \nabla \ell(\bm{w}_t; z_{j_t})$.
    \ENDFOR
    \STATE {\bfseries Output:} iterates $\bm{w}_1,\dots,\bm{w}_{T+1}$.
  \end{algorithmic}
\end{algorithm}

We work on the joint probability space of the training sample $S$ and the
sampled indices. Define
\[
  \cF_0 := \sigma(S), \qquad
  \cF_t := \sigma(S,j_1,\dots,j_t) \quad (t\ge1).
\]
At each step, $j_t$ is drawn uniformly from $[n]$, independently of
$\cF_{t-1}$. Then $\bm{w}_t$ is $\cF_{t-1}$-measurable, and
$\EE[\nabla \ell(\bm{w}_t; z_{j_t}) \mid \cF_{t-1}] = \nabla \cL_S(\bm{w}_t)$. In
this paper, we focus on the per-step \emph{gradient noise} defined by
\[
  \epsilon_t := \nabla \ell(\bm{w}_t; z_{j_t}) - \nabla \cL_S(\bm{w}_t).
\]
The sequence $(\epsilon_t)_t$ is a martingale difference sequence (MDS) with
respect to $(\cF_t)_t$. The next subsection models the tail of this sequence.

\subsection{$\beta$-Heavy-Tailed Noise Model}
\label{sec:beta-heavy}

We model the tail of a random variable using the
$\Psi_\beta$ ($\beta > 1$) Orlicz norm, following the general framework of
Orlicz-norm-based tail bounds (cf.\ \citet{chamakh2021orlicz}).

\begin{definition}[$\beta$-heavy tail property of a random variable]
  \label{def:betaheavy}
  Let $\beta > 1$. A random variable $X$ is
  \emph{$\beta$-heavy tailed} with $K>0$ if
  \[
    \EE\left[\Psi_\beta\!\left(\frac{|X|}{K}\right)\right] \le 1
    \iff \EE\left[\exp\left(\log^\beta\left(\frac{|X|}{K}+1\right)\right)\right] \le 2.
  \]
  Equivalently, the Orlicz norm is at most $K$, i.e.\ $\|X\|_{\Psi_\beta} \le K$
  (Lemma~\ref{lem:orlicz-props}(iii) in Appendix~\ref{sec:orlicz}).
\end{definition}

The prototypical example is the lognormal distribution. Every lognormal
variable satisfies Definition~\ref{def:betaheavy} for some $K$ whenever
$1<\beta<2$. If $\log X$ is Gaussian with variance $\sigma^2$, the tail of
$X$ decays like
$\exp(-\log^2 t/(2\sigma^2))$, matching the $\beta=2$ tail rate of
Proposition~\ref{thm:tail} up to the constant in the exponent. At $\beta=2$,
Definition~\ref{def:betaheavy} holds for some $K$ if and only if
$\sigma^2\le1/2$, since the scale $K$ does not change the leading
coefficient of $\log^2 t$. Lognormal-type
tails are the motivating example for this Orlicz condition in
\citet{chamakh2021orlicz}. Related empirical motivation
comes from the approximately lognormal magnitudes of gradients
with respect to intermediate layer outputs reported by
\citet{chmiel2021neural}.

Our noise assumption allows the conditional law of $\epsilon_t$ given
$\cF_{t-1}$ to depend on the past trajectory, while a fixed constant $K$
uniformly bounds the conditional $\Psi_\beta$-Orlicz norms of $\|\epsilon_t\|$.

\begin{assumption}[$\beta$-heavy tail assumption on the gradient noise]
  \label{asm:gradnoise}
  There exist deterministic constants $\beta>1$ and $K>0$ such that, for every $t$, the gradient noise
  $\epsilon_t$ satisfies
  \begin{equation*}
    \EE[\epsilon_t \mid \cF_{t-1}] = 0,
    \qquad
    \EE\left[\Psi_\beta\!\left(\frac{\|\epsilon_t\|}{K}\right)
    \;\middle|\; \cF_{t-1}\right] \le 1.
  \end{equation*}
  The Freedman-type inequality of Proposition~\ref{thm:freedman} requires
  the conditional form of the Orlicz bound above.
\end{assumption}

\begin{remark}[Boundedness and noise scales]
  Some readers may wonder what heavy tails mean when gradients sampled
  from a finite dataset are necessarily bounded at a fixed iterate.
  Conditional on the dataset and past iterates, this noise indeed has finite
  support. Assumption~\ref{asm:gradnoise} instead specifies a uniform Orlicz
  scale. Boundedness alone does not compare tail-based guarantees, whose
  scales and constants also matter; the paragraph immediately following
  Remark~7 in \citet{madden2024high} makes this point for bounded
  sub-Weibull noise. Appendix~\ref{sec:bounded-noise-scales} gives the
  corresponding calculation for our $\beta$-heavy-tailed model.
  In contrast to finite-data sampling, a population stochastic oracle can
  produce noise with an unbounded conditional law. In this setting, tail
  assumptions control the probability of large noise realizations without
  requiring an almost-sure bound; \citet[Assumption 1.2 and Theorem 4.4]{zhu2026stochastic}
  analyze such sub-Weibull noise. That formulation directly optimizes the population risk,
  whereas our finite-data formulation retains the empirical-risk and
  generalization questions.
\end{remark}

We also impose two assumptions widely used in the
optimization and generalization analysis of nonconvex learning problems.

\begin{assumption}[Smoothness of the loss function]
  \label{asm:smooth}
  There exists $b>0$ such that
  $\|\nabla \ell(\bm{w};z)-\nabla \ell(\bm{w}';z)\| \le b \|\bm{w}-\bm{w}'\|$ for any
  sample $z \in \cZ$ and parameters $\bm{w}, \bm{w}' \in \cW$.
\end{assumption}

Assumption~\ref{asm:smooth} is the standard regularity condition in nonconvex
stochastic optimization \citep{ghadimi2013stochastic,hardt2016train}. We impose
it on the per-sample loss uniformly in $z$, so that $\cL_S$ and $\cL$ are
$b$-smooth as well, and it yields the descent lemma (Lemma~\ref{thm:smooth}),
the one-step inequality used to derive the trajectory bound of
Theorem~\ref{thm:main1}.
\begin{assumption}
  \label{asm:cap}
  There exists $G>0$ such that $\eta_t \|\nabla \cL_S(\bm{w}_t)\| \le G$ for any
  dataset $S \in \cZ^n$ and $t \in \mathbb{N}$.
\end{assumption}
Assumption~\ref{asm:cap} bounds the product of the step size and the empirical
gradient norm. This is an additional trajectory condition; it does not follow
from Assumption~\ref{asm:gradnoise}, which controls deviations from the
empirical gradient but does not bound the empirical gradient itself.

\section{Main Results}
\label{sec:main-results}

Assumptions~\ref{asm:gradnoise} and \ref{asm:smooth} hold
throughout the rest of the paper, with tail index $\beta>1$, smoothness
constant $b>0$ and $b' := \sup_{z\in\cZ}\|\nabla \ell(\bm{0};z)\| < \infty$,
and the step sizes are positive and nonincreasing.
For SGD without clipping, we additionally assume
Assumption~\ref{asm:cap} and $\eta_t \le 1/(2b)$.
We also assume throughout that there exists a deterministic constant
$\Delta_0<\infty$, independent of $n$ and $d$, such that, for every sample
size $n$, almost surely over $S\sim P^n$,
$\cL_S(\bm{0})-\cL_S(\bm{w}(S))\le\Delta_0$.

The implicit constants in $\mathcal{O}(\cdot)$ are deterministic and may
depend on $\beta$ and fixed problem parameters such as $b,b',K,G,\Delta_0$
and, in the PL results, $\mu_0,\mu$, but not on $S,n,d,T,\delta$.
Proofs of all theorems stated in this section are deferred to the appendices.
\subsection{High-Probability Bound on the SGD Trajectory}
\label{sec:main-trajectory}

Our first result bounds the sum of squared empirical gradient norms along
the SGD trajectory.

\begin{theorem}
  \label{thm:main1}
  For any $\delta\in(0,1)$, with probability $1-\delta$,
  \[
    \sum_{t\in[T]} \eta_t \|\nabla \cL_S(\bm{w}_t)\|^2
    = \mathcal{O}\!\left(\exp\!\left(2\log^{1/\beta}(T/\delta)\right)
    \left(\log(e/\delta)+\sum_{t\in[T]}\eta_t^2\right)\right).
  \]
\end{theorem}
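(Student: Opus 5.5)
Our plan is to start from the descent lemma (Lemma~\ref{thm:smooth}) applied to the $b$-smooth empirical risk $\cL_S$ along the update $\bm{w}_{t+1}=\bm{w}_t-\eta_t(\nabla\cL_S(\bm{w}_t)+\epsilon_t)$. This gives
\[
  \cL_S(\bm{w}_{t+1}) \le \cL_S(\bm{w}_t) - \eta_t\|\nabla\cL_S(\bm{w}_t)\|^2 - \eta_t\inner{\nabla\cL_S(\bm{w}_t)}{\epsilon_t} + \tfrac{b}{2}\eta_t^2\|\nabla\cL_S(\bm{w}_t)+\epsilon_t\|^2 .
\]
Expanding the square and using $\eta_t\le 1/(2b)$ absorbs $b\eta_t^2\|\nabla\cL_S(\bm{w}_t)\|^2$ into half of the descent term. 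Telescoping over $t\in[T]$ and using $\cL_S(\bm{w}_1)-\cL_S(\bm{w}_{T+1})\le\Delta_0$ then yields
\[
  \tfrac12\sum_{t}\eta_t\|\nabla\cL_S(\bm{w}_t)\|^2 \le \Delta_0 + \underbrace{\sum_t (-\eta_t)\inner{\nabla\cL_S(\bm{w}_t)}{\epsilon_t}}_{M_T} + b\sum_t\eta_t^2\|\epsilon_t\|^2 .
\]
The remaining work is to bound the martingale term $M_T$ and the quadratic noise term with high probability.

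\textbf{Quadratic noise term.} We would work on the event $E=\{\max_{t\le T}\|\epsilon_t\|\le r\}$. By Proposition~\ref{thm:tail} with the conditional Orlicz bound and a union bound over $t$, $\PP(\|\epsilon_t\|>Ku)\le 2\exp(-\log^\beta(u+1))$. Choosing $u+1=\exp(\log^{1/\beta}(2T/\delta))$ makes $\PP(E^c)\le\delta/2$, so on $E$ we have $\|\epsilon_t\|\le K\exp(\log^{1/\beta}(2T/\delta))$. Then
\[
  \sum_t\eta_t^2\|\epsilon_t\|^2\le K^2\exp\!\left(2\log^{1/\beta}(2T/\delta)\right)\sum_t\eta_t^2 .
\]
This is where the factor $\exp(2\log^{1/\beta}(T/\delta))$ comes from.

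\textbf{Martingale term.} Each increment $\xi_t=-\eta_t\inner{\nabla\cL_S(\bm{w}_t)}{\epsilon_t}$ is a martingale difference, and by Assumption~\ref{asm:cap} it satisfies $|\xi_t|\le G\|\epsilon_t\|$. Its conditional scale is $\eta_t\|\nabla\cL_S(\bm{w}_t)\|K$, which depends on the past. We would apply the Freedman-type inequality (Proposition~\ref{thm:freedman}) with this predictable scale: on $E$ the increments are bounded by $G r$, and the variance proxy is $\sum_t \eta_t^2\|\nabla\cL_S(\bm{w}_t)\|^2 K^2$ (with an Orlicz-based variance constant, using Lemma~\ref{lem:convexify} if squares are needed). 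This gives, with probability $1-\delta/2$,
\[
  M_T\lesssim \sqrt{\textstyle\sum_t\eta_t^2\|\nabla\cL_S(\bm{w}_t)\|^2 \log(e/\delta)}\,K\,C_r + G r\log(e/\delta).
\]
Since $\eta_t\le 1/(2b)$, the variance proxy is at most $\frac{1}{2b}\sum_t\eta_t\|\nabla\cL_S(\bm{w}_t)\|^2$. We would then apply $\sqrt{xy}\le x/4+y$ to absorb the square-root term into the left-hand side, leaving an additive $\mathcal{O}(C_r^2\log(e/\delta))$.

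\textbf{Main obstacle.} The expected difficulty is handling the truncation correctly. Freedman requires bounded (or conditionally Orlicz-controlled) increments, and a truncation $\epsilon_t\ind{\|\epsilon_t\|\le r}$ destroys the zero conditional mean. We would first try to use Proposition~\ref{thm:freedman} directly in its Orlicz form, so that the tail parameter enters only through a factor of order $\exp(\log^{1/\beta}(T/\delta))$ and its square. If that fails, the fallback is to truncate and bound the bias $\EE[\|\epsilon_t\|\ind{\|\epsilon_t\|>r}\mid\cF_{t-1}]$ via the $\Psi_\beta$ tail, which is tiny (of order $\delta/T$) and contributes at most $\mathcal{O}(G\delta)$ in total. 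Combining both events by a union bound and collecting constants gives the stated bound, with $\log(e/\delta)$ multiplied by the squared tail factor.
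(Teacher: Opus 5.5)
Your proposal is correct. On the martingale term it follows the paper: Proposition~\ref{thm:freedman} with predictable scale $K_{t-1}\propto\eta_t\|\nabla\cL_S(\bm{w}_t)\|K$, Assumption~\ref{asm:cap} giving the deterministic bound $\mbar$, and the variance proxy $V_T\lesssim\eta_1\sum_t\eta_t\|\nabla\cL_S(\bm{w}_t)\|^2$ absorbed into the left-hand side.

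The genuine difference is the quadratic term $\sum_t\eta_t^2\|\epsilon_t\|^2$. The paper works with the Orlicz norm of the square. Proposition~\ref{thm:square} bounds the $\Psi_{\beta,2^{-\beta}}$-norm of $\|\epsilon_t\|^2$ by $K^2$. Because $\Psi_{\beta,2^{-\beta}}$ may be nonconvex, it then needs Lemma~\ref{lem:convexify} (weak $\Phi$-function equivalence) and Corollary~\ref{thm:sum-beta-heavy} to reach $g_\beta K^2\exp(2\log^{1/\beta}(2/\delta))\sum_t\eta_t^2$. You instead control $\max_{t\le T}\|\epsilon_t\|$ by a union bound over the conditional tail bounds, giving $K^2\exp(2\log^{1/\beta}(4T/\delta))\sum_t\eta_t^2$. (Your level $2T/\delta$ gives failure probability $\delta$, not $\delta/2$.)

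Your route is more elementary: it avoids the squared Orlicz norm and the non-explicit constant $g_\beta$. It also loses nothing against the theorem as stated, since Lemma~\ref{lm:rescale} absorbs $4T/\delta$ into $\exp(2\log^{1/\beta}(T/\delta))$. The paper's route buys a sharper intermediate bound. Its squared tail factor has no $T$ inside, and $T$ enters only through the single power $\bF\mbar\log(1/\delta)=\mathcal{O}(\exp(\log^{1/\beta}(T/\delta)))$ (using $\log(1/\delta)\le\Lambda$). So before loosening, its proof gives $\mathcal{O}(\exp(\log^{1/\beta}(T/\delta))+\log(e/\delta)+\exp(2\log^{1/\beta}(2/\delta))\sum_t\eta_t^2)$ up to rescaling $\delta$. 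This difference matters when $\sum_t\eta_t^2$ grows with $T$, as under cosine decay.

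Two points to tighten in your martingale step.

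\textbf{Random variance proxy.} Proposition~\ref{thm:freedman} does not directly yield $\sqrt{V_T\log(e/\delta)}$ for a random $V_T$; that would need peeling. Your absorption only needs the linear form, which part (i) gives with $\gamma=0$ and $\lambda=1/(2\alpha)$. Outside an event of probability $3\delta$, $\sum_t\xi_t\le 2\alpha\log(1/\delta)+V_T/\alpha$ for any $\alpha\ge\bF\mbar$. Taking also $\alpha\ge 4a_\beta K^2\eta_1$ makes $V_T/\alpha$ at most a quarter of $\sum_t\eta_t\|\nabla\cL_S(\bm{w}_t)\|^2$.

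\textbf{Truncation.} Used this way, the Orlicz form handles truncation internally: the supermartingale carries $\prod_i\ind{\xi_i\le M}$, and Lemma~\ref{lem:mgf} exploits $\EE[\xi_i\ind{\xi_i\le M}\mid\cF_{i-1}]\le0$. Your recentering fallback is therefore unnecessary. If you did use it, the per-step bias from the tail integral is of order $r\delta/T$ rather than $\delta/T$, for a total of $\mathcal{O}(Gr\delta)$, which is still harmless.
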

The factor $\exp(2\log^{1/\beta}(\cdot))$ arises from the $\beta$-heavy-tailed noise. The bound degrades as $\beta$ decreases, that is, as the noise model admits heavier tails.

\begin{remark}
  A threshold at $\beta\approx1.519$ affects the constant factors in this
  analysis. Below this threshold, the generating function associated with the
  square of a $\beta$-heavy-tailed random variable loses convexity, requiring an
  additional constant factor when bounding sums. This parallels the dependence
  of sub-Weibull sum bounds on the tail parameter $\theta$, where the generating
  function also loses convexity as the tail condition weakens. The rate stated
  above applies on both sides of the threshold; see
  Remark~\ref{rem:squared-convexity} in Appendix~\ref{sec:orlicz} for details.
\end{remark}

Next, we establish a high-probability bound, uniform over all SGD iterates, on
the squared discrepancy between the population and empirical risk gradients.

\begin{theorem}
  \label{thm:main2}
  For any $\delta\in(0,1)$, with probability $1-\delta$ the
  generalization error of the gradient is bounded simultaneously for all
  iterates as
  \begin{multline}
    \max_{t\in[T+1]}\|\nabla \cL(\bm{w}_t) - \nabla \cL_S(\bm{w}_t)\|^2 \\
    = \mathcal{O}\!\left(\frac{\left(1+\sum_{t\in[T]} \eta_t\right)
      \left(d+\log(1/\delta)\right)}{n}
      \exp\!\left(2\log^{1/\beta}(T/\delta)\right)
    \left(\log(e/\delta)+\sum_{t\in[T]}\eta_t^2\right)\right).
    \label{eq:gen-unif}
  \end{multline}
\end{theorem}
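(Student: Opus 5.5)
The plan is to combine a uniform-convergence bound for gradients over a ball with a high-probability bound on how far the SGD iterates travel. The uniform gradient bound of \citet{lei2021learning} (Lemma~\ref{lm:gradgen}) states the following: for $b$-smooth losses with $\sup_z\|\nabla\ell(\bm 0;z)\|\le b'$, with probability at least $1-\delta/2$ over $S$, simultaneously for all $R\ge1$ (via a peeling/union argument over dyadic radii) and all $\bm w$ with $\|\bm w\|\le R$,
\[
\|\nabla\cL(\bm w)-\nabla\cL_S(\bm w)\|^2 \lesssim \frac{(b R+b')^2\,(d+\log(1/\delta))}{n},
\]
up to logarithmic factors in $R$ that we absorb. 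The proof therefore reduces to bounding $R_T:=\max_{t\in[T+1]}\|\bm w_t\|$ with probability $1-\delta/2$. We need $R_T^2\lesssim (1+\sum_t\eta_t)\exp(2\log^{1/\beta}(T/\delta))(\log(e/\delta)+\sum_t\eta_t^2)$. A union bound then gives the claim. Because the iterates depend on $S$, it is essential that the uniform bound holds over all $\bm w$ in a data-independent ball (or uniformly in $R$). We cannot simply plug in $\bm w_t$.

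To bound $R_T$, we unroll the update: $\bm w_{t+1}=-\sum_{s\le t}\eta_s\nabla\cL_S(\bm w_s)-\sum_{s\le t}\eta_s\epsilon_s$. This gives
\[
\|\bm w_{t+1}\|^2\le 2\Big(\sum_{s\le t}\eta_s\|\nabla\cL_S(\bm w_s)\|\Big)^2+2\Big\|\sum_{s\le t}\eta_s\epsilon_s\Big\|^2 .
\]
For the drift term, Cauchy--Schwarz yields $(\sum_s\eta_s\|\nabla\cL_S(\bm w_s)\|)^2\le(\sum_s\eta_s)\sum_s\eta_s\|\nabla\cL_S(\bm w_s)\|^2$. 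On the event of Theorem~\ref{thm:main1} (probability $1-\delta/4$), this is at most $(\sum_s\eta_s)\exp(2\log^{1/\beta}(T/\delta))(\log(e/\delta)+\sum_s\eta_s^2)$, which is exactly the target. The bound is monotone in $t$, so it holds simultaneously for all $t$.

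For the noise term, we need a maximal bound on the vector martingale $M_t=\sum_{s\le t}\eta_s\epsilon_s$. The first step is to reduce to a bounded-increment problem. By the tail bound (Proposition~\ref{thm:tail}) applied conditionally, together with a union bound over $s\in[T]$, we have $\|\epsilon_s\|\le K\exp(\log^{1/\beta}(cT/\delta))=:B$ for all $s$ with probability $1-\delta/8$. This step is the source of the factor $\exp(2\log^{1/\beta}(T/\delta))=B^2/K^2$. Next, we expand $\|M_t\|^2=\sum_s\eta_s^2\|\epsilon_s\|^2+2\sum_s\eta_s\langle M_{s-1},\epsilon_s\rangle$. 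The first sum is at most $B^2\sum_s\eta_s^2$ on the truncation event. The cross term is a scalar martingale whose increments have conditional scale $\eta_s\|M_{s-1}\|K$ depending on the past. This is exactly the setting of the Freedman-type inequality, Proposition~\ref{thm:freedman}, with trajectory-dependent variance proxy. It gives, with probability $1-\delta/8$ and for all $t$, a bound of the form $\sum_s\eta_s\langle M_{s-1},\epsilon_s\rangle\le \lambda\sum_s\eta_s^2\|M_{s-1}\|^2\|\epsilon_s\|^2/\dots+\log(T/\delta)/\lambda$.

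The main obstacle is closing this self-referential inequality, since $\|M_{s-1}\|^2$ appears on the right. We will first try a stopping/induction argument. Let $\tau$ be the first time $\|M_t\|^2$ exceeds the target level $L:=C B^2(\log(e/\delta)+\sum_s\eta_s^2)$. On $\{t<\tau\}$ the variance proxy is at most $L B^2\sum_s\eta_s^2$. Choosing $\lambda\asymp 1/(B\sqrt L)$ and using $\eta_s\le1/(2b)$ then gives $\|M_t\|^2\le L/2+\ldots<L$, which contradicts the definition of $\tau$. If this is too lossy, the fallback is the elementary bound $2ab\le a^2/2+2b^2$, applied to absorb $\max_s\|M_s\|^2/2$ into the left side. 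Combining the drift and noise terms gives $R_T^2\lesssim(1+\sum\eta_t)\exp(2\log^{1/\beta}(T/\delta))(\log(e/\delta)+\sum\eta_t^2)$. Inserting this $R=R_T$ into the uniform gradient bound yields \eqref{eq:gen-unif}.
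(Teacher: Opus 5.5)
Your architecture matches the paper's. You unroll the recursion, control the drift by Cauchy--Schwarz and Theorem~\ref{thm:main1}, control the noise by a maximal martingale inequality, and feed the resulting radius into Lemma~\ref{lm:gradgen}. Two things should change.

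First, drop the peeling over dyadic radii. Write $\Sigma_1=\sum_t\eta_t$, $\Sigma_2=\sum_t\eta_t^2$, $E_\beta=\exp(2\log^{1/\beta}(T/\delta))$ and $A=\log(e/\delta)+\Sigma_2$. On the good event your bound $\max_t\|\bm{w}_t\|^2\le C_R(1+\Sigma_1)E_\beta A$ has a deterministic right-hand side, because the constant from Theorem~\ref{thm:main1} depends only on $\Delta_0$ and fixed parameters. So, as the paper does, apply Lemma~\ref{lm:gradgen} once with this fixed $R$. Peeling is not merely superfluous. Its extra $\log\log R$ in the confidence term grows with $T$ while $d+\log(1/\delta)$ stays fixed, so it is not absorbable into the stated $\mathcal{O}(\cdot)$.

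Second, the noise term, where your closing step fails as written. The vector MDS $\eta_s\epsilon_s$ has deterministic unconditional Orlicz scales $\sigma_s=\eta_sK$, by the tower property applied to Assumption~\ref{asm:gradnoise}. The paper therefore simply invokes Corollary~\ref{cor:delta} and gets $\max_t\|\sum_{s\le t}\eta_s\epsilon_s\|^2=\mathcal{O}(\Sigma_2\log(e/\delta)\exp(2\log^{1/\beta}(4/\delta)))$, which fits the target via $\Sigma_2\le\eta_1\Sigma_1$.

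Expanding $\|M_t\|^2$ instead manufactures the trajectory-dependent scale $\eta_s\|M_{s-1}\|K$, and your parametrization does not close:
\begin{itemize}
\item With the variance bounded through $\|\epsilon_s\|\le B$ and $\lambda\asymp 1/(B\sqrt{L})$, your displayed bound is of order $B\sqrt{L}(\Sigma_2+\log(T/\delta))$.
\item Requiring this to be at most $L/4$ forces $L\gtrsim B^2(\Sigma_2+\log(T/\delta))^2$, not $L\asymp B^2(\log(e/\delta)+\Sigma_2)$, so the stopping-time contradiction fails.
\item Any $\delta$-independent $\lambda=c/(B\sqrt{L})$ already fails through the term $\log(1/\delta)/\lambda$.
\end{itemize}

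The route can be repaired using Proposition~\ref{thm:freedman}(ii) exactly as stated:
\begin{itemize}
\item Multiply the increments $2\eta_s\langle M_{s-1},\epsilon_s\rangle$ by the predictable indicator that $\|M_u\|^2\le L$ for all $u<s$. This supplies the deterministic bound $\mbar=2\eta_1\sqrt{L}K$ required by (A1), and the proxy bound $\gamma=4a_\beta K^2L\Sigma_2$, with scale $K$ rather than $B$.
\item Take $\lambda=\min\{x/\gamma,1/(\bF\mbar)\}$ with $x=\max\{\sqrt{2\gamma\log(1/\delta)},\,2\bF\mbar\log(1/\delta)\}$.
\item Then $x\le L/2$ for $L=CB^2(\log(e/\delta)+\Sigma_2)$ with $C$ depending on $\beta$ and $b$, using $\log(T/\delta)\le(\beta/(2e))^\beta\exp(2\log^{1/\beta}(T/\delta))$ and $\eta_1\le 1/(2b)$.
\end{itemize}
This repaired route is valid but longer than the paper's. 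It also pays a $T$-dependent tail factor in the noise term that Corollary~\ref{cor:delta} avoids.
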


Finally, combining Theorems~\ref{thm:main1} and \ref{thm:main2} bounds the
step-size-weighted average squared norm of the population risk gradient along
the SGD trajectory.

\begin{theorem}
  \label{thm:main3}
  For any $\delta\in(0,1)$, with probability $1-\delta$ the
  step-size-weighted average squared gradient norm of the population risk is
  bounded as
  \begin{multline}
    \frac{\sum_{t\in[T]} \eta_t\|\nabla \cL(\bm{w}_t)\|^2}{\sum_{t\in[T]} \eta_t} \\
    = \mathcal{O}\!\left(\exp\!\left(2\log^{1/\beta}(T/\delta)\right)\!
      \left(\log(e/\delta)+\sum_{t\in[T]}\eta_t^2\right)\!
      \left(\frac{1}{\sum_{t\in[T]}\eta_t}
    + \frac{1+\sum_{t\in[T]}\eta_t}{n}\!\left(d+\log(1/\delta)\right)\right)\right).
    \label{eq:main3}
  \end{multline}
\end{theorem}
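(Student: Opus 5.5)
Theorem~\ref{thm:main3} follows by combining Theorems~\ref{thm:main1} and~\ref{thm:main2} through a union bound and the elementary inequality $\|a\|^2\le 2\|a-b\|^2+2\|b\|^2$. I will apply each of those theorems with confidence parameter $\delta/2$, so that both conclusions hold simultaneously with probability at least $1-\delta$. On this event, for every $t\in[T]$,
\[
  \|\nabla \cL(\bm{w}_t)\|^2 \le 2\|\nabla \cL_S(\bm{w}_t)\|^2 + 2\max_{s\in[T+1]}\|\nabla \cL(\bm{w}_s)-\nabla \cL_S(\bm{w}_s)\|^2 .
\]
Multiplying by $\eta_t$, summing over $t\in[T]$ and dividing by $\sum_{t\in[T]}\eta_t$ bounds the left-hand side of \eqref{eq:main3} by two terms. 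The first is $2\sum_t\eta_t\|\nabla\cL_S(\bm{w}_t)\|^2/\sum_t\eta_t$. The second is $2\max_s\|\nabla\cL(\bm{w}_s)-\nabla\cL_S(\bm{w}_s)\|^2$, since the weighted average of a quantity dominated uniformly by the maximum is at most that maximum.

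The first term is controlled by Theorem~\ref{thm:main1} at level $\delta/2$. This gives the factor $\exp(2\log^{1/\beta}(2T/\delta))(\log(2e/\delta)+\sum_t\eta_t^2)$ times $1/\sum_t\eta_t$. The second term is controlled directly by \eqref{eq:gen-unif} at level $\delta/2$. This gives the same common factor times $(1+\sum_t\eta_t)(d+\log(2/\delta))/n$. Factoring out the common term yields exactly the bracket in \eqref{eq:main3}.

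The only remaining step is to absorb the replacement of $\delta$ by $\delta/2$ into the $\mathcal{O}(\cdot)$ constants. For $\log(2e/\delta)\le 2\log(e/\delta)$ and $d+\log(2/\delta)\le 2(d+\log(1/\delta))$ this is immediate, using $d\ge1$ and $\log 2\le 1$. The tail factor is the only delicate point. Since $T\ge1$ and $\delta<1$, we have $\log(T/\delta)>0$. By subadditivity of $x\mapsto x^{1/\beta}$ (as $1/\beta<1$),
\[
  \log^{1/\beta}(2T/\delta)\le \log^{1/\beta}(T/\delta)+(\log 2)^{1/\beta}.
\]
Hence $\exp(2\log^{1/\beta}(2T/\delta))\le e^{2(\log 2)^{1/\beta}}\exp(2\log^{1/\beta}(T/\delta))$, and the constant $e^{2(\log 2)^{1/\beta}}$ depends only on $\beta$. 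All constants therefore remain independent of $S,n,d,T,\delta$, which completes the argument.

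I expect no step to be a real obstacle, since all probabilistic work lives in Theorems~\ref{thm:main1} and~\ref{thm:main2}. The one point needing care is this absorption of the factor $2$ inside the $\log^{1/\beta}$ term, which subadditivity settles.
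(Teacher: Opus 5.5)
Your proposal is correct and follows the paper's proof essentially step for step. It uses the same split $\|\nabla\cL(\bm{w}_t)\|^2\le 2\|\nabla\cL(\bm{w}_t)-\nabla\cL_S(\bm{w}_t)\|^2+2\|\nabla\cL_S(\bm{w}_t)\|^2$, the trajectory maximum of Theorem~\ref{thm:main2} for the generalization term, Theorem~\ref{thm:main1} for the empirical term, a union bound, and division by $\sum_{t\in[T]}\eta_t$; your subadditivity argument for absorbing the $\delta/2$ rescaling is exactly the content of Lemma~\ref{lm:rescale}, which the paper invokes for the same purpose.
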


The two terms in the last factor of \eqref{eq:main3} correspond to the two sources
of error. The first, $1/\sum_{t\in[T]} \eta_t$, is an optimization term that arises
because SGD is run for only a finite number of steps, and it decreases as the
cumulative step size $\sum_{t\in[T]}\eta_t$ grows. The second,
$(1+\sum_{t\in[T]}\eta_t)(d+\log(1/\delta))/n$, is a generalization term inherited
from Theorem~\ref{thm:main2}. A longer trajectory travels farther from the
initialization, enlarging the radius of the ball over which the uniform
bound of Lemma~\ref{lm:gradgen} must hold. This term therefore grows with
$\sum_{t\in[T]}\eta_t$. The two terms balance when the cumulative step size satisfies
$\sum_{t\in[T]} \eta_t \asymp \sqrt{n/(d+\log(1/\delta))}$, and this choice can be
made without fixing a particular schedule. The following corollary bounds the
step-size-weighted average squared norm of the population risk gradient under
this choice of cumulative step size.

\begin{corollary}
  \label{cor:horizon}
  Under the hypotheses of Theorem~\ref{thm:main3}, assume in addition that
  $n \ge d+\log(1/\delta)$ and that the horizon $T$ is chosen so that
  \begin{equation}
    \sum_{t\in[T]}\eta_t \asymp \sqrt{\frac{n}{d+\log(1/\delta)}}.
    \label{eq:budget}
  \end{equation}
  Then, with probability $1-\delta$,
  \[
    \frac{\sum_{t\in[T]} \eta_t\|\nabla \cL(\bm{w}_t)\|^2}{\sum_{t\in[T]} \eta_t}
    = \mathcal{O}\!\left(\exp\!\left(2\log^{1/\beta}(T/\delta)\right)
      \left(\log(e/\delta)+\sum_{t\in[T]}\eta_t^2\right)
    \sqrt{\frac{d+\log(1/\delta)}{n}}\right).
  \]
\end{corollary}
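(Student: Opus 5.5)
Corollary~\ref{cor:horizon} follows directly from Theorem~\ref{thm:main3}. The plan is to substitute the budget condition~\eqref{eq:budget} into the last factor of~\eqref{eq:main3}; the tail factor and the factor $\log(e/\delta)+\sum_{t\in[T]}\eta_t^2$ are left untouched. Write $A:=\sum_{t\in[T]}\eta_t$ and $D:=d+\log(1/\delta)$. By~\eqref{eq:budget} there are constants $0<c\le C$, independent of $n,d,T,\delta$, with
\[
c\sqrt{n/D}\le A\le C\sqrt{n/D}.
\]
On the event of probability $1-\delta$ from Theorem~\ref{thm:main3}, it then suffices to show
\[
\frac{1}{A}+\frac{(1+A)D}{n}=\mathcal{O}\!\left(\sqrt{D/n}\right).
\]

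The two terms are handled separately.

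\emph{Optimization term.} The lower bound on $A$ gives $1/A\le c^{-1}\sqrt{D/n}$.

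\emph{Generalization term.} Split it as $D/n+AD/n$.
\begin{itemize}
\item The upper bound on $A$ gives $AD/n\le C\sqrt{n/D}\,D/n=C\sqrt{D/n}$.
\item The extra hypothesis $n\ge D$ gives $D/n\le1$, hence $D/n\le\sqrt{D/n}$.
\end{itemize}
Summing, the last factor of~\eqref{eq:main3} is at most $(c^{-1}+C+1)\sqrt{D/n}$.

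Multiplying by the unchanged prefactor $\exp(2\log^{1/\beta}(T/\delta))\,(\log(e/\delta)+\sum_{t\in[T]}\eta_t^2)$ yields the claimed bound. The implicit constant is that of Theorem~\ref{thm:main3} times $c^{-1}+C+1$, so it is still independent of $S,n,d,T,\delta$. No new probabilistic argument is needed, because the event is exactly the one furnished by Theorem~\ref{thm:main3}.

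There is no real obstacle here. The only point needing care is the additive constant $1$ in the factor $1+A$. It would dominate $\sqrt{D/n}$ unless $D/n\le\sqrt{D/n}$, and this is exactly why the hypothesis $n\ge D$ is imposed. It is also worth noting, though not needed for the proof, that the balance condition forces $A\gtrsim 1$, so the horizon $T$ is well defined for any reasonable schedule.
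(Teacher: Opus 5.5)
Your proposal is correct and matches the paper's proof: both substitute the two-sided budget bound into the last factor of \eqref{eq:main3} and use $n\ge d+\log(1/\delta)$ to control the additive $1$ in $1+\sum_{t\in[T]}\eta_t$. The only cosmetic difference is how that hypothesis is used. The paper derives $\Sigma_1\ge c_1$ and hence $1+\Sigma_1\le(1+1/c_1)\Sigma_1$, whereas you bound the extra $D/n$ term directly by $\sqrt{D/n}$; both yield a constant multiple of $\sqrt{D/n}$.
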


Condition \eqref{eq:budget} constrains the horizon only through the cumulative
step size, so solving it for $T$ once a schedule is fixed yields a deterministic
iteration number as a function of $n$, $d$, and $\delta$. The balancing cumulative step size is finite
for a finite sample, so the corollary motivates early stopping within this bound.
After imposing \eqref{eq:budget}, the remaining dependence on $T$ is through
$\sum_{t\in[T]}\eta_t^2$ and the tail factor
$\exp(2\log^{1/\beta}(T/\delta))$. For fixed $\beta>1$, this factor grows
more slowly than any positive power of $T/\delta$, yet faster than any
power of $\log(T/\delta)$.

\subsection{Learning-Rate Decay}
\label{sec:instantiation}

Our main theorems depend on the step sizes only through $\sum_{t\in[T]} \eta_t$
and $\sum_{t\in[T]} \eta_t^2$, leaving the decay schedule unspecified. We
instantiate them with three representative schedules, namely the classical $1/t$
decay of stochastic approximation \citep{robbins1951stochastic}, the
$1/\sqrt t$ decay assumed by \citet{li2022high}, and cosine decay, the
single-cycle case of the cosine annealing schedule of
\citet{loshchilov2017sgdr}, which is common in modern deep learning, where the
number of training steps is fixed in advance. Each is nonincreasing with
$\eta_t \le \eta_0 \le 1/(2b)$, so all three meet the step-size condition of
Section~\ref{sec:main-results}. Table~\ref{tab:schedules} lists their
step-size sums, stated as Propositions~\ref{lem:decay-1-t},
\ref{lem:decay-1-sqrt-t} and \ref{lem:decay-cosine} in
Appendix~\ref{sec:proofs-instantiation}.

\begin{table}[!ht]
  \caption{Step-size sums for the three schedules.}
  \label{tab:schedules}
  \begin{center}
    \small
    \setlength{\tabcolsep}{5pt}
    \begin{tabular}{lccc}
      \toprule
      Schedule & $\eta_t$                                               & $\sum_{t\in[T]}\eta_t$            & $\sum_{t\in[T]}\eta_t^2$ \\
      \midrule
      $1/t$ (Prop.~\ref{lem:decay-1-t})
      & $\dfrac{\eta_0}{t}$                                    & $\eta_0(\log T+\mathcal{O}(1))$
      & $\mathcal{O}(\eta_0^2)$                                                                                               \\[6pt]
      $1/\sqrt t$ (Prop.~\ref{lem:decay-1-sqrt-t})
      & $\dfrac{\eta_0}{\sqrt t}$                              & $\eta_0(2\sqrt T+\mathcal{O}(1))$
      & $\eta_0^2(\log T+\mathcal{O}(1))$                                                                                     \\[6pt]
      Cosine (Prop.~\ref{lem:decay-cosine})
      & $\dfrac{\eta_0}{2}\left(1+\cos\dfrac{\pi t}{T}\right)$
      & $\dfrac{\eta_0}{2}(T-1)$                               & $\dfrac{\eta_0^2}{8}(3T-4)$                                  \\
      \bottomrule
    \end{tabular}
  \end{center}
\end{table}

The bound of Theorem~\ref{thm:main1} depends on the schedule only through
$\sum_{t\in[T]}\eta_t^2$, and the fourth column of Table~\ref{tab:schedules}
shows that the three schedules differ in how this sum, and hence the bound,
grows with $T$. Under $1/t$ decay the step-size sums contribute only the
constant factor $\log(e/\delta)+\eta_0^2$, so all dependence on $T$ comes
from the tail factor and the bound remains sub-polynomial in $T$ even as
$T\to\infty$, whereas under cosine decay the bound grows linearly in $T$. The
contrast stems from square-summability, since $1/t$ decay satisfies the classical
conditions $\sum_{t=1}^\infty \eta_t = \infty$, $\sum_{t=1}^\infty \eta_t^2 <
\infty$ of stochastic approximation, whereas cosine decay is designed for a
fixed horizon $T$ and its $\sum_{t\in[T]}\eta_t^2$ grows linearly in $T$. The
$1/\sqrt t$ decay lies between the two. It is not square-summable either, but
its $\sum_{t\in[T]}\eta_t^2$ grows only logarithmically.

The third column of Table~\ref{tab:schedules} determines the horizon at which
condition \eqref{eq:budget} is met. Because $\sum_{t\in[T]}\eta_t$ grows only
logarithmically under $1/t$ decay, \eqref{eq:budget} is met only at the
exponentially long horizon $T \asymp
\exp(\eta_0^{-1}\sqrt{n/(d+\log(1/\delta))})$. A slower decay removes this
restriction, because under $1/\sqrt t$ decay the cumulative step size diverges as
$\sqrt T$, so \eqref{eq:budget} is solved by the polynomial horizon $T \asymp
n/(\eta_0^2(d+\log(1/\delta)))$, and
Corollary~\ref{cor:horizon} then gives the rate $\sqrt{(d+\log(1/\delta))/n}$
up to the tail factor and the extra $\log T$.

\subsection{Global Bound under the PL Condition}
\label{sec:pl}
Theorems~\ref{thm:main1} and~\ref{thm:main3} bound gradient norms along the
trajectory. To obtain guarantees for risk values, we introduce the
Polyak--{\L}ojasiewicz (PL) condition and analyze the empirical-risk
optimization error at the last iterate, followed by the excess population
risk $\cL(\bm{w}_{T+1}) - \cL^\star$.

A differentiable function $F$ with $F^\star := \inf_{\bm{w}\in\cW}F(\bm{w}) >
-\infty$ satisfies the PL condition with $\mu>0$ if, for every
$\bm{w}\in\cW$,
\begin{equation}
  \|\nabla F(\bm{w})\|^2 \ge 2\mu\left(F(\bm{w}) - F^\star\right).
  \label{eq:pl}
\end{equation}
The condition stems from \citet{polyak1963gradient} and
\citet{lojasiewicz1963topologique}. \citet{karimi2016linear} describe its relations
to other conditions under which gradient methods converge linearly. It is
implied by $\mu$-strong convexity but does not imply convexity, and it permits a
continuum of minimizers. It excludes stationary points that are not
global minimizers. Local PL conditions have been established for wide neural
networks in a ball around initialization \citep{liu2022loss}.
Here we assume the PL condition globally on $\cW=\mathbb{R}^d$;
the local result alone does not establish this assumption.
\citet{charles2018stability} use the PL condition for stability and
generalization guarantees.

\begin{assumption}
  \label{asm:pl}
  There exist deterministic constants $\mu_0,\mu>0$, independent of $n$ and
  $d$, such that, for every sample size $n$ and every $S\in\cZ^n$, the empirical
  risk $\cL_S$ satisfies \eqref{eq:pl} on $\cW$ with constant $\mu_0$, and the
  population risk $\cL$ satisfies \eqref{eq:pl} on $\cW$ with constant $\mu$.
\end{assumption}

The next theorem bounds the optimization error in the empirical risk at the
last SGD iterate under the PL condition.

\begin{theorem}
  \label{thm:pl}
  Under the empirical-risk condition in Assumption~\ref{asm:pl}, fix
  $t_0 \ge \max\{8b/\mu_0,\,1\}$ and set
  $\eta_t := 4/(\mu_0(t+t_0))$ for $t=1,\dots,T$.
  Then, for any $\delta\in(0,1)$ and $T\ge2$, with probability $1-\delta$,
  \[
    \cL_S(\bm{w}_{T+1}) - \cL_S(\bm{w}(S))
    = \mathcal{O}\!\left(\frac{\exp\!\left(2\log^{1/\beta}(T/\delta)\right)
    \sqrt{\log T\,\log(e/\delta)}}{T}\right).
  \]
\end{theorem}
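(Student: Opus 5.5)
We combine the descent lemma with the PL condition to get a one-step contraction for $\Delta_t := \cL_S(\bm{w}_t)-\cL_S(\bm{w}(S))$. We then unroll it into a last-iterate bound and control the resulting weighted noise sums with the martingale tools.

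\emph{One-step recursion.} By Lemma~\ref{thm:smooth}, with $\bm{w}_{t+1}=\bm{w}_t-\eta_t(\nabla\cL_S(\bm{w}_t)+\epsilon_t)$ and $\eta_t\le 1/(2b)$ (guaranteed since $t_0\ge 8b/\mu_0$ gives $\eta_t\le \mu_0^{-1}\cdot 4/t_0\le 1/(2b)$),
\[
\Delta_{t+1}\le \Delta_t-\tfrac{\eta_t}{2}\|\nabla\cL_S(\bm{w}_t)\|^2-\eta_t(1-b\eta_t)\inner{\nabla\cL_S(\bm{w}_t)}{\epsilon_t}+b\eta_t^2\|\epsilon_t\|^2 .
\]
Applying PL with constant $\mu_0$ gives
\[
\Delta_{t+1}\le(1-\mu_0\eta_t)\Delta_t+\xi_t+b\eta_t^2\|\epsilon_t\|^2,
\]
where $\xi_t$ denotes the martingale-difference term. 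With $\eta_t=4/(\mu_0(t+t_0))$ we have $1-\mu_0\eta_t=(t+t_0-4)/(t+t_0)$. Hence the products $\prod_{s=t+1}^{T}(1-\mu_0\eta_s)$ are at most $\big((t+t_0)/(T+t_0)\big)^4$. Unrolling,
\[
\Delta_{T+1}\lesssim \frac{t_0^4\Delta_0}{T^4}+\sum_{t\le T}\Big(\tfrac{t+t_0}{T+t_0}\Big)^4\big(\xi_t+b\eta_t^2\|\epsilon_t\|^2\big).
\]
The first term is negligible.

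\emph{Noise-square term.} The weights satisfy $(t/T)^4\eta_t^2\asymp t^2/T^4$, whose sum is of order $1/T$. We use the concentration result for sums of squared $\beta$-heavy-tailed variables from the appendix (Lemma~\ref{lem:convexify}) or, crudely, a union bound via Proposition~\ref{thm:tail}: with probability $1-\delta/2$, $\max_t\|\epsilon_t\|\le K\exp(\log^{1/\beta}(2T/\delta))$. Squaring this gives the tail factor $\exp(2\log^{1/\beta}(T/\delta))$ times $\sum_t (t/T)^4\eta_t^2=\mathcal{O}(1/T)$.

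\emph{Martingale term (main obstacle).} We have $\xi_t=-\eta_t(1-b\eta_t)\inner{\nabla\cL_S(\bm{w}_t)}{\epsilon_t}$ with weights $a_t=((t+t_0)/(T+t_0))^4$. We apply the Freedman-type Proposition~\ref{thm:freedman}. It allows an $\cF_{t-1}$-measurable scale $a_t\eta_t\|\nabla\cL_S(\bm{w}_t)\|K$, whose conditional variance proxy is $\sum_t a_t^2\eta_t^2\|\nabla\cL_S(\bm{w}_t)\|^2K^2$. Two routes bound this proxy. The crude route uses Assumption~\ref{asm:cap}, $\eta_t\|\nabla\cL_S(\bm{w}_t)\|\le G$, giving the proxy $\le G^2K^2\sum_t a_t^2\lesssim T$ times $T^{-?}$. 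That is, $\sum_t (t/T)^8=\mathcal{O}(T)$, which is too large, so we must instead exploit $a_t\eta_t$. The sharper route uses smoothness and PL to get $\|\nabla\cL_S(\bm{w}_t)\|^2\le 2b\Delta_t$. The proxy then becomes $\sum_t a_t^2\eta_t^2\Delta_t$, which feeds back into the recursion. We plan an induction/stopping argument: on the event where $\Delta_t\le C\,\Phi/t$ for all $t$ up to a time (with $\Phi$ the tail factor), the proxy is $\lesssim \Phi\sum_t t^{8}T^{-8}t^{-3}\asymp \Phi\log T/T^{2}$ after normalization. The normalization and $\log T$ arise from the harmonic-type sum $\sum_t t^{-1}$ against the weights. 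Freedman then yields a deviation $\sqrt{\Phi\log T\log(e/\delta)}/T$ plus a max-scale term $\Phi\log(e/\delta)/T$. Solving the resulting quadratic inequality in $\sqrt{\Phi}$ recovers the claimed $\Phi\sqrt{\log T\log(e/\delta)}/T$ rate. Making this bootstrap rigorous is the hard part. We plan to handle it either by a stopped-martingale version of Proposition~\ref{thm:freedman} with a union bound over $T$ times (absorbed into $\log(T/\delta)$), or by bounding $\max_t t\Delta_t$ simultaneously for all $t$ via the same unrolled inequality.

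Combining the three pieces on an event of probability $1-\delta$ (splitting $\delta$ between the tail event and the Freedman event) gives the statement.
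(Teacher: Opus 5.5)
There is a genuine gap: the martingale term is never controlled. Your recursion, the unrolling, and the union-bound treatment of the squared-noise term are fine. One caveat: in front of the signed $\xi_t$ you must keep the exact deterministic weights $\prod_{s=t+1}^{k}(1-4/(s+t_0))$, not upper bounds on them. You correctly call the martingale term the crux, but the bootstrap is only announced, and the sketch is inaccurate in two ways.

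You spend all of $-\tfrac12\eta_t\|\nabla\cL_S(\bm{w}_t)\|^2$ on the PL contraction. That leaves nothing to absorb the trajectory-dependent variance proxy, which is why a bootstrap is forced on you. Under the hypothesis $\Delta_t\le C\Phi/t$, the proxy is of order $\sum_{t\le T}(t/T)^8t^{-2}\cdot\Phi/t\asymp\Phi/T^2$, with no $\log T$. So this is not where $\sqrt{\log T\,\log(e/\delta)}$ comes from. Also, $\Phi$ is a fixed deterministic level, so there is no quadratic inequality in $\sqrt\Phi$ to solve.

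Making your route rigorous would require three things, none of which is in the proposal:
(a) a stopping time $\varsigma:=\min\{t:\Delta_t>C\Phi/t\}$ and frozen increments $\xi_t\ind{\varsigma>t}$, because (A1) of Proposition~\ref{thm:freedman} needs deterministic $m_t$ (here $m_t\propto a_t\eta_t\sqrt{\Phi/t}$ via Lemma~\ref{lem:selfbound});
(b) a separate application of part (ii) for every endpoint $k\in[T]$ at level $\delta/T$, since your weights depend on $k$ and the maximal form over $k$ is therefore unavailable;
(c) a check that $k\Delta_{k+1}\lesssim 1+\Phi+\sqrt{C\Phi\log(T/\delta)}+\sqrt{C\Phi}\,k^{-1/2}\exp(\log^{1/\beta}(kT/\delta))$ closes for large $C$, using $\log(T/\delta)\lesssim_\beta\Phi$ and $k^{-1/2}\exp(\log^{1/\beta}k)=\mathcal{O}(1)$.

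The paper avoids the bootstrap with one idea you are missing: use only half of the descent term for PL, and keep the rest on the left.
\begin{itemize}
  \item It uses $\tfrac14\eta_t\|\nabla\cL_S(\bm{w}_t)\|^2$ for the contraction factor $1-2/(t+t_0)$ and keeps the other $\tfrac14\eta_t\|\nabla\cL_S(\bm{w}_t)\|^2$.
  \item Multiplying by $(t+t_0)(t+t_0-1)$ and telescoping leaves $\sum_t\frac{t+t_0-1}{\mu_0}\|\nabla\cL_S(\bm{w}_t)\|^2$ on the left of \eqref{eq:pl-master}. The martingale that remains has weights independent of the endpoint.
  \item Its variance proxy $\frac{16a_\beta K^2}{\mu_0^2}\sum_t(t+t_0-1)^2\|\nabla\cL_S(\bm{w}_t)\|^2$ is absorbed into that sum by the self-bounding part (i) of Proposition~\ref{thm:freedman}, with $\gamma=0$ and $\alpha\ge16a_\beta K^2(T+t_0-1)/\mu_0$. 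Only $2\alpha\log(1/\delta)$ is left over.
  \item The deterministic bound for (A1) comes from the trajectory radius $R$ of \eqref{eq:pl-radius}, freezing outside $\{\max_s\|\bm{w}_s\|\le R\}$, together with $\|\nabla\cL_S(\bm{w})\|\le b\|\bm{w}\|+b'$.
  \item $R$ enters only through $\bF\mbar$ in $\alpha$, and that is exactly the source of $\sqrt{\log T\,\log(e/\delta)}$.
\end{itemize}

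Completed along (a)--(c), your route would be a legitimate alternative. It would not need the radius $R$, and it appears to give even $\mathcal{O}(\Phi/T)$. As submitted, however, the central estimate is missing.
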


The schedule in Theorem~\ref{thm:pl} is nonincreasing and satisfies
$\eta_t \le 4/(\mu_0 t_0) \le 1/(2b)$, meeting the step-size condition of
Section~\ref{sec:main-results}. Its step-size sums have the same orders as those of
the $1/t$ schedule in Proposition~\ref{lem:decay-1-t}:
$\sum_{t\in[T]}\eta_t = \mathcal{O}(\log T)$ and
$\sum_{t\in[T]}\eta_t^2 = \mathcal{O}(1)$. Stating the results for a general
nonincreasing schedule appears technically possible, but would substantially
complicate the argument, so we do not pursue it here.

To pass from empirical to population risk, we also need to control the
discrepancy between their gradients at the last iterate. Applying
Theorem~\ref{thm:main2} with the step-size sums above gives, for any
$\delta \in (0,1)$ and any $T \ge 2$, with probability $1-\delta$,
\begin{equation}
  \max_{t\in[T+1]}\|\nabla \cL(\bm{w}_t) - \nabla \cL_S(\bm{w}_t)\|^2
  = \mathcal{O}\!\left(\frac{\exp\!\left(2\log^{1/\beta}(T/\delta)\right)
  \left(d+\log(1/\delta)\right)\log(e/\delta)\,\log T}{n}\right),
  \label{eq:pl-gradgen}
\end{equation}
where $\sum_{t\in[T]}\eta_t^2$ is absorbed into $\log(e/\delta) \ge 1$.

This bound follows from the step-size schedule and does not require the PL
condition. Combining it with Theorem~\ref{thm:pl} and the population PL
condition in Assumption~\ref{asm:pl} bounds the excess population risk at the
last SGD iterate.

\begin{theorem}
  \label{thm:pl-excess}
  Under Assumption~\ref{asm:pl}, with
  the step-size schedule in Theorem~\ref{thm:pl} and a horizon $T \asymp n$
  with $T\ge2$, for any $\delta\in(0,1)$ and $n\ge2$, with probability $1-\delta$,
  \[
    \cL(\bm{w}_{T+1}) - \cL^\star
    = \mathcal{O}\!\left(\frac{\exp\!\left(2\log^{1/\beta}(n/\delta)\right)
    \left(d+\log(1/\delta)\right)\log(e/\delta)\,\log n}{n}\right).
  \]
\end{theorem}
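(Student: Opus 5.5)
We reduce the excess population risk to the population gradient at the last iterate, and then split that gradient into an empirical-optimization part and a generalization part. By the population PL condition in Assumption~\ref{asm:pl},
\[
  \cL(\bm{w}_{T+1})-\cL^\star \le \frac{1}{2\mu}\|\nabla\cL(\bm{w}_{T+1})\|^2
  \le \frac{1}{\mu}\Bigl(\|\nabla\cL_S(\bm{w}_{T+1})\|^2+\|\nabla\cL(\bm{w}_{T+1})-\nabla\cL_S(\bm{w}_{T+1})\|^2\Bigr).
\]
We work on the intersection of two events, each of probability at least $1-\delta/2$: the event of Theorem~\ref{thm:pl}, and the event of the uniform gradient bound \eqref{eq:pl-gradgen}, which is Theorem~\ref{thm:main2} for this schedule. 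Replacing $\delta$ by $\delta/2$ changes $\log(e/\delta)$, $\log(1/\delta)$ and $\exp(2\log^{1/\beta}(T/\delta))$ only by constant factors. For the last one, $\log^{1/\beta}(2x)\le\log^{1/\beta}x+\log^{1/\beta}2$ by subadditivity of $u\mapsto u^{1/\beta}$, since $\beta>1$. The union bound then gives probability $1-\delta$.

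\emph{Empirical gradient term.} We use $b$-smoothness of $\cL_S$, which follows from Assumption~\ref{asm:smooth}. For a $b$-smooth function bounded below, $\|\nabla\cL_S(\bm{w})\|^2\le 2b(\cL_S(\bm{w})-\cL_S(\bm{w}(S)))$; this follows from the descent lemma (Lemma~\ref{thm:smooth}) applied at $\bm{w}-\nabla\cL_S(\bm{w})/b$. Theorem~\ref{thm:pl} therefore gives
\[
  \|\nabla\cL_S(\bm{w}_{T+1})\|^2=\mathcal{O}\!\left(T^{-1}\exp\!\left(2\log^{1/\beta}(T/\delta)\right)\sqrt{\log T\,\log(e/\delta)}\right).
\]

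\emph{Generalization term.} The $t=T+1$ case of \eqref{eq:pl-gradgen} bounds it by
\[
  \mathcal{O}\!\left(n^{-1}\exp\!\left(2\log^{1/\beta}(T/\delta)\right)(d+\log(1/\delta))\log(e/\delta)\log T\right).
\]

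\emph{Combining with $T\asymp n$.} Since $c n\le T\le Cn$, we have $\log T\le \log n+\log C=\mathcal{O}(\log n)$, using $n\ge2$. The same subadditivity argument gives $\exp(2\log^{1/\beta}(T/\delta))=\mathcal{O}(\exp(2\log^{1/\beta}(n/\delta)))$. For the optimization term, $1/T=\mathcal{O}(1/n)$, and $\sqrt{\log T\log(e/\delta)}\le \log T\,\log(e/\delta)$ because both factors are at least a positive constant when $T\ge2$. Also $d+\log(1/\delta)\ge1$, so this term is dominated by the generalization term, and the sum has the claimed order.

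The main obstacle is the bookkeeping with the tail factor when we rescale $\delta\to\delta/2$ and replace $T$ by $n$. This has to be checked carefully so that the constants stay independent of $n,d,T,\delta$, and it relies precisely on subadditivity of $u\mapsto u^{1/\beta}$. Everything else is a direct combination of the earlier results.
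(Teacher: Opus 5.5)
Your proposal is correct and follows essentially the same route as the paper's proof. Both arguments apply population PL at $\bm{w}_{T+1}$, split with $\|\bm{u}+\bm{v}\|^2\le2\|\bm{u}\|^2+2\|\bm{v}\|^2$, bound the empirical gradient by Lemma~\ref{lem:selfbound} combined with Theorem~\ref{thm:pl}, and bound the generalization gap by \eqref{eq:pl-gradgen}; then $T\asymp n$ and subadditivity of $u\mapsto u^{1/\beta}$ (Lemma~\ref{lm:rescale}) convert $T$ into $n$ and absorb the rescaling of $\delta$. One small precision: $\log(1/\delta)$ by itself does not change by only a constant factor under $\delta\to\delta/2$, since it can be arbitrarily close to $0$; however $d+\log(2/\delta)\le(1+\log2)\bigl(d+\log(1/\delta)\bigr)$ because $d\ge1$, so your argument goes through.
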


Suppressing fixed problem parameters and the common tail factor, the two
terms entering Theorem~\ref{thm:pl-excess} are the optimization error of
Theorem~\ref{thm:pl}, of order $\sqrt{\log T}/T$, and the gradient
generalization error \eqref{eq:pl-gradgen}, of order $\log T/n$. At
$T \asymp n$, the generalization term dominates, and its growth with $T$
limits the improvement that this bound can certify from further iterations.

\subsection{SGD with Gradient Clipping}
\label{sec:clipping}
We next extend the analysis to SGD with gradient clipping under the same
noise model.
The preceding analysis of SGD uses Assumption~\ref{asm:cap}
and the step-size restriction $\eta_t\le 1/(2b)$. By choosing a clipping level
based on the noise-tail bound, we obtain a high-probability guarantee without
these two conditions. The resulting bound controls a quantity that is
quadratic in the empirical gradient norm below the clipping level and linear
above it, while retaining the same tail factor as the bound for SGD.
Our chosen clipping level depends on $K$, $\beta$, $T$, and $\delta$, making
explicit how the noise-tail parameters affect this guarantee.

\begin{algorithm}[h]
  \caption{SGD with Gradient Clipping}
  \label{alg:clip}
  \begin{algorithmic}[1]
    \STATE {\bfseries Input:} initial point $\bm{w}_1 = \bm{0}$, step-size sequence
    $\{\eta_t\}_t$, dataset $S = \{z_1,\dots,z_n\}$, clipping level $\tau>0$.
    \FOR{$t = 1,\dots,T$}
    \STATE Draw $j_t$ uniformly from $[n]$.
    \STATE $\tilde{\bm{g}}_t \leftarrow \nabla \ell(\bm{w}_t; z_{j_t})
    \min\left\{1,\ \tau/\|\nabla \ell(\bm{w}_t; z_{j_t})\|\right\}$.
    \STATE $\bm{w}_{t+1} \leftarrow \bm{w}_t - \eta_t \tilde{\bm{g}}_t$.
    \ENDFOR
    \STATE {\bfseries Output:} iterates $\bm{w}_1,\dots,\bm{w}_{T+1}$.
  \end{algorithmic}
\end{algorithm}

To control large noise realizations over all $T$ steps, we choose a clipping
level based on the tail bound of Proposition~\ref{thm:tail}, using a per-step
exceedance probability of $\delta/(4T)$. Specifically, we set
\begin{equation}
  \tau := 8K\exp\!\left(\log^{1/\beta}(8T/\delta)\right).
  \label{eq:clip-level}
\end{equation}
The clipping level $\tau$ depends on the Orlicz norm bound $K$, the tail index
$\beta$, the horizon $T$, and the confidence parameter $\delta$.

The following theorem bounds a step-size-weighted sum that is quadratic in the
empirical gradient norm below the clipping level and linear above it.

\begin{theorem}
  \label{thm:clip}
  Let $\beta>1$, $T\ge4$ and $\delta\in(0,1)$, let $\{\eta_t\}_{t\in[T]}$ be
  positive and nonincreasing, and suppose Assumptions~\ref{asm:gradnoise} and
  \ref{asm:smooth} hold. Then Algorithm~\ref{alg:clip}, with the
  clipping level $\tau$ of \eqref{eq:clip-level}, satisfies the following bound
  with probability $1-\delta$:
  \[
    \sum_{t\in[T]} \eta_t \min\left\{\tau\|\nabla \cL_S(\bm{w}_t)\|,\
    \|\nabla \cL_S(\bm{w}_t)\|^2\right\}
    = \mathcal{O}\!\left(\exp\!\left(2\log^{1/\beta}(T/\delta)\right)
    \left(\log(e/\delta)+\sum_{t\in[T]}\eta_t^2\right)\right).
  \]
\end{theorem}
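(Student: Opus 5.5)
We plan to run the descent-lemma analysis on the clipped update, with the smoothness-based step-size restriction replaced by the bound $\|\tilde{\bm g}_t\|\le\tau$. Write $g_t:=\nabla\cL_S(\bm w_t)$, $\hat g_t:=\nabla\ell(\bm w_t;z_{j_t})=g_t+\epsilon_t$, and $\tilde{\bm g}_t=\mathrm{clip}_\tau(\hat g_t)$. By Lemma~\ref{thm:smooth},
\[
  \cL_S(\bm w_{t+1})\le \cL_S(\bm w_t)-\eta_t\inner{g_t}{\tilde{\bm g}_t}+\tfrac{b}{2}\eta_t^2\tau^2 .
\]
Summing over $t\in[T]$ and using $\cL_S(\bm 0)-\cL_S(\bm w(S))\le\Delta_0$ gives
\[
  \sum_t\eta_t\inner{g_t}{\tilde{\bm g}_t}\le\Delta_0+\tfrac b2\tau^2\sum_t\eta_t^2 .
\]
By Proposition~\ref{thm:tail}, $\tau^2=\mathcal O(\exp(2\log^{1/\beta}(T/\delta)))$, up to adjusting $8T/\delta$ versus $T/\delta$ inside the $\log^{1/\beta}$, which costs only a constant factor since $\log^{1/\beta}$ is subadditive-like. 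So this last term already has the target form.

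Next we lower bound $\inner{g_t}{\tilde{\bm g}_t}$. The good event $E$ is defined by $\|\epsilon_t\|\le\tau/8$ for all $t$. By Assumption~\ref{asm:gradnoise}, Proposition~\ref{thm:tail} applied conditionally, and a union bound, $\PP(E^c)\le\delta/4$; this is what the choice \eqref{eq:clip-level} is for. On $E$ we split into two cases.
\begin{itemize}
  \item If $\|g_t\|\le\tau/2$, then $\|\hat g_t\|\le 5\tau/8<\tau$, so no clipping occurs and $\inner{g_t}{\tilde{\bm g}_t}=\|g_t\|^2+\inner{g_t}{\epsilon_t}$.
  \item If $\|g_t\|>\tau/2$, then $\tilde{\bm g}_t=\tau\hat g_t/\|\hat g_t\|$ (or is unclipped). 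Since $\|\epsilon_t\|\le\|g_t\|/4$, we have $\inner{g_t}{\hat g_t}\ge\tfrac34\|g_t\|^2$ and $\|\hat g_t\|\le\tfrac54\|g_t\|$. Hence $\inner{g_t}{\tilde{\bm g}_t}\ge\tfrac35\tau\|g_t\|$.
\end{itemize}
In both cases $\inner{g_t}{\tilde{\bm g}_t}\ge c\min\{\tau\|g_t\|,\|g_t\|^2\}+(\text{noise term})$.

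The noise term cannot be written simply as $\inner{g_t}{\epsilon_t}$ in general, because clipping introduces bias. We handle this by working with $\xi_t:=\inner{g_t}{\tilde{\bm g}_t-\EE[\tilde{\bm g}_t\mid\cF_{t-1}]}$, which is a bounded martingale difference with $|\xi_t|\le2\tau\|g_t\|$. We then bound the bias $\|\EE[\tilde{\bm g}_t\mid\cF_{t-1}]-g_t\|$ when $\|g_t\|\le\tau/2$. Clipping only acts when $\|\epsilon_t\|>\tau/2$, so the bias is at most $\EE[\|\epsilon_t\|\ind{\|\epsilon_t\|>\tau/2}\mid\cF_{t-1}]$. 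By the Orlicz tail this is tiny, of order $K\delta/T$, and hence gives at most $\mathcal O(\|g_t\|\,\delta/T)$. This error is absorbed by $\tfrac12\|g_t\|^2$ plus an $\mathcal O(1)$ total.

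The main step is to control $\sum_t\eta_t\xi_t$ through Proposition~\ref{thm:freedman}, using the conditional variance proxy $\sigma_t^2\lesssim\eta_t^2\|g_t\|^2\min\{\tau^2,\,K^2+\|g_t\|^2\}$ and the scale $\eta_t\tau\|g_t\|$. The plan is to get, with probability $1-\delta/4$,
\[
  \Big|\sum_t\eta_t\xi_t\Big|\le\lambda\sum_t\eta_t\min\{\tau\|g_t\|,\|g_t\|^2\}\cdot(\ldots)+\frac{C\,\tau^2\log(e/\delta)}{\lambda}.
\]
This uses $\eta_t^2\|g_t\|^2\tau^2\le\eta_t\tau^2\cdot\eta_t\min\{\ldots\}$ after bounding $\eta_t\tau$ on the relevant scale; the variance is quadratic in $\|g_t\|$ below the threshold and linear times $\tau$ above it.

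We expect this to be the main obstacle, because no step-size cap is assumed. If $\eta_t$ is large, the ratio $\eta_t^2\sigma_t^2/(\eta_t\min\{\cdot\})$ is not bounded by a constant, and the self-normalizing absorption fails. The first thing to try is a Freedman bound with data-dependent variance, self-normalized by the sum itself; the paper's Proposition~\ref{thm:freedman} allows trajectory-dependent proxies, and peeling over the value of the sum costs only a $\log$ factor. We would set the per-step proxy $\eta_t\|g_t\|\tau\cdot\eta_t\tau$ and absorb one factor into $\sum\eta_t^2\tau^2$ via AM-GM, i.e. $\eta_t\tau\|g_t\|\cdot\eta_t\tau\le\tfrac12(\ldots)$. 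Combining everything on $E$ intersected with the Freedman event (total probability $1-\delta$) and rearranging yields the claimed bound with factor $\tau^2(\log(e/\delta)+\sum\eta_t^2)$.
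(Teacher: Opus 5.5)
\textbf{There is a gap in the fluctuation step.} Your skeleton matches the paper's: the descent lemma with $\|\tilde{\bm g}_t\|\le\tau$, the event $\{\|\epsilon_t\|\le\tau/8\ \forall t\}$ of probability at least $1-\delta/4$, the split at $\|g_t\|=\tau/2$ (your $g_t=\nabla\cL_S(\bm w_t)$), and the pointwise bound above the threshold. A small correction there: if no clipping occurs you only get $\tfrac34\|g_t\|^2$, not $\tfrac35\tau\|g_t\|$. The weaker bound $\inner{g_t}{\tilde{\bm g}_t}\ge\tfrac25\Gamma_t$ with $\Gamma_t:=\min\{\tau\|g_t\|,\|g_t\|^2\}$ still holds, and it is all you need.

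The fluctuation step is where it breaks, and you flag it but do not close it. You take $\xi_t=\inner{g_t}{\tilde{\bm g}_t-\EE[\tilde{\bm g}_t\mid\cF_{t-1}]}$ at \emph{every} step, with scale $\eta_t\tau\|g_t\|$. To absorb $V_T/\alpha$ from Proposition~\ref{thm:freedman}(i) into $\sum_t\eta_t\Gamma_t$, you need $\alpha$ to dominate the per-step ratio $\eta_t^2\tau^2\|g_t\|^2/(\eta_t\Gamma_t)$. That ratio equals $\eta_t\tau\|g_t\|$ whenever $\|g_t\|\ge\tau$, so it is unbounded. Without Assumption~\ref{asm:cap}, the only a priori control is $\|\bm w_t\|\le\tau\sum_s\eta_s$. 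This makes $\alpha$, and the deterministic $\mbar$ required by (A1), of order $\tau^2\sum_t\eta_t$, which is not of the claimed form. Peeling over the value of $\sum_t\eta_t\Gamma_t$ cannot repair an unbounded per-step ratio. Your AM--GM proxy $\eta_t\tau\|g_t\|\cdot\eta_t\tau$ is also not an upper bound for the squared scale: it is missing a factor $\|g_t\|$. The obstruction is large $\|g_t\|$, not large $\eta_t$.

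\textbf{How to close it.} The missing idea is to switch the martingale off on the large-gradient steps. On the good event, your pointwise bound already handles them with no noise term. So multiply the increment by the predictable indicator $\ind{A_t}$, where $A_t:=\{\|g_t\|\le\tau/2\}$.

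You must also sharpen the scale. Even restricted to $A_t$, the scale $\eta_t\tau\|g_t\|$ gives $\mbar\asymp\eta_1\tau^2$. The term $\bF\mbar\log(1/\delta)$ of Proposition~\ref{thm:freedman} is then of order $\exp(3\log^{1/\beta}(T/\delta))\log(1/\delta)/\log(T/\delta)$. For square-summable schedules this exceeds the claimed bound by a factor tending to infinity. (This crude scale would be harmless only with a classical bounded-increment Freedman inequality, which has no $\bF$.)

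On $A_t$, the increment satisfies $\eta_t|\xi_t|\le\eta_t\|g_t\|(2\|\epsilon_t\|+\bar B)$, where $\bar B\lesssim K\sqrt{\delta/T}$ bounds the bias. Its conditional Orlicz scale is therefore $c_\beta\eta_t\|g_t\|K\ind{A_t}\le\tfrac12c_\beta\eta_1\tau K$, with $c_\beta$ depending only on $\beta$ and only a single power of $\tau$. Then $V_T\le c_\beta^2a_\beta K^2\eta_1\sum_t\eta_t\Gamma_t$. Taking $\alpha=\max\{\bF\mbar,10c_\beta^2a_\beta K^2\eta_1\}$ yields the factor $\exp(2\log^{1/\beta}(T/\delta))$ as claimed. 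This is the paper's proof.

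Your own observation that no clipping occurs on $A_t$ within the good event even makes the bias analysis unnecessary. The quantity $-\eta_t\inner{\epsilon_t}{g_t}\ind{A_t}$ is already a martingale difference with conditional scale $\eta_t\|g_t\|K\ind{A_t}$. On the good event it is exactly the noise term of each small-gradient step.
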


At a step with $\|\nabla \cL_S(\bm{w}_t)\|\le\tau$ the minimum is
the squared gradient norm and the summand matches that in
Theorem~\ref{thm:main1}. The other branch weakens the statement only where the
empirical gradient exceeds the clipping level. An analogous two-regime
stationarity measure, without the factor $\tau$ in the linear branch, is
used in \citet[Theorem 3.19]{li2022high}. Clipping bounds the contribution of the gradient noise by the level $\tau$
instead of by Assumption~\ref{asm:cap}, and the two powers of $\tau$ that
enter reproduce the tail factor of Theorem~\ref{thm:main1}, so both that
assumption and the step-size restriction $\eta_t\le1/(2b)$ are removed without
increasing that factor.

\section{Conclusion}
\label{sec:conclusion}

We establish high-probability optimization and generalization guarantees
for SGD under the $\beta$-heavy-tailed condition, allowing tails beyond
sub-Weibull while retaining all polynomial moments. Under smoothness and
trajectory assumptions, we obtain empirical and population stationarity
guarantees; with PL conditions, we bound last-iterate empirical optimization
error and excess population risk. For gradient-clipped SGD, a two-regime
stationarity bound removes the condition on step-size-scaled empirical
gradients and the smoothness-dependent step-size restriction.

Our noise assumption requires a uniform deterministic bound on the
conditional Orlicz scale.
Estimating or bounding this scale along SGD trajectories and determining
when uniformity is justified are concrete next steps. Tracking its dependence
on sample size and training time would help assess the resulting guarantees.
It would also be interesting to extend the analysis to scales that vary
along the trajectory without requiring a uniform deterministic bound.

The fixed-dataset setting leaves open an extension to online SGD driven by
a population stochastic oracle. Our restriction to standard and clipped
SGD also motivates extending the analysis to a broader range of optimization
methods, including SGD with momentum, Adam \citep{kingma2015adam}, and the
more recent Muon \citep{jordan2024muon}.

More broadly, understanding how architecture, data distributions, learning
dynamics, and accumulated numerical errors shape gradient noise would
clarify when the $\beta$-heavy-tailed model applies and which other tail
regimes arise. The update rule itself may alter subsequent noise
distributions through its effect on the trajectory. This understanding
could guide both the design of training
methods suited to the noise they encounter and a general Orlicz-space
framework for optimization and generalization across these regimes.

\section*{Acknowledgements}

The first author is supported by JSPS KAKENHI Grant Number JP26KJ1187.
The second author is supported by JSPS KAKENHI Grant Numbers JP25K24910,
JP25H01452, and JP24K21316.
The third author is supported by Grant-in-Aid for Research
Activity Start-up 25K23330.

\bibliography{cite_arxiv}
\bibliographystyle{plainnat}

\clearpage
\appendix
\section*{Appendix}

In the appendices we collect the technical lemmas and concentration
inequalities used throughout the paper
(Appendices~\ref{sec:general-lemmas}--\ref{sec:beta-heavy-ineq}), together with the
proofs of all theorems, lemmas, propositions, and corollaries stated in the
main text (Appendices~\ref{sec:proofs-main}--\ref{sec:proofs-clip}).

\section{General Lemmas}
\label{sec:general-lemmas}

For ease of reference, we collect here the standard lemmas used in the
proofs throughout this appendix.

\begin{lemma}[Layer-cake representation]
  \label{thm:layercake}
  Let $\Psi$ be a Young function (Definition~\ref{def:young}), and let $\Psi'$ denote its (a.e.\ defined,
  nonnegative, nondecreasing) derivative, so that
  $\Psi(x) = \int_0^x \Psi'(w)\, dw$ for all $x\ge0$. Then for any random
  variable $W$,
  \begin{equation}
    \EE[\Psi(|W|)] = \int_0^\infty \Psi'(w)\, \PP(|W|>w)\, dw.
    \label{eq:layercake}
  \end{equation}
\end{lemma}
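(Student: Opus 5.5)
The layer-cake identity \eqref{eq:layercake} follows from writing $\Psi(|W|)$ as an integral of $\Psi'$ and exchanging the order of integration by Tonelli's theorem.

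First, fix an outcome $\omega$ and write
\[
  \Psi(|W(\omega)|) = \int_0^{|W(\omega)|}\Psi'(w)\,dw = \int_0^\infty \Psi'(w)\,\ind{\{|W(\omega)|>w\}}\,dw .
\]
This uses the representation $\Psi(x)=\int_0^x\Psi'(w)\,dw$ assumed in the statement. That representation holds because a Young function is convex and finite on $[0,\infty)$ with $\Psi(0)=0$, hence locally absolutely continuous, with a right derivative that is nondecreasing and nonnegative. Using a strict versus a non-strict indicator changes the integrand only at the single point $w=|W(\omega)|$, a Lebesgue-null set, so either choice gives the same integral.

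Second, take expectations of both sides. The integrand $(\omega,w)\mapsto \Psi'(w)\ind{\{|W(\omega)|>w\}}$ is nonnegative. It is also jointly measurable: $\Psi'$ can be taken to be the right derivative, which is monotone and hence Borel, and the set $\{(\omega,w): |W(\omega)|>w\}$ is product-measurable because $(\omega,w)\mapsto|W(\omega)|-w$ is measurable. Tonelli's theorem therefore lets us exchange the expectation and the $dw$-integral, which gives
\[
  \EE[\Psi(|W|)] = \int_0^\infty \Psi'(w)\,\EE\bigl[\ind{\{|W|>w\}}\bigr]\,dw = \int_0^\infty \Psi'(w)\,\PP(|W|>w)\,dw .
\]
This is the identity \eqref{eq:layercake}. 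It holds in $[0,\infty]$, so no integrability of $\Psi(|W|)$ needs to be assumed: if either side is infinite, so is the other.

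The main point needing care is the justification of $\Psi(x)=\int_0^x\Psi'$ together with the measurability of the a.e.-defined derivative. The lemma's hypothesis supplies the representation directly, and the measurability is handled by fixing the right-derivative version of $\Psi'$. Once these are in place, the exchange of integrals is routine.
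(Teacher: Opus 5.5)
Your proposal is correct and matches the paper's proof: write $\Psi(|W|)=\int_0^\infty \Psi'(w)\ind{\{|W|>w\}}\,dw$, then exchange expectation and integral by Tonelli's theorem for the nonnegative integrand. Two of your points are details the paper leaves implicit: joint measurability, secured by taking the right-derivative version of $\Psi'$, and the fact that the identity holds in $[0,\infty]$ without any integrability assumption.
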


\begin{proof}
  Writing $\Psi(|W|) = \int_0^{|W|} \Psi'(w)\, dw$ and applying Tonelli's
  theorem to the nonnegative integrand $\Psi'(w)\ind{w<|W|}$, we have
  \begin{align*}
    \EE[\Psi(|W|)] = \EE\left[\int_0^\infty \Psi'(w)\, \ind{w<|W|}\, dw\right]
    & = \int_0^\infty \Psi'(w)\, \EE[\ind{w<|W|}]\, dw \\
    & = \int_0^\infty \Psi'(w)\, \PP(|W|>w)\, dw.
  \end{align*}
\end{proof}

\begin{lemma}[Doob's $L^2$ maximal inequality; \citealp{williams1991probability}]
  \label{thm:doob}
  Let $(\cF_n)_{n=0}^N$ be a filtration and let $\{S_n\}_{n=0}^N$ be an
  $\mathbb{R}^d$-valued martingale with respect to $(\cF_n)_{n=0}^N$ with
  $S_0 = 0$. Then for any $z > 0$,
  \begin{equation}
    \PP\!\left(\max_{n\in[N]} \|S_n\| \ge z\right)
    \le \frac{\EE\left[\|S_N\|^2\right]}{z^2}.
    \label{eq:doob}
  \end{equation}
\end{lemma}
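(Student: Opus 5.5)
The plan is to derive the vector-valued Doob $L^2$ maximal inequality from the scalar Doob $L^2$ maximal inequality, applied to the nonnegative submartingale $\|S_n\|$. Since $(S_n)$ is an $\mathbb{R}^d$-valued martingale, each coordinate is a real martingale. The Euclidean norm is convex, so the conditional Jensen inequality applied coordinatewise gives
\[
  \EE[\|S_{n+1}\| \mid \cF_n] \ge \|\EE[S_{n+1}\mid\cF_n]\| = \|S_n\|.
\]
Integrability of $\|S_n\|$ is implicit: if $\EE\|S_N\|^2=\infty$ the claim is trivial, and otherwise $\EE\|S_n\|^2 \le \EE\|S_N\|^2 <\infty$ for all $n\le N$ by the same submartingale argument applied to $\|S_n\|^2$. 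Hence $X_n:=\|S_n\|$ is a nonnegative submartingale.

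Next, I will apply Doob's maximal inequality in its weak form to the submartingale $X_n^2$. The square of a nonnegative submartingale is again a submartingale, since $x\mapsto x^2$ is convex and nondecreasing on $[0,\infty)$; equivalently, $\|S_n\|^2$ is a submartingale by conditional Jensen applied to the convex map $x\mapsto\|x\|^2$. Doob's submartingale inequality for a nonnegative submartingale $Y_n$ states that $\lambda\,\PP(\max_{n\le N} Y_n\ge\lambda)\le \EE[Y_N \ind{\max Y_n\ge\lambda}]\le\EE[Y_N]$. Taking $Y_n=\|S_n\|^2$ and $\lambda=z^2$, and noting that $\{\max_n\|S_n\|\ge z\}=\{\max_n\|S_n\|^2\ge z^2\}$, yields \eqref{eq:doob}. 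The maximum over $n\in[N]$ rather than $0\le n\le N$ changes nothing, because $S_0=0$ and $z>0$.

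For completeness, I will sketch the weak-type inequality itself. Define the stopping time $\tau=\min\{n: Y_n\ge\lambda\}$, decompose the event $\{\max Y\ge\lambda\}$ according to the value of $\tau=k$, use $\lambda\ind{\tau=k}\le Y_k\ind{\tau=k}$ together with the submartingale property $\EE[Y_k\ind{\tau=k}]\le\EE[Y_N\ind{\tau=k}]$, and then sum over $k$. This is the standard argument cited from \citet{williams1991probability}.

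There is no real obstacle here; the only point needing care is verifying that the norm (or squared norm) of a vector martingale is a submartingale, which is exactly the conditional Jensen step above. I would therefore state the result as a direct consequence of the scalar theorem.
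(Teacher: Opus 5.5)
Your argument is correct. The paper gives no proof of its own and only cites \citet{williams1991probability}; your route (conditional Jensen to make $\|S_n\|^2$ a nonnegative submartingale, then Doob's weak-type submartingale inequality at level $z^2$) is the standard derivation behind that citation, and it correctly avoids the strong-type bound $\EE[\max_n\|S_n\|^2]\le 4\,\EE[\|S_N\|^2]$, which would only give the stated inequality with an extra factor $4$.
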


\begin{lemma}[Ville's maximal inequality;
  {\citealp[Lemma 1 and Section 6.1]{howard2020time}}]
  \label{lem:ville}
  Let $(\cF_k)_{k=0}^n$ be a filtration and let
  $\{Z_k\}_{k=0}^n$ be a nonnegative, real-valued supermartingale with respect
  to $(\cF_k)_{k=0}^n$, with $Z_0\le1$ almost surely. Then, for any $z>0$,
  \[
    \PP\!\left(\max_{k\in[n]} Z_k \ge z\right) \le \frac1z.
  \]
\end{lemma}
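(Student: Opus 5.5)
The claim to prove is the last statement shown, Lemma~\ref{lem:ville} (Ville's maximal inequality): for a nonnegative supermartingale $(Z_k)_{k=0}^n$ with $Z_0\le 1$ almost surely, $\PP(\max_{k\in[n]} Z_k\ge z)\le 1/z$ for every $z>0$. I would use the standard optional-stopping argument on the finite horizon $\{0,\dots,n\}$.

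\textbf{Stopping time.} Define
\[
  \tau := \min\{k\in[n] : Z_k\ge z\}, \qquad \tau := n \ \text{if no such } k \text{ exists}.
\]
This $\tau$ is a stopping time with respect to $(\cF_k)$ and is bounded by $n$. On the event $A:=\{\max_{k\in[n]}Z_k\ge z\}$ we have $Z_\tau\ge z$ by construction.

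\textbf{Optional stopping.} Because $\tau$ is bounded, the stopped process $Z_{k\wedge\tau}$ is again a supermartingale. This is checked directly from
\[
  Z_{(k+1)\wedge\tau}-Z_{k\wedge\tau}=\ind{\tau>k}\,(Z_{k+1}-Z_k),
\]
where $\{\tau>k\}\in\cF_k$. Taking conditional expectations gives $\EE[Z_\tau]=\EE[Z_{n\wedge\tau}]\le \EE[Z_0]\le 1$. Integrability is not an issue: each $Z_k$ is integrable as part of the supermartingale definition, and there are only finitely many indices.

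\textbf{Markov step.} Since $Z_\tau\ge 0$ everywhere and $Z_\tau\ge z$ on $A$,
\[
  z\,\PP(A)\le \EE[Z_\tau\ind{A}]\le \EE[Z_\tau]\le 1 .
\]
Dividing by $z$ gives the claim.

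The only delicate point is justifying $\EE[Z_\tau]\le\EE[Z_0]$. This is routine here because $\tau\le n$, and nonnegativity is used only in the Markov step, to discard the contribution of $A^c$. The conclusion needs $Z_0\le1$ only through $\EE[Z_0]\le 1$, so the inequality actually holds under that weaker assumption.
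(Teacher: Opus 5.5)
Your proof is correct: $\tau$ is a stopping time bounded by $n$, the identity $Z_{(k+1)\wedge\tau}-Z_{k\wedge\tau}=\ind{\tau>k}(Z_{k+1}-Z_k)$ with $\{\tau>k\}\in\cF_k$ gives $\EE[Z_\tau]\le\EE[Z_0]\le1$, and the Markov step then yields the bound. The paper gives no proof of its own and only cites \citet{howard2020time}; your argument is the standard optional-stopping proof of that cited result, specialized to a finite horizon so that no truncation or limiting step is needed.
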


\begin{lemma}[Pinelis' inequality; {\citealp[Theorem 3.5]{pinelis1994optimum}}]
  \label{thm:pinelis}
  Let $\{w_i\}_{i\in[N]}$ be an $\mathbb{R}^d$-valued MDS with respect to the
  filtration $\cF_i$ satisfying $\|w_i\| \le b_i$ almost surely for
  deterministic constants $b_1,\dots,b_N$. Then for any $N \ge 1$ and $z > 0$,
  \begin{equation}
    \PP\!\left(\max_{n\in[N]} \left\|\sum_{i\in[n]} w_i\right\| \ge z\right)
    \le 2\exp\!\left(-\frac{z^2}{2\sum_{i\in[N]} b_i^2}\right).
    \label{eq:pinelis}
  \end{equation}
\end{lemma}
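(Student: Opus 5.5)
The plan is the classical exponential-supermartingale argument of Pinelis, specialized to the Hilbert space $\mathbb{R}^d$. Write $S_n:=\sum_{i\in[n]} w_i$ and $\Sigma:=\sum_{i\in[N]} b_i^2$. Fix $\lambda>0$ and use the potential $\varphi(x):=\cosh(\lambda\|x\|)$, which is $C^2$ on $\mathbb{R}^d$ because $\cosh$ is even. The goal is to show that
\[
  Z_n := \frac{\cosh(\lambda\|S_n\|)}{\prod_{i\in[n]}\exp(\lambda^2 b_i^2/2)}, \qquad Z_0 = 1,
\]
is a nonnegative supermartingale with respect to $(\cF_n)$. Ville's maximal inequality (Lemma~\ref{lem:ville}) then handles the maximum over $n$ directly.

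\textbf{Key one-step estimate.} Fix $x\in\mathbb{R}^d$ and $w$ with $\|w\|\le b$. Set $f(t):=\varphi(x+tw)$ for $t\in[0,1]$. With $r=\|y\|$ and $u=y/r$, the Hessian of $\varphi$ is
\[
  D^2\varphi(y)[w,w] = \lambda^2\cosh(\lambda r)\langle u,w\rangle^2 + \frac{\lambda\sinh(\lambda r)}{r}\bigl(\|w\|^2-\langle u,w\rangle^2\bigr).
\]
Since $\sinh(s)/s\le\cosh(s)$, this is at most $\lambda^2\|w\|^2\varphi(y)$. Hence $f''\le\lambda^2 b^2 f$ on $[0,1]$. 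A Sturm-type comparison with the solution of $g''=\lambda^2b^2 g$ having the same initial data gives
\[
  f(1)\le f(0)\cosh(\lambda b)+f'(0)\,\frac{\sinh(\lambda b)}{\lambda b}.
\]
The comparison works by considering $h=f-g$: it has $h(0)=h'(0)=0$ and $h''\le\lambda^2b^2 h$ wherever $f\ge0$, and $f\ge0$ holds everywhere here. I expect this comparison to be the main technical step, and I would check it carefully through the standard Wronskian/variation-of-constants argument.

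\textbf{Taking conditional expectations.} Next, put $x=S_{n-1}$ (which is $\cF_{n-1}$-measurable), $w=w_n$ and $b=b_n$. Here $f'(0)=\lambda\sinh(\lambda\|S_{n-1}\|)\langle S_{n-1}/\|S_{n-1}\|,w_n\rangle$, which is linear in $w_n$ with an $\cF_{n-1}$-measurable coefficient. So the MDS property makes its conditional expectation vanish. Combining this with $\cosh(\lambda b_n)\le\exp(\lambda^2b_n^2/2)$ gives
\[
  \EE[\cosh(\lambda\|S_n\|)\mid\cF_{n-1}]\le\exp(\lambda^2b_n^2/2)\cosh(\lambda\|S_{n-1}\|).
\]
Because the $b_i$ are deterministic, this is exactly the supermartingale property of $Z_n$.

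\textbf{Conclusion.} On the event $\{\max_{n\in[N]}\|S_n\|\ge z\}$ we have, for some $n\le N$,
\[
  Z_n\ge\cosh(\lambda z)\exp(-\lambda^2\Sigma/2).
\]
Ville's inequality with $Z_0=1$ then yields
\[
  \PP\Bigl(\max_{n\in[N]}\|S_n\|\ge z\Bigr)\le\frac{\exp(\lambda^2\Sigma/2)}{\cosh(\lambda z)}\le 2\exp\bigl(\lambda^2\Sigma/2-\lambda z\bigr).
\]
Choosing $\lambda=z/\Sigma$ gives $2\exp(-z^2/(2\Sigma))$, which is \eqref{eq:pinelis}.
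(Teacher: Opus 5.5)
Your argument is correct and is essentially the standard proof of the cited result: the paper gives no proof of its own but quotes \citet[Theorem 3.5]{pinelis1994optimum}, whose argument in the Hilbert-space case uses the same ingredients. These are the $\cosh(\lambda\|\cdot\|)$ potential, the second-derivative bound $f''\le\lambda^2 b^2 f$ leading to $\EE[\cosh(\lambda\|S_n\|)\mid\cF_{n-1}]\le\cosh(\lambda b_n)\cosh(\lambda\|S_{n-1}\|)$, and a maximal inequality for the resulting positive supermartingale. The only loose end is the degenerate case $\sum_{i\in[N]}b_i^2=0$, where $\lambda=z/\Sigma$ is undefined; there every $w_i=0$ almost surely and the bound holds trivially.
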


We also record the following standard consequence of smoothness,
used in the proof of Theorem~\ref{thm:main1}.

\begin{lemma}[{\citealp[Lemma 1.2.3]{nesterov2004introductory}}]
  \label{thm:smooth}
  Let $F:\mathbb{R}^d\to\mathbb{R}$ be differentiable.
  Suppose that, for some $b>0$,
  $\|\nabla F(\bm{w})-\nabla F(\bm{w}')\|\le b\|\bm{w}-\bm{w}'\|$
  for all $\bm{w},\bm{w}'\in\mathbb{R}^d$. Then, for all $\bm{w},\bm{w}'\in\mathbb{R}^d$,
  \[
    F(\bm{w})-F(\bm{w}') \le \inner{\bm{w}-\bm{w}'}{\nabla F(\bm{w}')} + \tfrac{b}{2} \|\bm{w}-\bm{w}'\|^2.
  \]
\end{lemma}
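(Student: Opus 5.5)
This is the descent lemma, Lemma~\ref{thm:smooth}, and I would prove it via the fundamental theorem of calculus along the segment from $\bm{w}'$ to $\bm{w}$.

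Set $\bm{u}:=\bm{w}-\bm{w}'$ and $\phi(s):=F(\bm{w}'+s\bm{u})$ for $s\in[0,1]$. Since $F$ is differentiable, $\phi$ is differentiable with $\phi'(s)=\inner{\bm{u}}{\nabla F(\bm{w}'+s\bm{u})}$.

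The one technical point is justifying $\phi(1)-\phi(0)=\int_0^1\phi'(s)\,ds$, since $F$ is only assumed differentiable. Here the Lipschitz hypothesis on $\nabla F$ does the work: it makes $s\mapsto\nabla F(\bm{w}'+s\bm{u})$ continuous, hence $\phi'$ is continuous, $\phi$ is $C^1$ on $[0,1]$, and the Riemann fundamental theorem of calculus applies.

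With this in hand, I would write
\[
F(\bm{w})-F(\bm{w}')-\inner{\bm{u}}{\nabla F(\bm{w}')}
=\int_0^1\inner{\bm{u}}{\nabla F(\bm{w}'+s\bm{u})-\nabla F(\bm{w}')}\,ds .
\]
By Cauchy--Schwarz and the Lipschitz bound, the integrand is at most $\|\bm{u}\|\cdot b\,s\|\bm{u}\|$. Integrating $b s\|\bm{u}\|^2$ over $[0,1]$ gives $\tfrac b2\|\bm{u}\|^2$, which is the claim.

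Beyond the regularity step above, the argument is routine.
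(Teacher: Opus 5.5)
Your proof is correct and is the standard argument: the fundamental theorem of calculus along the segment from $\bm{w}'$ to $\bm{w}$, then Cauchy--Schwarz and the Lipschitz bound, integrated over $s\in[0,1]$. This matches the proof of the cited Lemma~1.2.3 in Nesterov's book; the paper gives no proof of its own and only cites that result, and your remark that Lipschitz continuity of $\nabla F$ makes $\phi$ continuously differentiable, so the fundamental theorem of calculus applies, handles the only regularity point.
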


Under Assumption~\ref{asm:smooth}, this lemma applies to
$F=\ell(\cdot;z)$ for each fixed $z\in\cZ$, and to their average $F=\cL_S$.

The next lemma converts a function-value gap into a bound on the gradient norm,
in the direction opposite to \eqref{eq:pl}.

\begin{lemma}
  \label{lem:selfbound}
  Let $F : \mathbb{R}^d \to \mathbb{R}$ be differentiable and suppose that, for some
  $b>0$,
  $\|\nabla F(\bm{w}) - \nabla F(\bm{w}')\| \le b\|\bm{w}-\bm{w}'\|$ for all
  $\bm{w},\bm{w}'\in\mathbb{R}^d$.
  Suppose also that $F$ attains its minimum over $\mathbb{R}^d$ at
  $\bm{w}^\dagger$. Then, for every $\bm{w}\in\mathbb{R}^d$,
  \[
    \|\nabla F(\bm{w})\|^2 \le 2b\left(F(\bm{w}) - F(\bm{w}^\dagger)\right).
  \]
\end{lemma}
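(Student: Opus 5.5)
We prove the bound with the standard self-bounding argument for smooth functions: one gradient step from $\bm{w}$ is compared against the global minimum of $F$.

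Fix $\bm{w}\in\mathbb{R}^d$ and set $\bm{w}' := \bm{w} - \frac{1}{b}\nabla F(\bm{w})$. Apply Lemma~\ref{thm:smooth}, with the roles of its two points chosen so that the linearization is taken at $\bm{w}$:
\[
  F(\bm{w}') - F(\bm{w}) \le \inner{\bm{w}'-\bm{w}}{\nabla F(\bm{w})} + \tfrac{b}{2}\|\bm{w}'-\bm{w}\|^2 .
\]
Substituting $\bm{w}'-\bm{w} = -\frac1b\nabla F(\bm{w})$, the right-hand side equals
\[
  -\tfrac1b\|\nabla F(\bm{w})\|^2 + \tfrac{1}{2b}\|\nabla F(\bm{w})\|^2 = -\tfrac{1}{2b}\|\nabla F(\bm{w})\|^2 .
\]

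Since $\bm{w}^\dagger$ is a global minimizer, $F(\bm{w}^\dagger) \le F(\bm{w}')$. Combining this with the previous display gives
\[
  F(\bm{w}^\dagger) \le F(\bm{w}) - \tfrac{1}{2b}\|\nabla F(\bm{w})\|^2 .
\]
Rearranging and multiplying by $2b>0$ yields $\|\nabla F(\bm{w})\|^2 \le 2b\,(F(\bm{w})-F(\bm{w}^\dagger))$, which is the claim.

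The only point needing care is the orientation of Lemma~\ref{thm:smooth}. Its inner product involves the gradient at the second argument, so we instantiate it with first argument $\bm{w}'$ and second argument $\bm{w}$. The hypotheses are exactly those of Lemma~\ref{thm:smooth} together with existence of a minimizer, so nothing else is required.
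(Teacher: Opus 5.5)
Your proof is correct and is essentially the paper's argument: take the gradient step $\bm{w}-\tfrac1b\nabla F(\bm{w})$, apply Lemma~\ref{thm:smooth} with the gradient evaluated at $\bm{w}$ to get a decrease of at least $\tfrac{1}{2b}\|\nabla F(\bm{w})\|^2$, and compare the new value with the minimum value $F(\bm{w}^\dagger)$. You also instantiate Lemma~\ref{thm:smooth} in the correct orientation.
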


\begin{proof}
  Fix $\bm{w}\in\mathbb{R}^d$ and set
  $\bm{v} := \bm{w} - \tfrac1b\nabla F(\bm{w})\in\mathbb{R}^d$.
  Lemma~\ref{thm:smooth} gives
  \[
    F(\bm{v}) - F(\bm{w})
    \le \inner{\bm{v}-\bm{w}}{\nabla F(\bm{w})} + \tfrac12 b\|\bm{v}-\bm{w}\|^2
    = -\frac{1}{2b}\|\nabla F(\bm{w})\|^2,
  \]
  and $F(\bm{v}) \ge F(\bm{w}^\dagger)$ gives the claim.
\end{proof}

The following external result, used in the proof of Theorem~\ref{thm:main2},
gives a uniform generalization bound for the gradient of a nonconvex
objective.

\begin{lemma}[{\citealp[Corollary~2]{lei2021learning}}]
  \label{lm:gradgen}
  Let $\delta\in(0,1)$ and $R>0$, write
  $B_R := \{\bm{w} \in \mathbb{R}^d : \|\bm{w}\| \le R\}$ for the centered ball
  of radius $R$, and let $S = \{z_1,\dots,z_n\}$ be a collection of i.i.d.\
  samples. Suppose that Assumption~\ref{asm:smooth} holds and that
  $b' := \sup_{z\in\cZ}\|\nabla \ell(\bm{0};z)\| < \infty$. Then, with
  probability at least $1-\delta$,
  \[
    \sup_{\bm{w}\in B_R} \|\nabla \cL(\bm{w}) - \nabla \cL_S(\bm{w})\|
    \le \frac{b R + b'}{\sqrt n}
    \left(2 + 2\sqrt{48e\sqrt2(\log2 + d\log(3e))} + \sqrt{2\log(1/\delta)}\right).
  \]
\end{lemma}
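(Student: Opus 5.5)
This is the uniform gradient bound of \citet[Corollary~2]{lei2021learning}. To prove it directly I would combine a bounded-differences argument with a chaining bound on a symmetrized vector-valued process indexed by $B_R$.

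\textbf{Step 1: boundedness and the deviation term.} For $\bm{w}\in B_R$ and any $z$, Assumption~\ref{asm:smooth} gives
\[
\|\nabla\ell(\bm{w};z)\|\le \|\nabla\ell(\bm{0};z)\|+b\|\bm{w}\|\le bR+b'=:M.
\]
Set $\Phi(S):=\sup_{\bm{w}\in B_R}\|\nabla\cL(\bm{w})-\nabla\cL_S(\bm{w})\|$. Replacing one sample $z_i$ changes $\nabla\cL_S(\bm{w})$ by at most $2M/n$ uniformly in $\bm{w}$. Hence $\Phi$ has bounded differences $2M/n$, and McDiarmid's inequality yields, with probability at least $1-\delta$,
\[
\Phi(S)\le \EE\Phi+\frac{M\sqrt{2\log(1/\delta)}}{\sqrt n}.
\]
This accounts for the $\sqrt{2\log(1/\delta)}$ term in the statement.

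\textbf{Step 2: symmetrization.} Standard symmetrization with i.i.d.\ Rademacher signs $\sigma_i$ gives
\[
\EE\Phi\le 2\,\EE\sup_{\bm{w}\in B_R}\Big\|\frac1n\sum_{i\in[n]}\sigma_i\nabla\ell(\bm{w};z_i)\Big\|.
\]
I would linearize the norm as a supremum over unit vectors $\bm{u}\in\mathbb{S}^{d-1}$. Conditionally on $S$, this turns the quantity into the supremum of the Rademacher process
\[
X_{\bm{w},\bm{u}}:=\frac1n\sum_i\sigma_i\inner{\bm{u}}{\nabla\ell(\bm{w};z_i)}
\]
over the index set $B_R\times\mathbb{S}^{d-1}$.

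\textbf{Step 3: increments and the starting point.} By Hoeffding, this process is sub-Gaussian with respect to the metric $\rho/\sqrt n$, where
\[
|\inner{\bm{u}}{\nabla\ell(\bm{w};z)}-\inner{\bm{u}'}{\nabla\ell(\bm{w}';z)}|\le b\|\bm{w}-\bm{w}'\|+M\|\bm{u}-\bm{u}'\|=:\rho\big((\bm{w},\bm{u}),(\bm{w}',\bm{u}')\big).
\]
The $\rho$-diameter is at most $2bR+2M\le 4M$. I would take a fixed starting point $(\bm{0},\bm{u}_0)$; its contribution $\EE\sup_{\bm u}|X_{\bm 0,\bm u}|$ is at most $M/\sqrt n$. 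Together with the factor $2$ from symmetrization, this produces the constant term $2$.

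\textbf{Step 4: chaining, the main obstacle.} I would apply Dudley's entropy integral with the covering bound $N(B_R\times\mathbb{S}^{d-1},\rho,\epsilon)\le(3\cdot 2M/\epsilon)^{2d}$, using volumetric bounds on each factor with radii scaled by $b$ and $M$. This gives $\EE\sup X\lesssim \frac{M}{\sqrt n}\sqrt{d\log(3e)+\log 2}$. Matching the precise constant $2\sqrt{48e\sqrt2(\log2+d\log(3e))}$ requires the specific chaining or Orlicz-$\psi_2$ maximal inequality used by Lei and Ying; there the $\log 2$ comes from the union of the two-sided tails and the $e$ from evaluating $\int_0^1\sqrt{\log(3e/\epsilon)}\,d\epsilon$. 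I would follow their route: a $\psi_2$-norm chaining bound with explicit constants applied to the dyadic scales $\epsilon_k=4M2^{-k}$. Summing Steps 1--4 gives the stated inequality. I expect tracking these constants in the chaining step to be the only delicate part; the structure of the bound $(bR+b')(\text{const}+\sqrt d+\sqrt{\log(1/\delta)})/\sqrt n$ follows robustly from the argument above.
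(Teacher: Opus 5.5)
Your Steps~1--2 are sound, and Step~1 is exact: bounded differences $2M/n$ with $M=bR+b'$ give the $\sqrt{2\log(1/\delta)}$ term with the stated constant. The gap is that the proposal never establishes the explicit constant $2+2\sqrt{48e\sqrt2(\log2+d\log(3e))}$ that the lemma asserts. Step~4 is only a $\lesssim$ bound followed by a deferral to the source, so your closing claim that summing Steps~1--4 gives the stated inequality is not justified.

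Your scalarization in Step~3 is also the wrong vehicle for that constant. Chaining over $B_R\times\mathbb{S}^{d-1}$ doubles the metric entropy to $2d\log(6M/\epsilon)$ and inflates the diameter to $4M$. The sphere coordinate is pure loss: for fixed $\bm{w}$, Jensen gives $\EE_\sigma\|\frac1n\sum_i\sigma_i\nabla\ell(\bm{w};z_i)\|\le M/\sqrt n$ with no $\sqrt d$, whereas Dudley over $\mathbb{S}^{d-1}$ charges $\sqrt d$ for it. You still get the right order $(bR+b')(1+\sqrt d+\sqrt{\log(1/\delta)})/\sqrt n$, but your argument does not reach the stated coefficient. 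The bookkeeping at the base point is also mixed. A single fixed base point $(\bm{0},\bm{u}_0)$ contributes $\EE X_{\bm{0},\bm{u}_0}=0$. The term $\EE\sup_{\bm{u}}X_{\bm{0},\bm{u}}$ arises only if you split off the whole slice $\{\bm{0}\}\times\mathbb{S}^{d-1}$ and chain $X_{\bm{w},\bm{u}}-X_{\bm{0},\bm{u}}$ instead.

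The shape of the stated constant points to chaining the $\mathbb{R}^d$-valued process $Y_{\bm{w}}:=\frac1n\sum_i\sigma_i\nabla\ell(\bm{w};z_i)$ directly over $B_R$, with base point $\bm{0}$. The base term is $\EE_\sigma\|Y_{\bm{0}}\|\le b'/\sqrt n$, which becomes the leading $2$ after symmetrization. For increments, work conditionally on $S$. Pinelis' inequality (Lemma~\ref{thm:pinelis}) then gives the dimension-free tail $\PP(\|Y_{\bm{w}}-Y_{\bm{w}'}\|\ge t)\le2\exp(-nt^2/(2b^2\|\bm{w}-\bm{w}'\|^2))$, hence a $\psi_2$ bound of order $b\|\bm{w}-\bm{w}'\|/\sqrt n$. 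In the maximal step of a $\psi_2$-chaining bound you have $\psi_2^{-1}(N)=\sqrt{\log(1+N)}\le\sqrt{\log2+\log N}$. With $N(B_R,\epsilon)\le(3R/\epsilon)^d$ for $\epsilon\le R$, Jensen's inequality yields
\[
\int_0^R\sqrt{\log 2+d\log(3R/\epsilon)}\,d\epsilon\le R\sqrt{\log2+d\log(3e)} .
\]
This is where $\log2$ and $\log(3e)$ come from, not from two-sided tails or from $\int_0^1\sqrt{\log(3e/\epsilon)}\,d\epsilon$. The prefactor $48e\sqrt2$ then depends on the constants in the $\psi_2$ bound and in the chaining lemma you invoke, which you would have to state and verify.

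The paper itself gives no proof: it imports the lemma from \citet{lei2021learning} and uses it only after squaring, inside $\mathcal{O}(\cdot)$, in the proof of Theorem~\ref{thm:main2}. Your constant-free version would therefore suffice for every downstream result, but it does not prove the lemma as stated.
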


The following lemma justifies the rescalings of the
confidence level $\delta$ by a constant factor performed at the end of the
proofs of our main theorems.

\begin{lemma}
  \label{lm:rescale}
  Let $\beta > 1$, $a \ge 1$ and $c > 0$. Then for every $x \ge 1$ we have
  \[
    \exp\!\left(c\log^{1/\beta}(ax)\right)
    \le \exp\!\left(c\log^{1/\beta}a\right)
    \exp\!\left(c\log^{1/\beta}x\right),
  \]
  and for every $\delta \in (0,1)$ we have
  $\log(ea/\delta) \le \left(1 + \log a\right)\log(e/\delta)$.
\end{lemma}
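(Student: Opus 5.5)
The statement is Lemma~\ref{lm:rescale}, which has two parts. Both reduce to elementary inequalities about the logarithm, so the plan is a direct computation with no probabilistic input.

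For the first inequality, I take logarithms of both sides. The exponential is monotone and $c>0$, so it suffices to show
\[
  \log^{1/\beta}(ax) \le \log^{1/\beta}a + \log^{1/\beta}x
\]
for $a\ge1$ and $x\ge1$. Write $u:=\log a\ge0$ and $v:=\log x\ge0$, so that $\log(ax)=u+v$. The claim then becomes the subadditivity $(u+v)^{p}\le u^{p}+v^{p}$ with $p:=1/\beta\in(0,1)$. This is a standard fact about concave functions vanishing at zero, and I would prove it directly.
\begin{itemize}
  \item If $u+v=0$ the inequality is trivial.
  \item Otherwise set $s:=u/(u+v)\in[0,1]$, so that $1-s=v/(u+v)$.
  \item Dividing by $(u+v)^p$ reduces the claim to $1\le s^p+(1-s)^p$.
  \item This holds because $s^p\ge s$ and $(1-s)^p\ge 1-s$ on $[0,1]$ when $p\le1$.
\end{itemize}
Exponentiating with factor $c$ then gives the product form.

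For the second inequality, set $L:=\log(e/\delta)=1+\log(1/\delta)$, which satisfies $L\ge1$ since $\delta\in(0,1)$. The left side equals $\log a + L$, while the right side equals $L+L\log a$. The difference is
\[
  (\text{right}) - (\text{left}) = (L-1)\log a .
\]
This is nonnegative because $L\ge1$ and $\log a\ge0$ for $a\ge1$, which proves the second inequality.

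I expect no real obstacle. The only points needing care are the hypotheses that guarantee nonnegativity. First, $a,x\ge1$ keep $u,v\ge0$, so that the real power $1/\beta$ is well defined and the subadditivity argument applies. Second, $\delta<1$ ensures $L\ge1$ in the second part.
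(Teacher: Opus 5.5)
Your proposal is correct and follows the paper's own argument: subadditivity of $u\mapsto u^{1/\beta}$ on $[0,\infty)$ with $\log(ax)=\log a+\log x$ for the first claim, and $\log(ea/\delta)=\log(e/\delta)+\log a$ together with $\log(e/\delta)\ge1$ for the second. The only difference is that you prove the subadditivity explicitly via $s^p+(1-s)^p\ge1$, whereas the paper cites it as a standard fact.
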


\begin{proof}
  The first claim follows from the subadditivity of $u\mapsto u^{1/\beta}$
  on $[0,\infty)$ and $\log(ax)=\log a+\log x$. The second follows from
  $\log(ea/\delta)=\log(e/\delta)+\log a$ and $\log(e/\delta)\ge1$.
\end{proof}

The next technical comparison is used in the truncated moment-generating
function bound below.

\begin{lemma}
  \label{lem:key}
  $\psi(s) := \log^\beta(1+s)/s$ ($s>0$) attains its unique maximum at some
  $s^*>0$, with $\log^\beta(1+s^*) < \beta^\beta$. Hence for any $m>0$ and
  $u\in[0,m]$,
  \begin{equation}
    \frac{u\log^\beta(1+m)}{m} \le \log^\beta(1+u) + \beta^\beta.
    \label{eq:key}
  \end{equation}
\end{lemma}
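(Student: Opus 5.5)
We study $\psi(s)=\log^\beta(1+s)/s$ on $(0,\infty)$ by reparametrizing with $x=\log(1+s)>0$, so that $\psi = x^\beta/(e^x-1)$. As $x\to0^+$ we have $\psi\sim x^{\beta-1}\to0$, using $\beta>1$. As $x\to\infty$, $\psi\to0$ because the exponential dominates. Since $\psi$ is continuous and positive, it attains a maximum at some interior point. To locate the critical points we differentiate:
\[
\frac{d}{dx}\log\psi = \frac{\beta}{x}-\frac{e^x}{e^x-1}.
\]
This vanishes exactly when $\beta(1-e^{-x}) = x$.

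For uniqueness, set $g(x)=\beta(1-e^{-x})-x$. It is strictly concave, with $g(0)=0$ and $g'(0)=\beta-1>0$, and $g(x)\to-\infty$. Hence $g$ has exactly one positive root $x^*$, and $g>0$ on $(0,x^*)$ while $g<0$ afterwards. So $\psi$ increases and then decreases, and $s^*=e^{x^*}-1$ is the unique maximizer. At the root, $x^*=\beta(1-e^{-x^*})<\beta$. Therefore $\log^\beta(1+s^*)=(x^*)^\beta<\beta^\beta$, which is the first claim.

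For \eqref{eq:key}, fix $m>0$ and $u\in[0,m]$. The case $u=0$ is trivial, so assume $u>0$. We split on the position of $u$ relative to $s^*$.

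If $u\ge s^*$, then $\psi$ is nonincreasing on $[s^*,\infty)$. Since $u\le m$, this gives $\psi(m)\le\psi(u)$, that is, $u\log^\beta(1+m)/m\le \log^\beta(1+u)$. This is at most the right-hand side of \eqref{eq:key}.

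If $u<s^*$, we use that $\psi(m)\le\psi(s^*)$ by maximality, so
\[
u\,\psi(m)\le u\,\psi(s^*) = \frac{u}{s^*}\log^\beta(1+s^*) < \log^\beta(1+s^*)<\beta^\beta.
\]
This is at most $\log^\beta(1+u)+\beta^\beta$, which completes the proof.

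There is no real obstacle here. The only delicate step is the uniqueness and unimodality argument, and the concavity of $g$ handles it cleanly. The bound $x^*<\beta$ also falls straight out of the critical-point equation.
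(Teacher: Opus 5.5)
Your proof is correct and is essentially the paper's argument: your $g(x)=\beta(1-e^{-x})-x$ with $x=\log(1+s)$ is exactly the paper's $g(s)=\beta s/(1+s)-\log(1+s)$, and you use strict concavity in $x$ where the paper computes $g'(s)=(\beta-1-s)/(1+s)^2$. Deriving $\log(1+s^*)<\beta$ from the critical-point equation and splitting at $s^*$ (monotonicity of $\psi$ on $[s^*,\infty)$ when $u\ge s^*$, and $u\psi(m)\le u\psi(s^*)<\beta^\beta$ when $u<s^*$) also match the paper's proof.
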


\begin{proof}
  The sign of $\psi'(s)$ is that of
  $g(s):=\beta s/(1+s)-\log(1+s)$. Since $g(0)=0$,
  $g'(s)=(\beta-1-s)/(1+s)^2$, and $g(s)\to-\infty$, $g$ has a unique
  positive zero $s^*$, at which $\psi$ attains its unique maximum and
  $\log(1+s^*)=\beta s^*/(1+s^*)<\beta$.
  For $s^*\le u\le m$, monotonicity gives
  $u\psi(m)\le u\psi(u)=\log^\beta(1+u)$; for $0\le u<s^*$,
  $u\psi(m)\le s^*\psi(s^*)<\beta^\beta$. These bounds imply \eqref{eq:key}.
\end{proof}

\section{Orlicz Spaces}
\label{sec:orlicz}

We briefly collect the minimal background on Orlicz spaces needed to follow
the constructions in this paper. We first recall Young functions and the
Luxemburg functional. Squaring a $\beta$-heavy-tailed variable then leads us
to the modified function $\Psi_{\beta,2^{-\beta}}$, which need not be convex.
We handle this loss of convexity by showing that $\Psi_{\beta,c}$ is a weak
$\Phi$-function and hence is equivalent to a Young function. This ultimately
yields a tail bound for sums with a comparison constant depending only on
$\beta$ and $c$. See
\citet{rao1991theory} for a comprehensive treatment (including proofs of the
facts below) and \citet{chamakh2021orlicz} for the specialization to
$\beta$-heavy tails used throughout this paper.

\begin{definition}[Young function]
  \label{def:young}
  A function $\Psi:[0,\infty)\to[0,\infty)$ is a \emph{Young function} if it
  is nondecreasing, convex, and satisfies $\Psi(0)=0$.
\end{definition}
The function $\Psi_\beta$ is a Young function for every $\beta>1$.
We next recall its standard Luxemburg norm.
\begin{definition}[Luxemburg norm and Orlicz space]
  \label{def:orlicznorm}
  Let $\Psi$ be a Young function, and let $X$ be a real-valued random variable
  on a probability space $(\Omega,\cF,\PP)$. The \emph{Luxemburg norm} of $X$
  with respect to $\Psi$ is
  \[
    \|X\|_\Psi := \inf\left\{\lambda>0 : \EE\!\left[\Psi\!\left(\frac{|X|}{\lambda}\right)\right] \le 1\right\}
  \]
  and the associated \emph{Orlicz space} $L^\Psi(\Omega,\PP)$ is the set of
  all $X$ with $\|X\|_\Psi < \infty$. (More generally this construction
    applies to measurable functions on any measure space, but we only need the
  probability-space case here.)
\end{definition}
For a function $\Psi$ that is merely nondecreasing and satisfies $\Psi(0)=0$,
we use the same formula and call the resulting quantity the
\emph{Luxemburg functional}. We need this extension for the nonconvex
functions that arise after squaring. Although the triangle inequality can
then fail, homogeneity and monotonicity remain valid; below we recover a
triangle inequality up to a constant by passing to an equivalent Young
function.

We use the following standard facts about Luxemburg functionals.

\begin{lemma}[Basic properties of the Luxemburg functional]
  \label{lem:orlicz-props}
  Let $\Psi:[0,\infty)\to[0,\infty)$ be nondecreasing with $\Psi(0)=0$, and let
  $K>0$. Properties (i), (iii) and (iv) below need no further assumption, while
  (ii) assumes in addition that $\Psi$ is convex, that is, that $\Psi$ is a
  Young function.
  \begin{enumerate}
    \item[(i)] (Homogeneity) $\|cX\|_\Psi = |c|\,\|X\|_\Psi$ for any random
      variable $X$ and scalar $c$.
    \item[(ii)] (Triangle inequality) $\|X+Y\|_\Psi \le \|X\|_\Psi +
      \|Y\|_\Psi$ for any random variables $X,Y$.
    \item[(iii)] (Equivalent formulation) If $\Psi$ is continuous, then
      $\|X\|_\Psi \le K$ if and only if $\EE[\Psi(|X|/K)]\le1$.
    \item[(iv)] (Monotonicity) If $|X| \le |Y|$ almost surely, then
      $\|X\|_\Psi \le \|Y\|_\Psi$.
  \end{enumerate}
\end{lemma}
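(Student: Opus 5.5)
We prove the four items directly from the definition $\|X\|_\Psi=\inf\Lambda(X)$, where $\Lambda(X):=\{\lambda>0:\EE[\Psi(|X|/\lambda)]\le1\}$. Two preliminary observations do most of the work. Since $\Psi$ is nondecreasing, the map $\lambda\mapsto\EE[\Psi(|X|/\lambda)]$ is nonincreasing. Hence $\Lambda(X)$ is an up-set: whenever $\lambda\in\Lambda(X)$ and $\lambda'>\lambda$, also $\lambda'\in\Lambda(X)$. We adopt the conventions $\inf\emptyset=\infty$ and, for $c=0$, $\|0\|_\Psi=0$. The latter holds because $\Psi(0)=0$ puts every $\lambda>0$ in $\Lambda(0)$.

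\emph{(i)} For $c\neq0$, substituting $\lambda=|c|\lambda'$ shows that $\Lambda(cX)=|c|\Lambda(X)$, and taking infima gives the claim.

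\emph{(iv)} If $|X|\le|Y|$ a.s., then monotonicity of $\Psi$ gives $\Psi(|X|/\lambda)\le\Psi(|Y|/\lambda)$. Hence $\Lambda(Y)\subseteq\Lambda(X)$, and the inequality of infima follows.

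\emph{(iii)} The ``if'' direction is immediate, since $K\in\Lambda(X)$. For ``only if'', suppose $\|X\|_\Psi\le K$. By the up-set property, every $\lambda>K$ lies in $\Lambda(X)$. Take $\lambda_k\downarrow K$. Then $|X|/\lambda_k\uparrow|X|/K$ pointwise, so by continuity and monotonicity of $\Psi$ we have $\Psi(|X|/\lambda_k)\uparrow\Psi(|X|/K)$. The monotone convergence theorem then gives $\EE[\Psi(|X|/K)]=\lim_k\EE[\Psi(|X|/\lambda_k)]\le1$. Note that we use monotone rather than dominated convergence, so no integrability assumption is needed, and left-continuity of $\Psi$ is really all that is required.

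\emph{(ii)} Assume $\Psi$ is convex, and let $a=\|X\|_\Psi$ and $b=\|Y\|_\Psi$. The inequality is trivial if either is infinite, so assume both are finite. First suppose $a,b>0$. By (iii), we have $\EE[\Psi(|X|/a)]\le1$ and $\EE[\Psi(|Y|/b)]\le1$; here (iii) applies because a convex finite nondecreasing function on $[0,\infty)$ with $\Psi(0)=0$ is continuous. If continuity at $0$ were a concern, we could instead work with $a+\varepsilon$ and $b+\varepsilon$ and let $\varepsilon\to0$. Then
\[
\frac{|X+Y|}{a+b}\le\frac{a}{a+b}\frac{|X|}{a}+\frac{b}{a+b}\frac{|Y|}{b},
\]
and monotonicity followed by convexity of $\Psi$ gives $\EE[\Psi(|X+Y|/(a+b))]\le1$. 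Thus $a+b\in\Lambda(X+Y)$. If instead, say, $a=0$, we apply the same argument with $a$ replaced by an arbitrary $\varepsilon>0$, which lies in $\Lambda(X)$ by the up-set property. This yields $\|X+Y\|_\Psi\le\varepsilon+b$, and we let $\varepsilon\to0$.

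The main obstacle is the limiting argument in (iii) together with the zero-norm edge case in (ii). Both are handled by the up-set property combined with monotone convergence, as described above.
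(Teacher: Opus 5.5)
Your proof is correct, and it is more self-contained than the paper's. The paper proves only (iv), using exactly your argument: every $\lambda$ admissible for $Y$ is also admissible for $X$. For (i)--(iii) it simply cites \citet{rao1991theory}.

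That reference works with convex Young functions. The lemma, however, asserts (i) and (iii) for any nondecreasing $\Psi$ with $\Psi(0)=0$ (plus continuity in (iii)). The paper relies on this generality for $\Psi_{\beta,c}$, which is nonconvex when $c<\beta^{-(\beta+1)}$. It uses homogeneity for $\Psi_{\beta,2^{-\beta}}$ in the proof of Theorem~\ref{thm:main1}, and it uses (iii), including the ``only if'' direction, in Propositions~\ref{thm:square} and~\ref{thm:tail}.

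Your up-set observation, combined with monotone convergence, establishes (i), (iii) and (iv) using only monotonicity of $\Psi$ and, for (iii), left-continuity. So your argument covers exactly the stated scope, rather than leaning on a citation whose hypotheses are stronger; the only cost is length.

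For (ii), your convex-combination argument is the standard one. Your justification of continuity is right: convexity with $\Psi(0)=0$ gives $\Psi(x)\le (x/x_0)\Psi(x_0)$ on $[0,x_0]$, and in any case only continuity on $(0,\infty)$ is actually needed. The $\varepsilon$-device also handles zero norms correctly.
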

The proofs of Properties (i)--(iii) can be found in \citet{rao1991theory}.
Property (iv) is immediate from the definition of the Orlicz norm, since
$\Psi$ is nondecreasing on $[0,\infty)$, so that any $c>0$ with
$\EE[\Psi(|Y|/c)]\le1$ also satisfies $\EE[\Psi(|X|/c)]\le1$.

Property (iii), applied to $\Psi=\Psi_\beta$, is exactly the equivalence
between the two formulations of $\beta$-heavy tailedness recorded in
Definition~\ref{def:betaheavy}.

To control squared noise, we introduce a coefficient $c>0$ in the exponent
of $\Psi_\beta$.

\begin{definition}
  We further parametrize the constant inside the exponent of $\Psi_\beta$ by
  $c>0$, defining
  \begin{equation}
    \Psi_{\beta,c}(x) := \exp(c \cdot \log^\beta(x+1)) - 1.
    \label{eq:families}
  \end{equation}
  When $c=1$ we abbreviate
  $\Psi_\beta = \Psi_{\beta,1}$. We similarly write
  \[
    \overline{\Psi}_{\beta,c}(x) := \exp(-c\log^\beta(x+1)) = 1/(\Psi_{\beta,c}(x)+1),
  \]
  so that $\overline{\Psi}_\beta = \overline{\Psi}_{\beta,1}$ recovers the
  notation of Section~\ref{sec:notation}.
\end{definition}

Every $\Psi_{\beta,c}$ is continuous and increasing, vanishes at the origin,
and tends to infinity. The following proposition explains why the parameter
$c$ is needed: the square of a $\beta$-heavy-tailed random variable remains
$\beta$-heavy tailed after the coefficient in the exponent is reduced to at
most $2^{-\beta}$.

\begin{proposition}
  \label{thm:square}
  Let $\beta>1$ and let $X$ be a $\beta$-heavy-tailed random variable. Then
  \[
    \|X^2\|_{\Psi_{\beta,\lambda}} \le \|X\|_{\Psi_\beta}^2
    \qquad \text{for every } \lambda\in(0,2^{-\beta}].
  \]
\end{proposition}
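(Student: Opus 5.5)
The plan is to reduce the statement to a pointwise comparison between $\Psi_{\beta,\lambda}(x^2)$ and $\Psi_\beta(x)$, and then pass to expectations using the equivalent formulation in Lemma~\ref{lem:orlicz-props}(iii). Set $K:=\|X\|_{\Psi_\beta}$. If $K=0$, then $X=0$ almost surely: indeed $\EE[\Psi_\beta(|X|/\lambda')]\le 1$ for arbitrarily small $\lambda'>0$, which forces $\PP(X\neq0)=0$. In that case both sides vanish. So assume $K>0$. Since $X$ is $\beta$-heavy tailed, $K<\infty$, and by Lemma~\ref{lem:orlicz-props}(iii) applied to the continuous function $\Psi_\beta$ we have $\EE[\Psi_\beta(|X|/K)]\le1$.

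The key pointwise claim is that for every $x\ge0$ and every $\lambda\in(0,2^{-\beta}]$,
\[
  \Psi_{\beta,\lambda}(x^2)\le\Psi_\beta(x),
  \qquad\text{equivalently}\qquad
  \lambda\log^\beta(x^2+1)\le\log^\beta(x+1).
\]
The first step is the elementary inequality $x^2+1\le(x+1)^2$ for $x\ge0$, which gives $0\le\log(x^2+1)\le2\log(x+1)$. Raising to the power $\beta$ gives $\log^\beta(x^2+1)\le 2^\beta\log^\beta(x+1)$. Multiplying by $\lambda\le2^{-\beta}$ then gives the claim, because $t\mapsto e^t-1$ is increasing.

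Next I apply the claim with $x=|X|/K$, so that $x^2=X^2/K^2$. Taking expectations yields
\[
  \EE\!\left[\Psi_{\beta,\lambda}\!\left(\frac{X^2}{K^2}\right)\right]\le\EE\!\left[\Psi_\beta\!\left(\frac{|X|}{K}\right)\right]\le1 .
\]
Since $\Psi_{\beta,\lambda}$ is nondecreasing, continuous, and vanishes at $0$, Lemma~\ref{lem:orlicz-props}(iii) holds for the Luxemburg functional without convexity. It gives $\|X^2\|_{\Psi_{\beta,\lambda}}\le K^2=\|X\|_{\Psi_\beta}^2$. Alternatively, the number $K^2$ belongs to the set whose infimum defines the functional, which gives the same conclusion directly.

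There is no serious obstacle here. The only points needing care are, first, that the definition of $\|X\|_{\Psi_\beta}$ as an infimum actually gives $\EE[\Psi_\beta(|X|/K)]\le1$ at $K$ itself. This follows from continuity and monotone convergence, which is exactly the content of (iii). Second, the degenerate case $K=0$ must be handled separately, as done above.
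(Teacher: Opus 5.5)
Your proof is correct and follows the paper's argument: you derive the pointwise bound $\Psi_{\beta,\lambda}(u^2)\le\Psi_\beta(u)$ from $1+u^2\le(1+u)^2$, take expectations, and conclude via Lemma~\ref{lem:orlicz-props}(iii) or directly from the definition. The only difference is cosmetic: the paper works with an arbitrary $K>\|X\|_{\Psi_\beta}$ and then lets $K\downarrow\|X\|_{\Psi_\beta}$, which sidesteps both the degenerate case $K=0$ and the attainment of the infimum, whereas you handle both explicitly and correctly.
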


\begin{proof}
  For $u\ge0$ and $0<\lambda\le2^{-\beta}$, the inequality
  $1+u^2\le(1+u)^2$ gives $\Psi_{\beta,\lambda}(u^2)\le\Psi_\beta(u)$.
  Thus, for every $K>\|X\|_{\Psi_\beta}$,
  \[
    \EE\!\left[\Psi_{\beta,\lambda}(X^2/K^2)\right]
    \le \EE\!\left[\Psi_\beta(|X|/K)\right]\le1.
  \]
  Lemma~\ref{lem:orlicz-props}(iii) yields
  $\|X^2\|_{\Psi_{\beta,\lambda}}\le K^2$; letting
  $K\downarrow\|X\|_{\Psi_\beta}$ proves the claim.
\end{proof}

\begin{remark}[Convexity after squaring]
  \label{rem:squared-convexity}
  For $\beta>1$, writing $s=\log(1+x)$, the sign of $\Psi_{\beta,c}''(x)$ is
  the sign of $c\beta s^{\beta-1}+(\beta-1)/s-1$, whose minimum is
  $\beta(c\beta)^{1/\beta}-1$. Thus $\Psi_{\beta,c}$ is convex if and only if
  $c\ge\beta^{-(\beta+1)}$. The function $\Psi_\beta$ is therefore convex,
  while $\Psi_{\beta,2^{-\beta}}$ is convex only for $\beta\ge\beta^\star$,
  where $\beta^\star=1.51887\ldots$ solves
  $(\beta+1)\log\beta=\beta\log2$.
  For $1<\beta<\beta^\star$, it is nonconvex in an intermediate region but
  convex near zero and for sufficiently large $x$. By comparison, the
  sub-Weibull function $\psi_\theta(x)=\exp(x^{1/\theta})-1$ is convex for
  $0<\theta\le1$ and concave on $(0,(\theta-1)^\theta)$ for $\theta>1$.
  Squaring replaces $\theta$ by $2\theta$, so in both families squaring can
  require a nonconvex generating function even when the original one is convex.
\end{remark}

This loss of convexity matters because the usual triangle inequality for the
Luxemburg norm requires a Young function. Instead of restricting $\beta$, we
use the fact that a nonconvex function can still be equivalent to a Young
function if it is a weak $\Phi$-function.

\begin{definition}[Weak $\Phi$-function]
  \label{def:weak-phi}
  Following \citet[Definitions 2.1.2 and 2.1.3]{harjulehto2019orlicz}, an
  increasing function $\Psi:[0,\infty)\to[0,\infty)$ with $\Psi(0)=0$,
  $\lim_{x\downarrow0}\Psi(x)=0$, and $\lim_{x\to\infty}\Psi(x)=\infty$ is a
  \emph{weak $\Phi$-function} if $x\mapsto\Psi(x)/x$ is almost increasing on
  $(0,\infty)$. That is, there exists $A\ge1$ such that
  \[
    \frac{\Psi(u)}{u}\le A\frac{\Psi(v)}{v}
    \qquad\text{whenever }0<u\le v.
  \]
  In the terminology of the cited reference, this last condition is
  $(\mathrm{aInc})_1$; ordinary monotonicity is the special case $A=1$.
\end{definition}

We now verify this weaker condition for $\Psi_{\beta,c}$ and invoke the
equivalence theorem to restore the triangle inequality up to a comparison
constant depending only on $\beta$ and $c$.

\begin{lemma}
  \label{lem:convexify}
  Let $\beta>1$ and $c>0$. There exist a Young function $\Phi_{\beta,c}$ and
  a finite constant $g_{\beta,c}\ge1$, depending only on $\beta,c$, such that
  $\Phi_{\beta,c}(x)\le\Psi_{\beta,c}(x)\le\Phi_{\beta,c}(g_{\beta,c}x)$ for all
  $x\ge0$. Consequently, for every random variable $X$,
  $\|X\|_{\Phi_{\beta,c}}\le\|X\|_{\Psi_{\beta,c}}\le g_{\beta,c}\|X\|_{\Phi_{\beta,c}}$,
  and for every $n\ge1$ and random variables $X_1,\dots,X_n$,
  \begin{equation}
    \left\|\sum_{i\in[n]} X_i\right\|_{\Psi_{\beta,c}}
    \le g_{\beta,c}\sum_{i\in[n]}\|X_i\|_{\Psi_{\beta,c}}.
    \label{eq:quasi-triangle}
  \end{equation}
  If $\Psi_{\beta,c}$ is convex,
  one may take $\Phi_{\beta,c}=\Psi_{\beta,c}$ and $g_{\beta,c}=1$.
\end{lemma}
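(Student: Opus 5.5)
Following the strategy announced before Definition~\ref{def:weak-phi}, I will show that $\Psi_{\beta,c}$ is a weak $\Phi$-function, build a convex minorant explicitly, and then transfer the triangle inequality. First, the qualitative requirements: $\Psi_{\beta,c}$ is increasing, continuous, vanishes at $0$ and tends to infinity. For $(\mathrm{aInc})_1$ I will study $h(x):=\Psi_{\beta,c}(x)/x$. Near $0$, $\log^\beta(1+x)\sim x^\beta$, so $h(x)\sim c x^{\beta-1}\to0$. As $x\to\infty$, $h(x)\to\infty$, since $c\log^\beta(1+x)$ eventually dominates $\log x$ because $\beta>1$. Since $h$ is continuous and positive on $(0,\infty)$ with these limits, it is almost increasing. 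Concretely, pick $x_1$ such that $h$ is increasing on $[x_1,\infty)$; this follows from $\Psi_{\beta,c}$ being convex for large $x$, as in Remark~\ref{rem:squared-convexity}, where the sign expression $c\beta s^{\beta-1}+(\beta-1)/s-1$ is positive for large $s$. Then set
\[
  A:=\frac{\sup_{(0,x_1]}h}{\inf_{[u,\infty)}h},
\]
and handle small $u$ via $h\to0$. More cleanly, I will take $A:=\sup_{0<u\le v}h(u)/h(v)$ and show it is finite by splitting into the cases $v\le x_1$ and $v\ge x_1$, using compactness and limits. This is the main obstacle: making the bookkeeping near $0$ correct. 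When $v$ is small and $u\le v$, the ratio $h(u)/h(v)\approx (u/v)^{\beta-1}\le 1$ asymptotically, so a uniform bound near $0$ requires the expansion $h(x)=cx^{\beta-1}(1+o(1))$.

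Next, convexification. I will define
\[
  \Phi_{\beta,c}(x):=\int_0^x \inf_{s\ge t}h(s)\,dt.
\]
Its integrand is nondecreasing, so $\Phi_{\beta,c}$ is convex, nondecreasing, and vanishes at $0$, hence a Young function. The infimum satisfies $\inf_{s\ge t}h(s)\le h(t)$, and $h$ lies between $h/A$ and $h$ after replacing it with its monotone envelope. This gives $\Phi_{\beta,c}(x)\le x\,h(x)=\Psi_{\beta,c}(x)$. For the other direction,
\[
  \Phi_{\beta,c}(2x)\ge\int_x^{2x}\inf_{s\ge t}h\,dt\ge x\,h(x)/A=\Psi_{\beta,c}(x)/A.
\]
To absorb the factor $A$ inside the argument, I will use convexity together with $\Phi(0)=0$: $\Phi(Ay)\ge A\Phi(y)$ for $A\ge1$. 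Hence $\Psi_{\beta,c}(x)\le\Phi_{\beta,c}(2Ax)$, so $g_{\beta,c}:=2A$. This is essentially the equivalence theorem of \citet{harjulehto2019orlicz} for weak $\Phi$-functions, which I could alternatively cite directly.

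Norm comparisons then follow from the definition of the Luxemburg functional. $\Phi\le\Psi$ gives $\|X\|_\Phi\le\|X\|_\Psi$. Also, $\Psi(|X|/(g\lambda))\le\Phi(|X|/\lambda)$ gives $\|X\|_\Psi\le g\|X\|_\Phi$; here Lemma~\ref{lem:orlicz-props}(iii) applies since both functions are continuous. Finally, \eqref{eq:quasi-triangle} follows from
\[
  \Big\|\sum X_i\Big\|_\Psi\le g\Big\|\sum X_i\Big\|_\Phi\le g\sum\|X_i\|_\Phi\le g\sum\|X_i\|_\Psi,
\]
using Lemma~\ref{lem:orlicz-props}(ii) for the Young function $\Phi$. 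In the convex case, take $\Phi=\Psi$ and $g=1$.
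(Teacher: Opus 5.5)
Your proof is correct, but it reaches the Young function by a different route from the paper. The paper verifies $(\mathrm{aInc})_1$ through the limits $x\Psi_{\beta,c}'(x)/\Psi_{\beta,c}(x)\to\beta$ as $x\downarrow0$ and $\to\infty$ as $x\to\infty$. This gives genuine monotonicity of $\Psi_{\beta,c}(x)/x$ on both tails, and compactness handles the middle. The paper then cites Harjulehto--H\"ast\"o, Theorem~2.2.3, to obtain some strong $\Phi$-function $\Phi_0$ with $\Phi_0(x/L)\le\Psi_{\beta,c}(x)\le\Phi_0(Lx)$, and sets $\Phi_{\beta,c}=\Phi_0(\cdot/L)$, $g_{\beta,c}=L^2$.

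You instead build the minorant explicitly as $\Phi_{\beta,c}(x)=\int_0^x\tilde h(t)\,dt$ with $\tilde h(t)=\inf_{s\ge t}h(s)$. Since $\tilde h$ is nondecreasing and $h/A\le\tilde h\le h$, you get $\Phi_{\beta,c}(x)\le x\tilde h(x)\le\Psi_{\beta,c}(x)$ and $\Psi_{\beta,c}(x)\le A\,\Phi_{\beta,c}(2x)\le\Phi_{\beta,c}(2Ax)$. This is essentially the proof of the cited theorem. Your version is therefore self-contained and gives the explicit constant $g_{\beta,c}=2A$ in terms of the almost-increasing constant. The paper's version is shorter but leaves $g_{\beta,c}$ implicit.

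Three small points in your write-up need repair:
\begin{itemize}
\item Your sentence claiming that continuity, positivity and the two limits of $h$ already give almost-increase is false in general. For example, $h(x)=x^{1+\frac12\sin(1/x)}$ tends to $0$, yet $\sup_{u\le v}h(u)/h(v)=\infty$. So the expansion $h(x)=c\,x^{\beta-1}(1+o(1))$ that you invoke afterwards, or the paper's logarithmic-derivative limit, is genuinely needed.
\item The step $\Phi_{\beta,c}(x)\le\Psi_{\beta,c}(x)$ uses the monotonicity of $\tilde h$, not only $\tilde h\le h$.
\item Eventual convexity of $\Psi_{\beta,c}$ gives eventual monotonicity of $h$ only after a short argument, for example $x\Psi'(x)-\Psi(x)\ge a\Psi'(x)-\Psi(a)\to\infty$. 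The paper's computation gives this directly.
\end{itemize}
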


\begin{proof}
  Write $\Psi=\Psi_{\beta,c}$. By
  \citet[Theorem 2.2.3]{harjulehto2019orlicz}, every weak $\Phi$-function is
  equivalent to a strong $\Phi$-function. Thus it suffices first to verify
  that $\Psi$ is a weak $\Phi$-function in the sense of
  Definition~\ref{def:weak-phi}. As observed above, $\Psi$ is increasing and
  continuous, $\Psi(0)=0$,
  $\Psi(x)\to0$ as $x\downarrow0$, and $\Psi(x)\to\infty$ as $x\to\infty$.
  It remains to check that $m(x):=\Psi(x)/x$ is almost increasing. Direct
  calculation gives
  \[
    \frac{x\Psi'(x)}{\Psi(x)}
    = \frac{x}{1+x}\,
    \frac{c\beta\log^{\beta-1}(1+x)}{1-\exp(-c\log^\beta(1+x))}
    \longrightarrow
    \begin{cases}
      \beta  & \text{as } x\downarrow0, \\
      \infty & \text{as } x\to\infty.
    \end{cases}
  \]
  Indeed,
  \[
    m'(x)=\frac{\Psi(x)}{x^2}
    \left(\frac{x\Psi'(x)}{\Psi(x)}-1\right).
  \]
  Since $\beta>1$, the displayed limits show that $m'(x)>0$ near zero and for
  all sufficiently large $x$. Thus $m$ is increasing on both tails. On the
  compact interval between these tails, $m$ is positive and continuous, so
  its maximum-to-minimum ratio is finite. Combining the three regions gives
  a constant $A\ge1$ such that $m(u)\le A m(v)$ whenever $0<u\le v$.
  Hence $m$ is almost increasing, and $\Psi$ is a weak $\Phi$-function.

  The cited theorem therefore provides a strong $\Phi$-function $\Phi_0$ (in
  particular, a Young function in the sense of Definition~\ref{def:young})
  and $L\ge1$ such that
  $\Phi_0(x/L)\le\Psi(x)\le\Phi_0(Lx)$. Taking
  $\Phi_{\beta,c}(x):=\Phi_0(x/L)$ and $g_{\beta,c}:=L^2$ gives the claimed
  comparison. The definition of the Luxemburg functional then gives the
  functional bounds, and
  \[
    \left\|\sum_{i\in[n]}X_i\right\|_{\Psi_{\beta,c}}
    \le g_{\beta,c}\left\|\sum_{i\in[n]}X_i\right\|_{\Phi_{\beta,c}}
    \le g_{\beta,c}\sum_{i\in[n]}\|X_i\|_{\Phi_{\beta,c}}
    \le g_{\beta,c}\sum_{i\in[n]}\|X_i\|_{\Psi_{\beta,c}}.
  \]
\end{proof}

Having recovered control of sums, we record the elementary one-variable tail
bound. This estimate uses only the monotonicity of $\Psi_{\beta,c}$ and
therefore remains valid even when $\Psi_{\beta,c}$ is nonconvex.

\begin{proposition}
  \label{thm:tail}
  Let $\beta>1$ and $c>0$. For a random variable $X$, we have
  \[
    \PP(|X|>t) \le 2\overline{\Psi}_{\beta,c}\!\left(\frac{t}{\|X\|_{\Psi_{\beta,c}}}\right)
    = 2\exp\!\left(-c\log^\beta\!\left(\frac{t}{\|X\|_{\Psi_{\beta,c}}}+1\right)\right).
  \]
\end{proposition}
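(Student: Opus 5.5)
This is a Markov-inequality argument applied to the generating function $\Psi_{\beta,c}$, using only its monotonicity and continuity, so it holds even when $\Psi_{\beta,c}$ is nonconvex. Write $K:=\|X\|_{\Psi_{\beta,c}}$. If $K=\infty$ the right-hand side equals $2$ (reading $t/\infty=0$), so the bound is trivial. If $K=0$, then $\EE[\Psi_{\beta,c}(|X|/\lambda)]\le1$ for every $\lambda>0$, which forces $X=0$ almost surely; the left side is then $0$ for $t>0$ and the bound is trivial. So assume $0<K<\infty$.

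\textbf{Step 1: the defining inequality holds at $\lambda=K$.} Since $\Psi_{\beta,c}$ is continuous, Lemma~\ref{lem:orlicz-props}(iii) gives $\EE[\Psi_{\beta,c}(|X|/K)]\le1$. The cited property is stated for Young functions, but its proof uses only monotonicity and continuity. Concretely, take $\lambda_k\downarrow K$. Then $\Psi_{\beta,c}(|X|/\lambda_k)$ increases to $\Psi_{\beta,c}(|X|/K)$ pointwise by continuity, so monotone convergence passes the bound $\le1$ to the limit. Adding $1$ to both sides yields
\[
  \EE\left[\exp\left(c\log^\beta(|X|/K+1)\right)\right]\le 2 .
\]

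\textbf{Step 2: Markov's inequality.} Fix $t>0$. On the event $\{|X|>t\}$, monotonicity of $x\mapsto\exp(c\log^\beta(x+1))$ gives $\exp(c\log^\beta(|X|/K+1))\ge\exp(c\log^\beta(t/K+1))$. Markov's inequality applied to this positive random variable then gives
\[
  \PP(|X|>t)\le \frac{2}{\exp(c\log^\beta(t/K+1))}=2\overline{\Psi}_{\beta,c}(t/K),
\]
which is exactly the claimed bound.

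The only delicate point is Step 1: justifying that the infimum in the Luxemburg functional is attained without invoking convexity. Monotone convergence together with continuity of $\Psi_{\beta,c}$ handles this. The rest is routine.
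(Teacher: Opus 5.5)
Your proposal is correct and follows the paper's proof. Both apply Markov's inequality to $\Psi_{\beta,c}(|X|/\|X\|_{\Psi_{\beta,c}})+1$, whose expectation is at most $2$, using only the monotonicity of $\Psi_{\beta,c}$. Your extra handling of $K\in\{0,\infty\}$ and your monotone-convergence argument for attainment of the infimum are sound, though the latter is already covered by Lemma~\ref{lem:orlicz-props}(iii), which the paper states for continuous nondecreasing $\Psi$ without assuming convexity (not only for Young functions, as you suggest).
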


\begin{proof}
  Write $\kappa := \|X\|_{\Psi_{\beta,c}}$. $\EE[\Psi_{\beta,c}(|X|/\kappa)+1] \le 2$, and
  $\Psi_{\beta,c}$ is increasing on $[0,\infty)$, so Markov's inequality gives
  \begin{align*}
    \PP(|X|>t) & = \PP\!\left(\Psi_{\beta,c}\!\left(\frac{|X|}{\kappa}\right)+1
    > \Psi_{\beta,c}\!\left(\frac{t}{\kappa}\right)+1\right)                    \\
    & \le \frac{2}{\Psi_{\beta,c}(t/\kappa)+1}
    = 2\overline{\Psi}_{\beta,c}\!\left(\frac{t}{\kappa}\right).
  \end{align*}
\end{proof}

Combining this tail estimate with the quasi-triangle inequality from
Lemma~\ref{lem:convexify} gives the desired bound for sums. Crucially, the
comparison constant below depends only on $\beta$ and $c$, not on the number
or distributions of the summands.

\begin{corollary}
  \label{thm:sum-beta-heavy}
  Let $\beta>1$, $c>0$, and let $X_1,\dots,X_n$ be random variables with
  $\|X_i\|_{\Psi_{\beta,c}}\le K_i$ for constants $K_i>0$. Then for all $x\ge0$,
  \begin{equation}
    \PP\!\left(\left|\sum_{i\in[n]} X_i\right| \ge x\right)
    \le 2\overline{\Psi}_{\beta,c}\!\left(\frac{x}{g_{\beta,c}\sum_{i\in[n]} K_i}\right).
    \label{eq:main}
  \end{equation}
\end{corollary}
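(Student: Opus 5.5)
The bound follows by chaining the quasi-triangle inequality of Lemma~\ref{lem:convexify} with the one-variable tail bound of Proposition~\ref{thm:tail}. Set $S:=\sum_{i\in[n]}X_i$ and $M:=g_{\beta,c}\sum_{i\in[n]}K_i$. The first step is to bound the Luxemburg functional of the sum. By \eqref{eq:quasi-triangle} together with $\|X_i\|_{\Psi_{\beta,c}}\le K_i$ we get
\[
  \|S\|_{\Psi_{\beta,c}} \le g_{\beta,c}\sum_{i\in[n]}\|X_i\|_{\Psi_{\beta,c}} \le M .
\]
This step does not require convexity of $\Psi_{\beta,c}$, since Lemma~\ref{lem:convexify} covers every $\beta>1$ and $c>0$.

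The second step is the tail bound. Put $\kappa:=\|S\|_{\Psi_{\beta,c}}$ and suppose first that $0<\kappa$. Proposition~\ref{thm:tail} gives $\PP(|S|>t)\le2\overline{\Psi}_{\beta,c}(t/\kappa)$. Because $\overline{\Psi}_{\beta,c}$ is nonincreasing and $\kappa\le M$, we have $t/\kappa\ge t/M$, and therefore $\PP(|S|>t)\le2\overline{\Psi}_{\beta,c}(t/M)$.

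The only delicate point is that \eqref{eq:main} uses the non-strict event $\{|S|\ge x\}$. For $x=0$ the right-hand side equals $2$, so the claim is trivial. For $x>0$, take any $t<x$. Then $\PP(|S|\ge x)\le\PP(|S|>t)\le2\overline{\Psi}_{\beta,c}(t/M)$. Letting $t\uparrow x$ and using continuity of $\overline{\Psi}_{\beta,c}$ gives the bound at $x$.

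It remains to handle $\kappa=0$, where Proposition~\ref{thm:tail} is stated with a division by zero. In that case $S=0$ almost surely. One way to see this: $\EE[\Psi_{\beta,c}(|S|/\lambda)]\le1$ holds for arbitrarily small $\lambda>0$, while $\Psi_{\beta,c}(\cdot)\to\infty$ on the event $\{S\neq0\}$. So $\PP(|S|\ge x)=0$ for every $x>0$, and the bound holds trivially. Alternatively, one can run the Markov argument from the proof of Proposition~\ref{thm:tail} directly with $\lambda=M$ in place of $\kappa$. That requires $\EE[\Psi_{\beta,c}(|S|/M)]\le1$, which follows from $\|S\|_{\Psi_{\beta,c}}\le M$ by Lemma~\ref{lem:orlicz-props}(iii) since $\Psi_{\beta,c}$ is continuous.

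I expect no real obstacle here. The substantive work, recovering a triangle inequality for the possibly nonconvex $\Psi_{\beta,c}$, is already done in Lemma~\ref{lem:convexify}.
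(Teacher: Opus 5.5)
Your proof is correct and follows the paper's argument. The paper bounds $\|\sum_{i\in[n]}X_i\|_{\Psi_{\beta,c}}$ by \eqref{eq:quasi-triangle} and then applies Markov's inequality to $1+\Psi_{\beta,c}(|\sum_{i\in[n]}X_i|/(g_{\beta,c}\sum_{i\in[n]}K_i))$, which is exactly your ``alternative'' route; this direct version makes your limiting argument and your separate $\kappa=0$ case unnecessary, because $\Psi_{\beta,c}$ is strictly increasing and so the event $\{|S|\ge x\}$ coincides with $\{1+\Psi_{\beta,c}(|S|/M)\ge 1+\Psi_{\beta,c}(x/M)\}$.
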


\begin{proof}
  By \eqref{eq:quasi-triangle},
  $\|\sum_{i\in[n]}X_i\|_{\Psi_{\beta,c}}\le g_{\beta,c}\sum_{i\in[n]}K_i$.
  Markov's inequality applied to
  $1+\Psi_{\beta,c}(|\sum_{i\in[n]}X_i|/(g_{\beta,c}\sum_{i\in[n]}K_i))$ gives the claim.
\end{proof}

We fix $\Phi_{\beta,c}$ and $g_{\beta,c}$ as in Lemma~\ref{lem:convexify}, and
write $\Phi_\beta:=\Phi_{\beta,2^{-\beta}}$ and $g_\beta:=g_{\beta,2^{-\beta}}$.
In particular, $g_\beta$ is finite for every $\beta>1$ and depends only on $\beta$.

We record the form in which Corollary~\ref{thm:sum-beta-heavy} is used in the
proof of Theorem~\ref{thm:main1}. Solving
$2\overline{\Psi}_{\beta,c}(x/(g_{\beta,c}\sum_{i\in[n]} K_i)) = \delta$ for $x$ gives,
with probability at least $1-\delta$,
\begin{equation}
  \left|\sum_{i\in[n]} X_i\right|
  \le g_{\beta,c}\left(\exp\!\left(\left(\tfrac1c\log(2/\delta)\right)^{1/\beta}\right)-1\right)
  \sum_{i\in[n]} K_i .
  \label{eq:sum-hp}
\end{equation}
In particular, for $c = 2^{-\beta}$ the exponent becomes
$(2^\beta\log(2/\delta))^{1/\beta} = 2\log^{1/\beta}(2/\delta)$, so the
prefactor is $g_\beta(\exp(2\log^{1/\beta}(2/\delta))-1)$. This factor $2$,
inherited from the weakening of the Young function from $\Psi_\beta$ to
$\Psi_{\beta,2^{-\beta}}$ when squaring in Proposition~\ref{thm:square}, is the source of the constant $2$ in
the exponent of Theorem~\ref{thm:main1}. The constant $g_\beta$ depends only on
$\beta$.
\section{Inequalities for Sequences of $\beta$-Heavy-Tailed Random Variables}
\label{sec:beta-heavy-ineq}

We next give a concentration inequality for $\beta$-heavy-tailed martingale
difference sequences in $\mathbb{R}^d$, which is the form in which the
gradient noise enters. It shows that
$\PP(\max_{n\in[N]} \|\sum_{i\in[n]} X_i\| \ge z)$ can be bounded by a
sub-Gaussian term and a $\beta$-heavy tail term with an arbitrary threshold
$M$.

\begin{proposition}[$\beta$-heavy tail MDS concentration inequality]
  \label{thm:max}
  Let $\beta>1$ and let $\{X_i\}_{i\in[N]}$ be an $\mathbb{R}^d$-valued
  $(\cF_i)$-MDS. Suppose
  $\sigma_i := \left\|\,\|X_i\|\,\right\|_{\Psi_\beta} < \infty$ for all
  $i\in[N]$, and suppose $M>0$ satisfies
  \begin{equation}
    \beta\log^{\beta-1}(M+1) -2 > 0.
    \label{eq:Mcond}
  \end{equation}
  Then for any $z>0$,
  \begin{align}
    \PP\!\left(\max_{n\in[N]} \left\|\sum_{i\in[n]} X_i\right\| \ge z\right)
    \le & 2\exp\!\left(-\frac{z^2}{32M^2\sum_{i\in[N]} \sigma_i^2}\right) \notag \\
    & + \left(2M^2 + \frac{4(M+1)^2}{\beta\log^{\beta-1}(M+1)-2}\right)
    \frac{4\sum_{i\in[N]} \sigma_i^2}{z^2}\overline{\Psi}_\beta(M).
    \label{eq:master}
  \end{align}
\end{proposition}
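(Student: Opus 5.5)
Our plan is the standard truncation argument: split each difference into a bounded part, handled by Pinelis' inequality (Lemma~\ref{thm:pinelis}), and a heavy-tailed remainder, handled by Doob's inequality (Lemma~\ref{thm:doob}). The remainder bound will be computed through the layer-cake formula (Lemma~\ref{thm:layercake}).

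\textbf{Step 1: truncation.} For each $i$ set
\[
Y_i := X_i\ind{\|X_i\|\le M\sigma_i}-\EE[X_i\ind{\|X_i\|\le M\sigma_i}\mid\cF_{i-1}],
\qquad
Z_i := X_i\ind{\|X_i\|> M\sigma_i}-\EE[X_i\ind{\|X_i\|> M\sigma_i}\mid\cF_{i-1}].
\]
Both are MDSs, and since $\EE[X_i\mid\cF_{i-1}]=0$ we have $X_i=Y_i+Z_i$. The bounded part satisfies $\|Y_i\|\le 2M\sigma_i$. Splitting the event at $z/2$ gives
\[
\PP\Bigl(\max_n\|\textstyle\sum_{i\le n}X_i\|\ge z\Bigr)\le \PP\Bigl(\max_n\|\textstyle\sum_{i\le n}Y_i\|\ge z/2\Bigr)+\PP\Bigl(\max_n\|\textstyle\sum_{i\le n}Z_i\|\ge z/2\Bigr).
\]

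\textbf{Step 2: the bounded part.} Apply Lemma~\ref{thm:pinelis} with $b_i=2M\sigma_i$ and threshold $z/2$. This gives $2\exp(-z^2/(32M^2\sum\sigma_i^2))$, which is exactly the first term of \eqref{eq:master}.

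\textbf{Step 3: the remainder.} Apply Lemma~\ref{thm:doob} to the martingale $\sum Z_i$ with threshold $z/2$. Orthogonality of martingale increments gives $\EE\|\sum Z_i\|^2=\sum\EE\|Z_i\|^2$. Since subtracting a conditional mean does not increase the conditional second moment, $\EE\|Z_i\|^2\le \EE[\|X_i\|^2\ind{\|X_i\|>M\sigma_i}]$. Hence the remainder probability is at most $(4/z^2)\sum_i\EE[\|X_i\|^2\ind{\|X_i\|>M\sigma_i}]$.

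\textbf{Step 4: the truncated second moment (main obstacle).} Set $W=\|X_i\|/\sigma_i$. We need
\[
\EE[W^2\ind{W>M}]\le \Bigl(2M^2+\tfrac{4(M+1)^2}{\beta\log^{\beta-1}(M+1)-2}\Bigr)\overline{\Psi}_\beta(M).
\]
Write
\[
\EE[W^2\ind{W>M}]=M^2\PP(W>M)+\int_M^\infty 2w\,\PP(W>w)\,dw,
\]
the layer-cake formula applied on $(M,\infty)$. Proposition~\ref{thm:tail} with $c=1$ gives $\PP(W>w)\le2\overline{\Psi}_\beta(w)$, so the first term is at most $2M^2\overline{\Psi}_\beta(M)$.

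For the integral, substitute $u=\log(w+1)$. Then $w\,dw\le(w+1)\,dw=e^{2u}\,du$, so
\[
\int_M^\infty 2w\,\PP(W>w)\,dw\le 4\int_{\log(M+1)}^\infty e^{2u-u^\beta}\,du.
\]
The exponent $h(u)=2u-u^\beta$ has derivative $-(\beta u^{\beta-1}-2)$. This is nonincreasing in $u$, and on $[\log(M+1),\infty)$ it is at most $-(\beta\log^{\beta-1}(M+1)-2)<0$ by \eqref{eq:Mcond}. Comparing with the exponential integral therefore gives
\[
\int_{\log(M+1)}^\infty e^{h(u)}\,du\le \frac{e^{h(\log(M+1))}}{\beta\log^{\beta-1}(M+1)-2}=\frac{(M+1)^2\overline{\Psi}_\beta(M)}{\beta\log^{\beta-1}(M+1)-2}.
\]
This yields the constant $4(M+1)^2/(\beta\log^{\beta-1}(M+1)-2)$. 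Multiplying by $\sigma_i^2$, summing over $i$, and combining with Step 2 gives \eqref{eq:master}.

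The delicate point is matching the stated constants: the split at $z/2$, the factor $2$ from the tail bound, and the bound $w\le w+1$. If a constant does not match, the fallback is to adjust the Doob threshold split or use $\PP(W>w)$ directly via Markov's inequality on $\Psi_\beta(W)+1$.
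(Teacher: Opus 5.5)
Your proposal is correct and follows the paper's proof essentially step for step. It uses the same truncation of each $X_i$ at $M\sigma_i$, Pinelis' inequality with $b_i=2M\sigma_i$ for the bounded part, and Doob's inequality plus the layer-cake identity and the Laplace-type bound $\int_{\log(M+1)}^\infty e^{2u-u^\beta}\,du\le (M+1)^2\overline{\Psi}_\beta(M)/(\beta\log^{\beta-1}(M+1)-2)$ for the remainder. Your split at $z/2$ is the paper's split at $2z$ followed by rescaling, and since your constants already match the statement exactly, the fallback you mention is unnecessary.
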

The following corollary, obtained from Proposition~\ref{thm:max}, gives a high-probability bound for the $\beta$-heavy-tailed MDS
only dominated by $\overline{\Psi}_\beta$, in a form more convenient for use.

\begin{corollary}
  \label{cor:delta}
  Let $\beta>1$, let $\{X_i\}_{i\in[N]}$ be an $\mathbb{R}^d$-valued MDS with
  $\sigma_i = \left\|\,\|X_i\|\,\right\|_{\Psi_\beta} < \infty$, and set
  \begin{equation}
    m_\beta := \exp\!\left(\left(\tfrac3\beta\right)^{\frac{1}{\beta-1}}\right)-1,
    \quad
    D_\beta := 2 + 4\left(1+\tfrac1{m_\beta}\right)^2,
    \quad
    C_\beta := \max\{1,m_\beta\}\max\!\left\{8,\ 2\sqrt{D_\beta/\log2}\right\}.
    \label{eq:Cbeta}
  \end{equation}
  Then for any $\delta \in (0,1]$, with probability at least $1-\delta$,
  \begin{equation}
    \max_{n\in[N]} \left\|\sum_{i\in[n]} X_i\right\|
    \le C_\beta\sqrt{\log(2/\delta) \sum_{i\in[N]} \sigma_i^2} \cdot \Psi_\beta^{-1}(2/\delta).
    \label{eq:delta}
  \end{equation}
\end{corollary}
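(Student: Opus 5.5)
We derive Corollary~\ref{cor:delta} directly from Proposition~\ref{thm:max}. We pick the threshold $M$ and the level $z$ so that each of the two terms on the right of \eqref{eq:master} is at most $\delta/2$. Write $V:=\sum_{i\in[N]}\sigma_i^2$. The natural choice of threshold is
\[
  M := \max\{m_\beta,\ \Psi_\beta^{-1}(2/\delta)\},
\]
which is the level at which $\overline{\Psi}_\beta(M)=1/(\Psi_\beta(M)+1)$ is at most $\delta/(2+\delta)\le\delta/2$.

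\textbf{Step 1: condition \eqref{eq:Mcond} and the heavy-tail prefactor.} At $M=m_\beta$ we have $\log^{\beta-1}(m_\beta+1)=3/\beta$, so $\beta\log^{\beta-1}(M+1)-2\ge 1$ for every $M\ge m_\beta$. This verifies \eqref{eq:Mcond}, and it bounds
\[
  \frac{4(M+1)^2}{\beta\log^{\beta-1}(M+1)-2}\le 4(M+1)^2\le 4(1+1/m_\beta)^2M^2 ,
\]
where the last step uses $M\ge m_\beta$. Hence the bracket in \eqref{eq:master} is at most $D_\beta M^2$, and the second term is at most $4D_\beta M^2 V\,\overline{\Psi}_\beta(M)/z^2\le 2D_\beta M^2V\delta/z^2$.

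\textbf{Step 2: choice of $z$.} Set $z:=C_\beta\sqrt{\log(2/\delta)V}\,\Psi_\beta^{-1}(2/\delta)$. We first compare $M$ with $\Psi_\beta^{-1}(2/\delta)$. Since $\delta\le1$ gives $2/\delta\ge2$, we have $\Psi_\beta^{-1}(2/\delta)\ge\Psi_\beta^{-1}(2)$, and $\Psi_\beta^{-1}(2)=e^{\log^{1/\beta}3}-1\ge e-1>1$. Therefore
\[
  M\le\max\{1,m_\beta\}\,\Psi_\beta^{-1}(2/\delta),
\]
because either $M=\Psi_\beta^{-1}(2/\delta)$, or $M=m_\beta\le m_\beta\Psi_\beta^{-1}(2/\delta)$.

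\begin{itemize}
  \item \emph{Sub-Gaussian term.} From the comparison above, $z\ge 8M\sqrt{\log(2/\delta)V}$, so $z^2/(32M^2V)\ge 2\log(2/\delta)$. The first term is then at most $2(\delta/2)^2\le\delta/2$.
  \item \emph{Heavy-tail term.} Also $z^2\ge 4D_\beta M^2V\log(2/\delta)/\log2\ge 4D_\beta M^2V$, using $\log(2/\delta)\ge\log2$. The second term is then at most $2D_\beta M^2V\delta/(4D_\beta M^2V)=\delta/2$.
\end{itemize}

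Summing the two terms, $\PP(\max_n\|\sum_{i\le n}X_i\|\ge z)\le\delta$, which is \eqref{eq:delta}.

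\textbf{Main obstacle.} The delicate point is the interplay between the two roles of $M$. It must be large enough, namely $\ge m_\beta$, for \eqref{eq:Mcond} to hold with slack, and it must also be large enough to make $\overline{\Psi}_\beta(M)\le\delta/2$. At the same time, the factor $M^2$ appears in both terms and must be absorbed into $z^2$ through $C_\beta$. This is why $C_\beta$ carries the factor $\max\{1,m_\beta\}$ and the lower bound $\Psi_\beta^{-1}(2/\delta)>1$ is needed.

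The degenerate case $V=0$ is trivial: then all $X_i=0$ almost surely, and the bound holds.
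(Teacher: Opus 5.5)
Your proposal is correct and takes essentially the paper's route. You apply Proposition~\ref{thm:max} with $M$ the larger of $m_\beta$ and a $\delta/2$-quantile of $\overline{\Psi}_\beta$, bound the bracket by $D_\beta M^2$, choose $z$ through $\max\{8,2\sqrt{D_\beta/\log2}\}$, and absorb $M\le\max\{1,m_\beta\}\Psi_\beta^{-1}(2/\delta)$ into $C_\beta$. The differences are cosmetic: you use $\Psi_\beta^{-1}(2/\delta)$ in place of $M_\delta=\exp(\log^{1/\beta}(2/\delta))-1$, bound the bracket term by term rather than through monotonicity of the ratio, and treat $V=0$ separately, a case the paper leaves implicit.
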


The threshold $M$ of Proposition~\ref{thm:max} is placed at the quantile of
the $\beta$-heavy tail at level $\delta$, and is raised to $m_\beta$ whenever
that quantile is too small for condition \eqref{eq:Mcond}; the level $m_\beta$
is the one at which $\beta\log^{\beta-1}(m_\beta+1) = 3$. The constant
$C_\beta$ is finite for every $\beta>1$ and depends only on $\beta$, through
$m_\beta$.

We first establish the following
technical integral estimate.

\begin{lemma}
  \label{thm:integral}
  Let $\beta>1$ and suppose $M>0$ satisfies \eqref{eq:Mcond}. Then
  \begin{equation}
    \int_M^\infty t\,\overline{\Psi}_\beta(t)\, dt
    \le \frac{(M+1)^2\,\overline{\Psi}_\beta(M)}{\beta\log^{\beta-1}(M+1)-2}.
    \label{eq:integral}
  \end{equation}
\end{lemma}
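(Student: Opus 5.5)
Write $\overline{\Psi}_\beta(t)=\exp(-\log^\beta(t+1))$ and integrate by parts against its derivative. The condition \eqref{eq:Mcond} says exactly that the decay rate of $\overline{\Psi}_\beta$ beats the growth of $t(t+1)$ by a definite margin. Concretely,
\[
-\frac{d}{dt}\overline{\Psi}_\beta(t) = \frac{\beta\log^{\beta-1}(t+1)}{t+1}\,\overline{\Psi}_\beta(t).
\]
The natural antiderivative candidate is $F(t):=(t+1)^2\overline{\Psi}_\beta(t)$. We compute
\[
F'(t) = \overline{\Psi}_\beta(t)\,(t+1)\bigl(2-\beta\log^{\beta-1}(t+1)\bigr).
\]
For $t\ge M$, monotonicity of $t\mapsto\log^{\beta-1}(t+1)$ (valid since $\beta>1$) gives $\beta\log^{\beta-1}(t+1)-2\ge a:=\beta\log^{\beta-1}(M+1)-2>0$. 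Hence $-F'(t)\ge a\,(t+1)\,\overline{\Psi}_\beta(t)\ge a\,t\,\overline{\Psi}_\beta(t)$.

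Integrating from $M$ to $R$ gives
\[
a\int_M^R t\,\overline{\Psi}_\beta(t)\,dt\le F(M)-F(R)\le F(M)=(M+1)^2\overline{\Psi}_\beta(M).
\]
We used $F\ge0$ here, so we do not even need the limit of $F(R)$, though $F(R)\to0$ anyway because $\overline{\Psi}_\beta$ decays faster than any polynomial. Letting $R\to\infty$ by monotone convergence and dividing by $a$ yields \eqref{eq:integral}.

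The only step requiring care is choosing the comparison function $F$ so that its derivative produces the factor $\beta\log^{\beta-1}(t+1)-2$ appearing in the denominator. Once $(t+1)^2\overline{\Psi}_\beta$ is chosen, the rest is the monotonicity of $\log^{\beta-1}(t+1)$ together with the trivial bound $t\le t+1$.
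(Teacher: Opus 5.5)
Your proof is correct and is essentially the paper's argument: the paper substitutes $t=e^s-1$, bounds the integrand by $e^{-\phi(s)}$ with $\phi(s)=s^\beta-2s$, and then uses $\phi'(s)\ge\phi'(\log(M+1))>0$ to integrate $\phi'e^{-\phi}$ exactly. This is your comparison function $F(t)=(t+1)^2\overline{\Psi}_\beta(t)=e^{-\phi(\log(t+1))}$, with your $t\le t+1$ playing the role of the paper's $t<e^s$; you simply stay in the original variable instead of passing to $s=\log(t+1)$.
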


\begin{proof}
  Substituting $t = \exp(s)-1$, we have $t < \exp(s)$ and
  $\overline{\Psi}_\beta(t) = \exp(-s^\beta)$, so
  \[
    \int_M^\infty t\,\overline{\Psi}_\beta(t)\, dt
    \le \int_{\log(M+1)}^\infty \exp(2s - s^\beta)\, ds.
  \]
  We bound the right-hand side using Laplace's method. Let $\phi(s) :=
  s^\beta - 2s$. Since $\beta>1$, $\phi'(s) = \beta s^{\beta-1} - 2$ is
  monotonically increasing in $s$, and by \eqref{eq:Mcond},
  $\phi'(\log(M+1))>0$. Hence for $s \ge \log(M+1)$, $\phi'(s) \ge
  \phi'(\log(M+1)) > 0$ and $\lim_{s\to\infty}\phi(s) = \infty$, so
  \begin{align*}
    \int_{\log(M+1)}^\infty \exp(-\phi(s))\, ds
    & = \int_{\log(M+1)}^\infty \frac{\phi'(s)\exp(-\phi(s))}{\phi'(s)}\, ds             \\
    & \le \frac{1}{\phi'(\log(M+1))} \int_{\log(M+1)}^\infty \phi'(s)\exp(-\phi(s))\, ds
    = \frac{\exp(-\phi(\log(M+1)))}{\phi'(\log(M+1))}.
  \end{align*}
  Finally, since
  $\exp(-\phi(\log(M+1))) = \exp(2\log(M+1))\exp(-\log^\beta(M+1)) = (M+1)^2\overline{\Psi}_\beta(M)$
  and $\phi'(\log(M+1)) = \beta\log^{\beta-1}(M+1)-2$, the claim follows.
\end{proof}
We now prove Proposition~\ref{thm:max}.
\begin{proof}[Proof of Proposition~\ref{thm:max}]
  Fix $M>0$ satisfying \eqref{eq:Mcond}, and for each $i\in[N]$ decompose
  \begin{equation}
    \begin{aligned}
      X'_i  & := X_i \ind{\|X_i\|\le M\sigma_i} - \EE[X_i \ind{\|X_i\|\le M\sigma_i} \mid \cF_{i-1}], \\
      X''_i & := X_i \ind{\|X_i\|> M\sigma_i} - \EE[X_i \ind{\|X_i\|>M\sigma_i} \mid \cF_{i-1}].
    \end{aligned}
    \label{eq:splitM}
  \end{equation}
  Both are $\cF_i$-measurable with conditional mean zero, so $\{X'_i\}$ and
  $\{X''_i\}$ are both $\mathbb{R}^d$-valued $(\cF_i)$-MDS. Since $X_i = X'_i +
  X''_i$, the triangle inequality gives
  \begin{equation}
    \PP\!\left(\max_{n\in[N]}\left\|\sum_{i\in[n]} X_i\right\| \ge 2z\right)
    \le \PP\!\left(\max_{n\in[N]}\left\|\sum_{i\in[n]} X'_i\right\| \ge z\right)
    + \PP\!\left(\max_{n\in[N]}\left\|\sum_{i\in[n]} X''_i\right\| \ge z\right),
    \label{eq:union}
  \end{equation}
  so it suffices to bound each term on the right-hand side.

  For the first term, \eqref{eq:splitM} gives $\|X'_i\| \le 2M\sigma_i$ almost
  surely, so Lemma~\ref{thm:pinelis} with $b_i = 2M\sigma_i$ yields
  \begin{equation}
    \PP\!\left(\max_{n\in[N]}\left\|\sum_{i\in[n]} X'_i\right\| \ge z\right)
    \le 2\exp\!\left(-\frac{z^2}{8M^2\sum_{i\in[N]} \sigma_i^2}\right).
    \label{eq:part1}
  \end{equation}

  For the second term, $\{X''_i\}$ is an MDS, so $S''_n := \sum_{i\in[n]}
  X''_i$ is a martingale with $S''_0 = 0$, and Lemma~\ref{thm:doob} together
  with the orthogonality of an MDS gives
  \begin{equation}
    \PP\!\left(\max_{n\in[N]} \|S''_n\| \ge z\right)
    \le \frac{1}{z^2}\EE\left[\|S''_N\|^2\right]
    = \frac{1}{z^2}\sum_{i\in[N]}\EE\left[\|X''_i\|^2\right].
    \label{eq:part2}
  \end{equation}

  We now bound $\EE[\|X''_i\|^2]$. In general, for an $\mathbb{R}^d$-valued
  random variable $W$ and a sub-$\sigma$-algebra $\cG$ we have
  $\EE[\|W-\EE[W\mid\cG]\|^2] = \EE[\|W\|^2] - \EE[\|\EE[W\mid\cG]\|^2] \le
  \EE[\|W\|^2]$, so by Lemma~\ref{thm:layercake} applied to the nonnegative
  random variable $\|X_i\|\ind{\|X_i\|>M\sigma_i}$,
  \begin{align}
    \EE\left[\|X''_i\|^2\right] & \le \EE[\|X_i\|^2 \ind{\|X_i\|>M\sigma_i}]
    = \int_0^\infty 2y\, \PP(\|X_i\|\ind{\|X_i\|>M\sigma_i}>y)\, dy \notag                                                         \\
    & = M^2\sigma_i^2 \PP(\|X_i\|>M\sigma_i) + \int_{M\sigma_i}^\infty 2y\,\PP(\|X_i\|>y)\, dy \notag  \\
    & = M^2\sigma_i^2\PP(\|X_i\|>M\sigma_i) + \sigma_i^2 \int_M^\infty 2t\,\PP(\|X_i\|>t\sigma_i)\, dt
    \label{eq:second}
  \end{align}
  (the last equality using the substitution $y = t\sigma_i$). Applying
  Proposition~\ref{thm:tail} to both terms,
  \[
    \EE\left[\|X''_i\|^2\right] \le \sigma_i^2\left(2M^2\overline{\Psi}_\beta(M) +
    4\int_M^\infty t\overline{\Psi}_\beta(t)\, dt\right),
  \]
  and by Lemma~\ref{thm:integral},
  \begin{equation}
    \EE\left[\|X''_i\|^2\right] \le \left(2M^2 + \frac{4(M+1)^2}{\beta\log^{\beta-1}(M+1)-2}\right)
    \sigma_i^2\,\overline{\Psi}_\beta(M).
    \label{eq:secondfinal}
  \end{equation}
  Substituting \eqref{eq:part1}, \eqref{eq:part2}, \eqref{eq:secondfinal}
  into \eqref{eq:union} yields the claim.
\end{proof}

\begin{proof}[Proof of Corollary~\ref{cor:delta}]
  Write $V := \sum_{i\in[N]}\sigma_i^2$, fix $\delta \in (0,1]$, and set
  \[
    M_\delta := \exp\!\left(\log^{1/\beta}(2/\delta)\right)-1, \qquad
    \widetilde M_\delta := \max\left\{M_\delta,\, m_\beta\right\},
  \]
  so that $\overline{\Psi}_\beta(M_\delta) = \delta/2$. We check the hypotheses of Proposition~\ref{thm:max} at $M = \widetilde
  M_\delta$. By \eqref{eq:Cbeta} we have $\log(m_\beta+1) =
  (3/\beta)^{1/(\beta-1)}$ and hence $\beta\log^{\beta-1}(m_\beta+1) = 3$, so
  the map $M \mapsto \beta\log^{\beta-1}(M+1)-2$, which is increasing, is at
  least $1$ on $[m_\beta,\infty)$, and $\widetilde M_\delta$ satisfies
  \eqref{eq:Mcond}. The same monotonicity, together with the fact that
  $(1+1/M)^2$ is decreasing, shows that
  \[
    \frac{1}{M^2}\left(2M^2 + \frac{4(M+1)^2}{\beta\log^{\beta-1}(M+1)-2}\right)
    = 2 + \frac{4\left(1+1/M\right)^2}{\beta\log^{\beta-1}(M+1)-2}
  \]
  is nonincreasing on $[m_\beta,\infty)$, where it takes the value $D_\beta$ at
  $M = m_\beta$. The constant in the second term of \eqref{eq:master} is
  therefore at most $D_\beta \widetilde M_\delta^2$.

  We now apply \eqref{eq:master} with $M = \widetilde M_\delta$ and
  $z := c\,\widetilde M_\delta\sqrt{V\log(2/\delta)}$, where $c := \max\{8,
  2\sqrt{D_\beta/\log2}\}$. Since $c^2 \ge 64$ and $\delta/2 \le 1/2$, the
  first term is
  \[
    2\exp\!\left(-\frac{z^2}{32\widetilde M_\delta^2 V}\right)
    = 2\left(\frac\delta2\right)^{c^2/32}
    \le 2\left(\frac\delta2\right)^{2} \le \frac\delta2 ,
  \]
  while the second term, using $\widetilde M_\delta \ge M_\delta$ and the
  monotonicity of $\overline{\Psi}_\beta$, is at most
  \[
    D_\beta\widetilde M_\delta^2 \cdot \frac{4V}{z^2}\,\overline{\Psi}_\beta(M_\delta)
    = \frac{4D_\beta}{c^2\log(2/\delta)} \cdot \frac\delta2
    \le \frac{2D_\beta\,\delta}{c^2\log2}
    \le \frac\delta2 ,
  \]
  so that $\PP(\max_{n\in[N]}\|\sum_{i\in[n]} X_i\| \ge z) \le \delta$.

  It remains to rewrite $z$. Since $\beta>1$ and $\delta\le1$ we have
  $\log^{1/\beta}(2/\delta) \ge (\log2)^{1/\beta} > \log 2$, whence $M_\delta > 1$ and
  $\widetilde M_\delta \le \max\{1,m_\beta\}M_\delta$. Since
  $\overline{\Psi}_\beta = 1/(\Psi_\beta+1)$, we have
  $\Psi_\beta(M_\delta) = 2/\delta-1$, so monotonicity of $\Psi_\beta$ gives
  $M_\delta \le \Psi_\beta^{-1}(2/\delta)$, and altogether
  \[
    z \le C_\beta\sqrt{V\log(2/\delta)}\;\Psi_\beta^{-1}(2/\delta),
  \]
  which is the bound \eqref{eq:delta}.
\end{proof}

Proposition~\ref{thm:max} and Corollary~\ref{cor:delta} require deterministic
Orlicz norm bounds $\sigma_i$, which is too restrictive for SGD, because the increment
$\xi_t = -2\eta_t\inner{\epsilon_t}{\nabla \cL_S(\bm{w}_t)}$ has conditional
Orlicz norm bounded by the trajectory-dependent quantity
$2\eta_t\|\nabla \cL_S(\bm{w}_t)\|K$, even though the bound $K$ in
Assumption~\ref{asm:gradnoise} is deterministic.
We therefore need a Freedman-type inequality \citep{freedman1975tail} that
allows predictable bounds $K_{i-1}$ on the conditional Orlicz norms, subject
to deterministic upper bounds, and controls
the tail of the martingale sum through the accumulated variance proxy
$V_k = \sum_{i\in[k]} a_\beta K_{i-1}^2$. Part (i) of
Proposition~\ref{thm:freedman} moreover permits the self-bounding condition
$V_k \le \alpha S_k + \gamma$, the form used to prove
Theorem~\ref{thm:main1}. Throughout, the Orlicz condition (A3) below is read
with the conventions $u/0 := +\infty$ for $u>0$ and $0/0 := 0$, so that on the
event $\{K_{i-1}=0\}$ it forces $\xi_i=0$ almost surely, since any other value makes
the left-hand side infinite. The bound $K_{i-1}$ is thus allowed to
vanish, in which case the corresponding increment vanishes as well.

\begin{proposition}[$\beta$-heavy tail Freedman inequality]
  \label{thm:freedman}
  Let $(\Omega,\cF,(\cF_i)_{i\ge0},\PP)$ be a filtered probability space and
  $N\in\mathbb{N}$, and suppose the $(\cF_i)$-adapted real random variable
  sequences $(\xi_i)_{i\in[N]}$, $(K_i)_{i\ge0}$ satisfy the following almost
  surely for each $i\in[N]$:
  \begin{align*}
    \text{(A1)} \quad & 0 \le K_{i-1} \le m_i \quad (m_i \text{ a deterministic constant}), \quad
    \mbar := \max_{i\in[N]} m_i;                                                                               \\
    \text{(A2)} \quad & \EE[\xi_i \mid \cF_{i-1}] = 0;                                                         \\
    \text{(A3)} \quad & \EE\left[\Psi_\beta\!\left(\frac{|\xi_i|}{K_{i-1}}\right) \mid \cF_{i-1}\right] \le 1.
  \end{align*}
  Fix $\delta\in(0,1)$ and set $\Lambda := \log(N/\delta)$. For notational simplicity we define
  \begin{equation}
    \begin{aligned}
      J_\beta & := \int_0^\infty \exp(2s - s^\beta/2)\, ds, \qquad
      a_\beta := 8\exp(\beta^\beta/2) J_\beta,                     \\
      M       & := \mbar(\exp(\Lambda^{1/\beta})-1), \qquad
      \bF := \frac{2(\exp(\Lambda^{1/\beta})-1)}{\Lambda}.
    \end{aligned}
    \label{eq:freedef}
  \end{equation}
  Let $S_k := \sum_{i\in[k]} \xi_i$ and define the accumulated variance proxy
  $V_k := \sum_{i\in[k]} a_\beta
  K_{i-1}^2$. Then the following hold.

  (i) For any $x,\gamma\ge0$, $\alpha\ge \bF\mbar$, $\lambda\in[0,1/(2\alpha)]$,
  \begin{equation}
    \PP\!\left(\bigcup_{k\in[N]} \{S_k \ge x \text{ and } V_k \le \alpha S_k + \gamma\}\right)
    \le \exp(-\lambda x + 2\lambda^2\gamma) + 2\delta.
    \label{eq:main1}
  \end{equation}

  (ii) For any $x,\gamma\ge0$, $\lambda\in[0,1/(\bF\mbar)]$,
  \begin{equation}
    \PP\!\left(\bigcup_{k\in[N]} \{S_k \ge x \text{ and } V_k \le \gamma\}\right)
    \le \exp(-\lambda x + (\lambda^2/2)\gamma) + 2\delta.
    \label{eq:main2}
  \end{equation}
\end{proposition}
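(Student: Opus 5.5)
Our plan is the usual truncation-plus-exponential-supermartingale argument, with a union bound over the rare large increments. Fix the threshold $M=\mbar(\exp(\Lambda^{1/\beta})-1)$ from \eqref{eq:freedef}. By Proposition~\ref{thm:tail} applied conditionally (with $c=1$ and conditional Orlicz norm at most $K_{i-1}\le\mbar$),
\[
\PP(|\xi_i|>M\mid\cF_{i-1})\le 2\overline{\Psi}_\beta\!\left(\exp(\Lambda^{1/\beta})-1\right)=2\delta/N .
\]
A union bound over $i\in[N]$ then shows that the event $E:=\{\max_i|\xi_i|\le M\}$ fails with probability at most $2\delta$. This accounts for the additive $2\delta$ in both \eqref{eq:main1} and \eqref{eq:main2}.

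On $E$ we have $S_k=\sum_{i\le k}\xi_i\ind{|\xi_i|\le M}$, and we work with the truncated increments $\zeta_i:=\xi_i\ind{|\xi_i|\le M}$. The key step is a conditional mgf bound. We will show that for $0\le\lambda\le 1/(\bF\mbar)$,
\[
\EE[e^{\lambda\zeta_i}\mid\cF_{i-1}]\le\exp\!\left(\tfrac{\lambda^2}{2}a_\beta K_{i-1}^2\right).
\]
To get it, write $e^{y}\le 1+y+\tfrac{y^2}{2}e^{y_+}$. The linear term is handled by $\EE[\zeta_i\mid\cF_{i-1}]=-\EE[\xi_i\ind{|\xi_i|>M}\mid\cF_{i-1}]\le 0$ when $\lambda\ge0$, up to sign bookkeeping; if needed we simply bound its absolute value and absorb it into the quadratic term. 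For the quadratic term we need to control $\EE[\zeta_i^2e^{\lambda|\zeta_i|}\mid\cF_{i-1}]$. Put $u=|\zeta_i|/K_{i-1}\le M/K_{i-1}$. We apply Lemma~\ref{lem:key} with $m=M/K_{i-1}$. Since $\lambda K_{i-1}m=\lambda M\le \Lambda/\bigl(2(\exp(\Lambda^{1/\beta})-1)\bigr)\cdot(\exp(\Lambda^{1/\beta})-1)=\Lambda/2$, and since monotonicity of $m\mapsto \log^\beta(1+m)/m$ on the relevant range gives $\log^\beta(1+m)\ge\Lambda$, we obtain
\[
\lambda|\zeta_i|\le \tfrac12\bigl(\log^\beta(1+u)+\beta^\beta\bigr).
\]
Hence $\zeta_i^2e^{\lambda|\zeta_i|}\le e^{\beta^\beta/2}K_{i-1}^2u^2\exp(\tfrac12\log^\beta(1+u))$.

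We bound the conditional expectation of this last expression by the layer-cake formula (Lemma~\ref{thm:layercake}), using the tail bound $\PP(u>t)\le2\exp(-\log^\beta(1+t))$ from (A3) and Proposition~\ref{thm:tail}. After the substitution $t=e^s-1$, the integral is bounded by a multiple of $J_\beta=\int_0^\infty\exp(2s-s^\beta/2)\,ds$. This produces the constant $a_\beta=8e^{\beta^\beta/2}J_\beta$, with the factor $8$ collecting the $2$ from the tail bound, the derivative factors, and the $1/2$ from the expansion. We expect this mgf step to be the main obstacle. In particular, the exact matching of constants such as $a_\beta$ and $\bF$ requires care, as does the degenerate case $K_{i-1}=0$, which is immediate because then $\xi_i=0$.

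Given the mgf bound, $Z_k:=\exp(\lambda\sum_{i\le k}\zeta_i-\tfrac{\lambda^2}{2}V_k)$ is a nonnegative supermartingale with $Z_0=1$.

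For (ii), on $E\cap\{S_k\ge x,\ V_k\le\gamma\}$ we have $Z_k\ge\exp(\lambda x-\lambda^2\gamma/2)$. Ville's inequality (Lemma~\ref{lem:ville}) then bounds the probability of this event, uniformly over $k$, by $\exp(-\lambda x+\lambda^2\gamma/2)$.

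For (i), we run the same argument with parameter $2\lambda\le1/\alpha\le1/(\bF\mbar)$, replacing $\lambda$ by $2\lambda$ in the supermartingale. On the event $V_k\le\alpha S_k+\gamma$ we get
\[
2\lambda S_k-2\lambda^2V_k\ge 2\lambda S_k(1-\lambda\alpha)-2\lambda^2\gamma\ge\lambda S_k-2\lambda^2\gamma,
\]
using $\lambda\alpha\le1/2$. With $S_k\ge x$, Ville's inequality gives the bound $\exp(-\lambda x+2\lambda^2\gamma)$. Adding $\PP(E^c)\le2\delta$ completes both parts.
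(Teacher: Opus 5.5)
Your architecture is the paper's: truncate at $M$, build an exponential supermartingale, apply Ville's inequality, and pay $2\delta$ for the truncation event. Your handling of (i) and (ii) once a supermartingale is available also matches, including $\tilde\lambda=2\lambda$ and the $\lambda\alpha\le 1/2$ algebra. The gap is in the key mgf step. It comes from your two-sided truncation $\zeta_i=\xi_i\ind{|\xi_i|\le M}$, which replaces a large increment by $0$ (contributing $e^0=1$ to the mgf) instead of killing the process.

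For this $\zeta_i$ your sign claim is false, since
$\EE[\zeta_i\mid\cF_{i-1}]=-\EE[\xi_i\ind{\xi_i>M}\mid\cF_{i-1}]+\EE[|\xi_i|\ind{\xi_i<-M}\mid\cF_{i-1}]$.
This is strictly positive whenever the tail below $-M$ outweighs the tail above $M$. In that case $\EE[e^{\lambda\zeta_i}\mid\cF_{i-1}]\le\exp(\lambda^2a_\beta K_{i-1}^2/2)$ fails for all small $\lambda>0$. Concretely, take $K_{i-1}=\mbar$, and let $\xi_i=-2M$ with probability $p$ and $\xi_i=2pM/(1-p)$ otherwise. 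For small $p$, (A1)--(A3) hold and $\zeta_i\ge 0$, yet $\EE[e^{\lambda\zeta_i}]\ge 1+2\lambda pM$, which exceeds $1+O(\lambda^2)$ as $\lambda\downarrow 0$.

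Your fallback of absorbing the linear term into the quadratic one cannot work. That term is of order $\lambda\,\EE[|\xi_i|\ind{|\xi_i|>M}\mid\cF_{i-1}]$, first order in $\lambda$, while the statement is claimed for every $\lambda\in[0,1/(\bF\mbar)]$. Small $\lambda$ with large $x$ is not a vacuous case. So your $Z_k$ is not a supermartingale, and Ville's inequality cannot be invoked.

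The paper avoids this by truncating only from above and multiplying by $\prod_{i\le k}\ind{\xi_i\le M}$. The one-step quantity is then $\EE[e^{\lambda\xi_i}\ind{\xi_i\le M}\mid\cF_{i-1}]$. Its zeroth-order term is at most $1$, and its linear term is $-\lambda\EE[\xi_i\ind{\xi_i>M}\mid\cF_{i-1}]\le 0$ because $M>0$. Equivalently, you can keep your $Z_k$ with $\zeta_i:=\xi_i\ind{\xi_i\le M}$ and $E:=\{\max_i\xi_i\le M\}$, whose complement still has probability at most $2\delta$. The negative side then needs no truncation: use $e^y-1-y\le y^2/2$ for $y\le 0$ and $\EE[\xi_i^2\mid\cF_{i-1}]\le 4J_\beta K_{i-1}^2$. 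There you must keep $e^{y_+}$ rather than $e^{\lambda|\zeta_i|}$, because $|\zeta_i|\le M$ no longer holds.

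There is also a secondary problem with constants. Bounding $\frac{y^2}{2}e^{y_+}$ pointwise and then layer-caking $u^2\exp(\frac12\log^\beta(1+u))$ picks up the derivative of $\exp(\frac12\log^\beta(1+u))$. After $t=e^s-1$ and an integration by parts, this adds up to $2+4J_\beta$, so your route gives roughly $e^{\beta^\beta/2}(8J_\beta+2)$ instead of $a_\beta=8e^{\beta^\beta/2}J_\beta$. The paper gets exactly $a_\beta$ by writing $e^{\lambda z}-1-\lambda z=\int_0^z\lambda(e^{\lambda t}-1)\,dt$, using $e^v-1\le ve^v$, and inserting the tail bound inside the $t$-integral before applying Lemma~\ref{lem:key}. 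This yields $2\lambda^2e^{\beta^\beta/2}J_\beta K_{i-1}^2$ on $(0,M]$ and $2\lambda^2J_\beta K_{i-1}^2$on the negative side. Finally, $\log^\beta(1+M/K_{i-1})\ge\Lambda$ follows from monotonicity of $\log^\beta(1+\cdot)$, not of $\log^\beta(1+m)/m$.
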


The event $\{S_k \ge x \text{ and } V_k \le \alpha S_k + \gamma\}$ can be
interpreted as the event that the MDS sum $S_k$ exceeds $x$ while its
accumulated variance proxy $V_k$ remains controlled.

We first prove several lemmas needed for the proof of Proposition~\ref{thm:freedman}.

\begin{lemma}[Conditional tail estimate of the variance]
  \label{lem:tail}
  For each $i\in[N]$ and any $t\ge0$,
  \begin{equation}
    \PP(|\xi_i|>t \mid \cF_{i-1}) \le 2\overline{\Psi}_\beta\!\left(\frac{t}{K_{i-1}}\right),
    \label{eq:tail}
  \end{equation}
  \begin{equation}
    \EE[\xi_i^2 \mid \cF_{i-1}] \le 4J_\beta K_{i-1}^2.
    \label{eq:var}
  \end{equation}
\end{lemma}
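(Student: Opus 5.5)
Both bounds follow the proof of Proposition~\ref{thm:tail}, done conditionally on $\cF_{i-1}$, plus a conditional layer-cake computation like the one in the proof of Proposition~\ref{thm:max}.

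\textbf{Tail bound \eqref{eq:tail}.} First dispose of the degenerate event $\{K_{i-1}=0\}$. By the convention stated before Proposition~\ref{thm:freedman}, (A3) forces $\xi_i=0$ almost surely there. So for $t>0$ the left side vanishes and the right side is $2\overline{\Psi}_\beta(+\infty)=0$. For $t=0$ the right side is $2\overline{\Psi}_\beta(0)=2$, so the bound is trivial.

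On $\{K_{i-1}>0\}$, $K_{i-1}$ is $\cF_{i-1}$-measurable and $\Psi_\beta$ is increasing. Hence
\[
\{|\xi_i|>t\}=\{\Psi_\beta(|\xi_i|/K_{i-1})+1>\Psi_\beta(t/K_{i-1})+1\}.
\]
The conditional Markov inequality, applied to the nonnegative variable $\Psi_\beta(|\xi_i|/K_{i-1})+1$, then gives
\[
\PP(|\xi_i|>t\mid\cF_{i-1})\le \frac{\EE[\Psi_\beta(|\xi_i|/K_{i-1})+1\mid\cF_{i-1}]}{\Psi_\beta(t/K_{i-1})+1}\le \frac{2}{\Psi_\beta(t/K_{i-1})+1}=2\overline{\Psi}_\beta(t/K_{i-1}).
\]
The last step uses (A3). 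Pulling the $\cF_{i-1}$-measurable threshold $\Psi_\beta(t/K_{i-1})+1$ out of the conditional probability is routine.

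\textbf{Variance bound \eqref{eq:var}.} Apply the layer-cake formula (Lemma~\ref{thm:layercake} with $\Psi(x)=x^2$) conditionally. Concretely, I would work with a regular conditional distribution of $\xi_i$ given $\cF_{i-1}$, or equivalently use conditional Tonelli on $\xi_i^2=\int_0^\infty 2t\,\ind{|\xi_i|>t}\,dt$. This yields
\[
\EE[\xi_i^2\mid\cF_{i-1}]=\int_0^\infty 2t\,\PP(|\xi_i|>t\mid\cF_{i-1})\,dt\le 4\int_0^\infty t\,\overline{\Psi}_\beta(t/K_{i-1})\,dt.
\]
When $K_{i-1}>0$, substitute $t=K_{i-1}u$ to get $4K_{i-1}^2\int_0^\infty u\,\overline{\Psi}_\beta(u)\,du$. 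Then substitute $u=e^s-1$, as in Lemma~\ref{thm:integral}: here $du=e^s\,ds$, $u<e^s$ and $\overline{\Psi}_\beta(u)=e^{-s^\beta}$. This bounds the integral by
\[
\int_0^\infty e^{2s-s^\beta}\,ds\le\int_0^\infty e^{2s-s^\beta/2}\,ds=J_\beta,
\]
which is finite since $\beta>1$. When $K_{i-1}=0$ we have $\xi_i=0$, so both sides are $0$.

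The only delicate point is rigor in the conditional layer-cake step: version and measurability issues, and obtaining the tail bound simultaneously for all $t$ almost surely. I would handle this with a regular conditional distribution. Alternatively, one can prove the bound for rational $t$ and extend by right-continuity of $t\mapsto\PP(|\xi_i|>t\mid\cF_{i-1})$ and continuity of $\overline{\Psi}_\beta$. Everything else is the unconditional computation already carried out in the paper.
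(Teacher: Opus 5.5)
Your proposal is correct and follows the paper's proof essentially step for step: the degenerate case $K_{i-1}=0$ is dispatched via the convention attached to (A3), the tail bound comes from conditional Markov as in Proposition~\ref{thm:tail}, and the variance bound comes from the conditional layer-cake formula with the substitutions $t=K_{i-1}u$, $1+u=e^s$ and the comparisons $(e^s-1)e^s\le e^{2s}$, $s^\beta/2\le s^\beta$. Your remarks on regular conditional distributions (or rational $t$ plus right-continuity) address a measurability point that the paper leaves implicit.
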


\begin{proof}
  If $K_{i-1}=0$, then $\xi_i=0$ almost surely by (A3), and both claims are
  immediate, because the right-hand side of \eqref{eq:tail} vanishes for $t>0$ and
  equals $2$ for $t=0$, while both sides of \eqref{eq:var} vanish. We may
  therefore assume $K_{i-1}>0$ in what follows.

  The bound \eqref{eq:tail} follows in the same way as Proposition~\ref{thm:tail}.
  For \eqref{eq:var}, note first that $J_\beta<\infty$: since $\beta>1$, we
  have $s^\beta/2 - 2s \to \infty$ as $s\to\infty$, so the integrand of
  $J_\beta$ is continuous on $[0,\infty)$ and eventually dominated by
  $e^{-s}$. Now \eqref{eq:var} follows from Lemma~\ref{thm:layercake} and
  \eqref{eq:tail}, substituting $t=K_{i-1}u$ and then $1+u=e^s$:
  \begin{align*}
    \EE[\xi_i^2\mid\cF_{i-1}] = \int_0^\infty 2t\, \PP(|\xi_i|>t\mid\cF_{i-1})\, dt
    & \le 4K_{i-1}^2 \int_0^\infty u\,\overline{\Psi}_\beta(u)\, du \\
    & = 4K_{i-1}^2 \int_0^\infty (e^s-1)e^s\exp(-s^\beta)\, ds
    \le 4J_\beta K_{i-1}^2,
  \end{align*}
  where the last inequality uses $(e^s-1)e^s \le e^{2s}$ and $s^\beta/2 \le
  s^\beta$ for $s\ge0$.
\end{proof}

Since a truncation technique is used later in the proof, we give a moment
generating function (MGF) estimate for a random variable that is bounded
above.

\begin{lemma}[MGF estimate for an upper-bounded random variable]
  \label{lem:mgf}
  Fix $i\in[N]$. For $M>0$ and
  \begin{equation}
    0 \le \lambda \le \frac{1}{2M}\log^\beta\!\left(1+\frac{M}{K_{i-1}}\right),
    \label{eq:lambbound}
  \end{equation}
  we have
  \begin{equation}
    \EE[\exp(\lambda\xi_i)\ind{\xi_i\le M} \mid \cF_{i-1}]
    \le \exp\!\left(\frac{\lambda^2 a_\beta K_{i-1}^2}{2}\right).
    \label{eq:mgf}
  \end{equation}
\end{lemma}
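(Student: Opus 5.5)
The plan is a second-order Taylor bound for the exponential, with the cubic remainder absorbed through Lemma~\ref{lem:key}; the constant $a_\beta=8\exp(\beta^\beta/2)J_\beta$ should then come out exactly.

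\textbf{Degenerate case and the Taylor bound.} If $K_{i-1}=0$, then $\xi_i=0$ almost surely by (A3). The left-hand side of \eqref{eq:mgf} is then $1$, and the claim holds. Assume $K_{i-1}>0$ from now on and write $K:=K_{i-1}$. Use the elementary inequality $e^y\le 1+y+\tfrac{y^2}{2}e^{y_+}$, valid for all real $y$. Multiply it by $\ind{\xi_i\le M}$ and use $\ind{\xi_i\le M}\le 1$ on the nonnegative terms. This gives
\[
\exp(\lambda\xi_i)\ind{\xi_i\le M}\le 1+\lambda\xi_i\ind{\xi_i\le M}+\tfrac{\lambda^2\xi_i^2}{2}\exp\!\bigl(\lambda(\xi_i)_+\bigr)\ind{\xi_i\le M}.
\]
For the linear term, (A2) gives $\EE[\xi_i\ind{\xi_i\le M}\mid\cF_{i-1}]=-\EE[\xi_i\ind{\xi_i>M}\mid\cF_{i-1}]\le 0$, because $M>0$. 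So this term can be dropped.

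\textbf{Removing the exponential factor with Lemma~\ref{lem:key}.} On $\{\xi_i\le M\}$ we have $(\xi_i)_+\le M$. Apply \eqref{eq:key} with $m:=M/K$ and $u:=(\xi_i)_+/K\in[0,m]$, together with the constraint \eqref{eq:lambbound} on $\lambda$:
\[
\lambda(\xi_i)_+\le \tfrac12\,\frac{u\log^\beta(1+m)}{m}\le \tfrac12\log^\beta\!\bigl(1+|\xi_i|/K\bigr)+\tfrac{\beta^\beta}{2}.
\]
Hence it suffices to show
\[
\EE\!\left[\xi_i^2\exp\!\left(\tfrac12\log^\beta\!\bigl(1+|\xi_i|/K\bigr)\right)\,\middle|\,\cF_{i-1}\right]\le 8J_\beta K^2 .
\]
Given this, the conditional expectation of the left side of \eqref{eq:mgf} is at most $1+\tfrac{\lambda^2}{2}e^{\beta^\beta/2}\,8J_\beta K^2=1+\lambda^2a_\beta K^2/2$. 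The bound $1+x\le e^x$ then finishes the proof.

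\textbf{The weighted second moment (main obstacle).} Write $S:=\log(1+|\xi_i|/K)$. Then the quantity above is $K^2\,\EE[g(S)\mid\cF_{i-1}]$ with $g(s):=(e^s-1)^2e^{s^\beta/2}$. By \eqref{eq:tail}, $\PP(S>s\mid\cF_{i-1})\le 2e^{-s^\beta}$. Since $g(0)=0$ and $g$ is increasing, the layer-cake formula (Lemma~\ref{thm:layercake}, used in its general form for an increasing absolutely continuous $g$ with $g(0)=0$) gives
\[
\EE[g(S)\mid\cF_{i-1}]\le 2\int_0^\infty g'(s)e^{-s^\beta}\,ds .
\]
Compute $g'(s)=2(e^s-1)e^s e^{s^\beta/2}+(e^s-1)^2\tfrac{\beta}{2}s^{\beta-1}e^{s^\beta/2}$. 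For the first piece, $(e^s-1)e^s\le e^{2s}$, so its contribution is at most $2J_\beta$. For the second piece, integrate by parts using $\tfrac{\beta}{2}s^{\beta-1}e^{-s^\beta/2}=-\tfrac{d}{ds}e^{-s^\beta/2}$. The boundary terms vanish at $0$ and at $\infty$, the latter because $\beta>1$. What remains is $\int_0^\infty 2(e^s-1)e^se^{-s^\beta/2}\,ds\le 2J_\beta$. Summing gives $\EE[g(S)\mid\cF_{i-1}]\le 2(2J_\beta+2J_\beta)=8J_\beta$, as required. The step most likely to need care is this integration by parts, and matching the resulting constants to $a_\beta$; the rest is bookkeeping.
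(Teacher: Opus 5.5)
Your proof is correct and reaches exactly the paper's constant $a_\beta/2 = 4e^{\beta^\beta/2}J_\beta$, but it organizes the estimate differently.

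\textbf{The paper's route.} The paper writes $e^{\lambda\xi_i} = 1+\lambda\xi_i+\phi(\lambda\xi_i)$ with $\phi(u)=e^u-1-u$ and splits at $\xi_i=0$. On $\{\xi_i\le 0\}$ it uses $\phi(u)\le u^2/2$ and the variance bound \eqref{eq:var}, giving $2\lambda^2 J_\beta K_{i-1}^2$. On $\{0<\xi_i\le M\}$ it writes $\phi(\lambda z)=\int_0^z\lambda(e^{\lambda t}-1)\,dt$ and uses Fubini with the tail bound \eqref{eq:tail}. Lemma~\ref{lem:key} is then applied to the deterministic integration variable $u=t/K_{i-1}$ inside $\int_0^{M/K_{i-1}} u\,e^{\lambda K_{i-1}u}\,\overline{\Psi}_\beta(u)\,du$, which gives $2\lambda^2 e^{\beta^\beta/2}J_\beta K_{i-1}^2$. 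The slack $1+e^{\beta^\beta/2}\le 2e^{\beta^\beta/2}$ then produces $a_\beta$.

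\textbf{Your route.} You use the single inequality $e^y\le 1+y+\tfrac{y^2}{2}e^{y_+}$ with no split at zero, and apply Lemma~\ref{lem:key} pointwise to the random quantity $(\xi_i)_+/K$. This leaves the weighted moment bound $\EE[\xi_i^2\exp(\tfrac12\log^\beta(1+|\xi_i|/K))\mid\cF_{i-1}]\le 8J_\beta K^2$. Proving it requires a layer-cake formula for a weight $g$ that need not be a Young function. That is harmless, since the Tonelli proof of Lemma~\ref{thm:layercake} only uses $g'\ge 0$. It also requires an integration by parts for the piece coming from the derivative of $e^{s^\beta/2}$, whose boundary terms vanish because $\beta>1$.

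\textbf{What each buys.} The paper's route never differentiates that weight, and its bound before the final slack is slightly sharper. Yours is more uniform, needs only the tail bound rather than \eqref{eq:var}, and isolates a reusable weighted second-moment estimate. You pay a factor $e^{\beta^\beta/2}$ on the negative side and a factor $2$ from the integration by parts. These exactly offset the slack the paper gives away, which is why both arguments land on the same $a_\beta$.
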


\begin{proof}
  If $K_{i-1}=0$, then $\xi_i=0$ almost surely by (A3), the constraint
  \eqref{eq:lambbound} is vacuous since its right-hand side is infinite, and
  \eqref{eq:mgf} reduces to the identity $\EE[\ind{0\le M}\mid\cF_{i-1}] = 1 =
  \exp(0)$. We may therefore assume $K_{i-1}>0$, which in particular justifies the
  substitution $t=K_{i-1} u$ used below.

  For brevity write $\EE_{i-1}[\cdot] := \EE[\cdot\mid\cF_{i-1}]$. Let
  $\phi(u) := e^u - 1 - u$, so that $e^{\lambda z} = 1+\lambda z +
  \phi(\lambda z)$. Since $\EE_{i-1}[\xi_i]=0$ and $M>0$,
  \[
    \EE_{i-1}[\xi_i\ind{\xi_i\le M}] = -\EE_{i-1}[\xi_i\ind{\xi_i>M}] \le 0,
  \]
  and hence
  \begin{align}
    \EE_{i-1}[\exp(\lambda\xi_i)\ind{\xi_i\le M}]
    & \le \EE_{i-1}[(1+\lambda\xi_i+\phi(\lambda\xi_i))\ind{\xi_i\le M}] \notag      \\
    & \le 1 + \underbrace{\EE_{i-1}[\phi(\lambda\xi_i)\ind{\xi_i\le0}]}_{\text{(A)}}
    + \underbrace{\EE_{i-1}[\phi(\lambda\xi_i)\ind{0<\xi_i\le M}]}_{\text{(B)}}
    \label{eq:split}
  \end{align}
  where in the last step we split $\xi_i$ at $0$ as well as at $M$. We bound
  each of (A) and (B).

  \noindent\textbf{(A):} Since $\phi(u) \le u^2/2$ for $u\le0$,
  together with \eqref{eq:var},
  \begin{equation}
    \EE_{i-1}[\phi(\lambda\xi_i)\ind{\xi_i\le0}] \le \frac{\lambda^2}{2}\EE_{i-1}[\xi_i^2]
    \le 2\lambda^2 J_\beta K_{i-1}^2.
    \label{eq:neg}
  \end{equation}

  \noindent\textbf{(B):} Since $\phi(\lambda z) = \int_0^z
  \lambda(e^{\lambda t}-1)\, dt$,
  $\phi(\lambda\xi_i)\ind{0<\xi_i\le M} = \int_0^M \lambda(e^{\lambda
  t}-1)\ind{t<\xi_i\le M}\, dt$, and $\ind{t<\xi_i\le M} \le \ind{|\xi_i|>t}$.
  Using Fubini's theorem, \eqref{eq:tail}, and $e^v-1\le ve^v$ ($v\ge0$) in
  turn, and substituting $t=K_{i-1} u$,
  \[
    \EE_{i-1}[\phi(\lambda\xi_i)\ind{0<\xi_i\le M}]
    \le 2\lambda^2 K_{i-1}^2 \int_0^{M/K_{i-1}} u\,e^{\lambda K_{i-1} u}\,\overline{\Psi}_\beta(u)\, du.
  \]
  Since $\lambda$ satisfies \eqref{eq:lambbound}, $\lambda K_{i-1} \le
  \log^\beta(1+M/K_{i-1})/(2M/K_{i-1})$, so by Lemma~\ref{lem:key}, for $u\in[0,M/K_{i-1}]$,
  $\lambda K_{i-1} u \le (\log^\beta(1+u)+\beta^\beta)/2$, and hence
  \begin{align*}
    \int_0^{M/K_{i-1}} u\,e^{\lambda K_{i-1} u}\,\overline{\Psi}_\beta(u)\, du
    & \le \exp(\beta^\beta/2) \int_0^\infty u\,\exp\!\left(-\tfrac12\log^\beta(1+u)\right) du \\
    & = \exp(\beta^\beta/2) \int_0^\infty (e^s-1)e^s\exp(-s^\beta/2)\, ds
    \le \exp(\beta^\beta/2) J_\beta,
  \end{align*}
  i.e.\ $\EE_{i-1}[\phi(\lambda\xi_i)\ind{0<\xi_i\le M}] \le 2\lambda^2
  \exp(\beta^\beta/2) J_\beta K_{i-1}^2$. Substituting these bounds, together with\\
  $1+\exp(\beta^\beta/2) \le 2\exp(\beta^\beta/2)$ and $1+v\le e^v$, into
  \eqref{eq:split},
  \[
    \EE_{i-1}[e^{\lambda\xi_i}\ind{\xi_i\le M}]
    \le 1 + 4\exp(\beta^\beta/2) J_\beta \lambda^2 K_{i-1}^2
    \le \exp\!\left(\frac{\lambda^2 a_\beta K_{i-1}^2}{2}\right).
  \]
\end{proof}

Having established the necessary lemmas, we now prove
Proposition~\ref{thm:freedman}.

\begin{proof}[Proof of Proposition~\ref{thm:freedman}]
  First, noting \eqref{eq:tail}, $K_{i-1}\le\mbar$, monotonicity, and the
  definition $\log^\beta(1+M/\mbar) = \Lambda$,
  \begin{equation}
    \forall i\in[N], \quad \PP(\xi_i>M) \le 2e^{-\Lambda} = \frac{2\delta}{N}
    \implies \PP\!\left(\max_{i\in[N]}\xi_i>M\right) \le 2\delta.
    \label{eq:trunc}
  \end{equation}

  Next fix $\tilde\lambda\in[0,1/(\bF\mbar)]$ and define
  \[
    Z_0 := 1, \qquad
    Z_k := \exp\!\left(\tilde\lambda S_k - \frac{\tilde\lambda^2 V_k}{2}\right)
    \prod_{i\in[k]} \ind{\xi_i\le M} \quad \textrm{for each}\quad k\in[N].
  \]
  We show $Z_k$ is a nonnegative supermartingale. First, from
  $K_{k-1}\le\mbar$ and
  \[\frac{1}{\bF\mbar} = \frac{\Lambda}{2M} =
    \frac{1}{2M}\log^\beta(1+M/\mbar),
  \]
  \[
    \tilde\lambda \le \frac1{\bF\mbar} = \frac1{2M}\log^\beta\!\left(1+\frac M\mbar\right)
    \le \frac1{2M}\log^\beta\!\left(1+\frac{M}{K_{k-1}}\right),
  \]
  where the last inequality also covers $K_{k-1}=0$, its right-hand side
  being infinite under the convention stated before
  Proposition~\ref{thm:freedman}.

  Next consider the conditional expectation of $Z_k$ given $\cF_{k-1}$.
  Applying Lemma~\ref{lem:mgf} with $i=k$ and $\lambda=\tilde\lambda$,
  \[
    \EE[Z_k\mid\cF_{k-1}] = Z_{k-1}\exp\!\left(-\frac{\tilde\lambda^2 a_\beta K_{k-1}^2}{2}\right)
    \EE[e^{\tilde\lambda\xi_k}\ind{\xi_k\le M}\mid\cF_{k-1}] \le Z_{k-1},
  \]
  so $\{Z_k\}$ is a nonnegative supermartingale with $Z_0=1$. For
  \eqref{eq:main2} we take $\tilde\lambda = \lambda$ with
  $\lambda\in[0,1/(\bF\mbar)]$ as in the statement. Define the event
  $T_k := \{\max_{i\le k} \xi_i \le M\}$. Setting $A_k := \{S_k \ge x
  \text{ and } V_k \le \gamma\}$,
  \[
    A_k \cap T_k \implies Z_k = \exp\!\left(\lambda S_k - \frac{\lambda^2}{2}V_k\right)
    \ge \exp\!\left(\lambda x - \frac{\lambda^2}{2}\gamma\right).
  \]
  Hence
  \begin{equation}
    \bigcup_{k\in[N]} A_k \subseteq
    \left\{\max_{k\in[N]} Z_k \ge \exp\!\left(\lambda x - \frac{\lambda^2}{2}\gamma\right)\right\}
    \cup \left\{\max_{i\in[N]} \xi_i > M\right\}
    \label{eq:union_bound}
  \end{equation}
  and, by Lemma~\ref{lem:ville} and \eqref{eq:trunc},
  \[
    \PP\!\left(\bigcup_{k\in[N]} A_k\right)
    \le \exp\!\left(-\lambda x + \frac{\lambda^2}{2}\gamma\right) + 2\delta,
  \]
  which establishes \eqref{eq:main2}.

  Finally we show \eqref{eq:main1}. Let $\alpha\ge \bF\mbar$,
  $\lambda\in[0,1/(2\alpha)]$, and set $\tilde\lambda := 2\lambda$, so that
  $\tilde\lambda \le 1/\alpha \le 1/(\bF\mbar)$, so the argument above applies.
  Set $A_k := \{S_k \ge x \text{ and } V_k \le \alpha S_k + \gamma\}$. On
  $A_k\cap T_k$, we have
  \[
    \tilde\lambda S_k - \frac{\tilde\lambda^2}{2}V_k
    \ge 2\lambda(1-\lambda\alpha)S_k - 2\lambda^2\gamma
    \ge \lambda x - 2\lambda^2\gamma,
  \]
  using $\lambda\alpha\le1/2$, hence $2\lambda(1-\lambda\alpha)\ge\lambda\ge0$,
  together with $S_k\ge x\ge0$. Hence on $A_k\cap T_k$,
  $Z_k \ge \exp(\lambda x - 2\lambda^2\gamma)$, and arguing as in
  \eqref{eq:union_bound},
  \[
    \PP\!\left(\bigcup_{k\in[N]} A_k\right)
    \le \exp(-\lambda x + 2\lambda^2\gamma) + 2\delta,
  \]
  which establishes \eqref{eq:main1}.
\end{proof}

\section{Proofs of the Main Results}
\label{sec:proofs-main}

\begin{proof}[Proof of Theorem~\ref{thm:main1}]
  Throughout the proof we write $\epsilon_t := \nabla \ell(\bm{w}_t; z_{j_t}) -
  \nabla \cL_S(\bm{w}_t)$ for the gradient noise at step $t$, as in
  Assumption~\ref{asm:gradnoise}. Since $\cL_S$ is an average of the functions $\ell(\cdot\,;z_i)$, each of
  which is $b$-smooth by Assumption~\ref{asm:smooth}, $\cL_S$ is itself
  $b$-smooth, so Lemma~\ref{thm:smooth} applies to $\cL_S$.

  We use Lemma~\ref{thm:smooth} to bound $\cL_S(\bm{w}_{t+1}) - \cL_S(\bm{w}_t)$:
  \begin{align}
    \cL_S(\bm{w}_{t+1}) - \cL_S(\bm{w}_t)
    & \le \inner{\bm{w}_{t+1}-\bm{w}_t}{\nabla \cL_S(\bm{w}_t)}
    + \tfrac12 b \|\bm{w}_{t+1}-\bm{w}_t\|^2 \notag                   \\
    & = -\eta_t \inner{\epsilon_t}{\nabla \cL_S(\bm{w}_t)}
    - \eta_t \|\nabla \cL_S(\bm{w}_t)\|^2
    + \tfrac12 b \eta_t^2 \|\nabla \ell(\bm{w}_t; z_{j_t})\|^2 \notag \\
    & \le -\eta_t \inner{\epsilon_t}{\nabla \cL_S(\bm{w}_t)}
    - (\eta_t - b \eta_t^2) \|\nabla \cL_S(\bm{w}_t)\|^2
    + b \eta_t^2 \|\epsilon_t\|^2 \notag                              \\
    & \le -\eta_t \inner{\epsilon_t}{\nabla \cL_S(\bm{w}_t)}
    - \tfrac12 \eta_t \|\nabla \cL_S(\bm{w}_t)\|^2
    + b \eta_t^2 \|\epsilon_t\|^2,
    \label{eq:tele}
  \end{align}
  where the third line uses $\|\nabla \ell(\bm{w}_t;z_{j_t})\|^2 =
  \|\epsilon_t + \nabla \cL_S(\bm{w}_t)\|^2 \le 2\|\epsilon_t\|^2 +
  2\|\nabla \cL_S(\bm{w}_t)\|^2$, and the last line uses $b\eta_t^2 \le
  \tfrac12\eta_t$, which follows from the assumption $\eta_t \le 1/(2b)$.

  Summing \eqref{eq:tele} from $t=1$ to $T$,
  \begin{align*}
    \cL_S(\bm{w}_{T+1}) - \cL_S(\bm{w}_1)
    \le & -\sum_{t\in[T]} \eta_t \inner{\epsilon_t}{\nabla \cL_S(\bm{w}_t)}
    - \tfrac12 \sum_{t\in[T]} \eta_t \|\nabla \cL_S(\bm{w}_t)\|^2   + \sum_{t\in[T]} b \eta_t^2 \|\epsilon_t\|^2,
  \end{align*}
  and since $\bm{w}(S)$ is a minimizer of $\cL_S$ over $\cW$ we have
  $\cL_S(\bm{w}_{T+1}) \ge \cL_S(\bm{w}(S))$, whence we obtain
  \begin{align}
    \sum_{t\in[T]} \eta_t \|\nabla \cL_S(\bm{w}_t)\|^2
    \le \  & 2\left(\cL_S(\bm{w}_1) - \cL_S(\bm{w}(S))\right) \notag                                                                                                                \\
    & \underbrace{-\sum_{t\in[T]} 2\eta_t \inner{\epsilon_t}{\nabla \cL_S(\bm{w}_t)}}_{\text{(A)}}  + \underbrace{\sum_{t\in[T]} 2b \eta_t^2 \|\epsilon_t\|^2}_{\text{(B)}}.
    \label{eq:inter1}
  \end{align}
  We now bound the terms (A) and (B) in \eqref{eq:inter1}.

  \textbf{(A)} With $K$ denoting the uniform bound on the conditional Orlicz norms in
  Assumption~\ref{asm:gradnoise}, set
  \[
    \xi_t := -2\eta_t \inner{\epsilon_t}{\nabla \cL_S(\bm{w}_t)},
    \qquad
    K_{t-1} := 2\eta_t \|\nabla \cL_S(\bm{w}_t)\| K.
  \]
  We verify the hypotheses of Proposition~\ref{thm:freedman} with $N=T$, so
  that it applies to $S_k = \sum_{t\in[k]}\xi_t$.

  \emph{Adaptedness and (A1).} The iterate $\bm{w}_t$ is
  $\cF_{t-1}$-measurable, hence so is $K_{t-1}$, and
  Assumption~\ref{asm:cap} gives the deterministic bound
  \[
    0 \le K_{t-1} = 2\eta_t\|\nabla \cL_S(\bm{w}_t)\| K \le 2GK =: \mbar .
  \]
  \emph{(A2).} $\EE[\xi_t\mid\cF_{t-1}] = -2\eta_t \inner{\EE[\epsilon_t\mid
  \cF_{t-1}]}{\nabla \cL_S(\bm{w}_t)} = 0$ by the mean-zero condition in Assumption~\ref{asm:gradnoise}, since
  $\eta_t\nabla \cL_S(\bm{w}_t)$ is $\cF_{t-1}$-measurable.

  \emph{(A3).} By the Cauchy--Schwarz inequality,
  \[
    |\xi_t| = 2\eta_t \left|\inner{\epsilon_t}{\nabla \cL_S(\bm{w}_t)}\right|
    \le 2\eta_t \|\nabla \cL_S(\bm{w}_t)\| \cdot \|\epsilon_t\| .
  \]
  On the $\cF_{t-1}$-measurable event $\{K_{t-1}>0\}$, this gives
  $|\xi_t|/K_{t-1}\le\|\epsilon_t\|/K$. Thus monotonicity of $\Psi_\beta$
  and Assumption~\ref{asm:gradnoise} yield, on this event,
  \[
    \EE\!\left[\Psi_\beta\!\left(\frac{|\xi_t|}{K_{t-1}}\right)
    \;\middle|\;\cF_{t-1}\right]
    \le \EE\!\left[\Psi_\beta\!\left(\frac{\|\epsilon_t\|}{K}\right)
    \;\middle|\;\cF_{t-1}\right] \le 1,
  \]
  which is precisely (A3). In the
  degenerate case $\nabla \cL_S(\bm{w}_t)=0$, for which $K_{t-1}=0$, the
  displayed Cauchy--Schwarz bound already gives $\xi_t=0$ almost surely, so
  (A3) again holds under the convention stated before
  Proposition~\ref{thm:freedman}.

  We may therefore apply Proposition~\ref{thm:freedman}(i) with $\gamma=0$,
  $\alpha \ge \bF\mbar$, $\lambda = 1/(2\alpha)$ and $x = 2\alpha\log(1/\delta)$,
  which yields
  \[
    \PP\!\left(\bigcup_{k\in[T]}\left\{S_k \ge x \text{ and } V_k \le \alpha S_k\right\}\right)
    \le \exp(-\lambda x) + 2\delta = \delta + 2\delta = 3\delta .
  \]
  On the complement, every $k\in[T]$ satisfies $S_k < x$ or $S_k < V_k/\alpha$,
  hence $S_k \le \max(x, V_k/\alpha) \le x + V_k/\alpha$, where by definition
  \[
    V_T = \sum_{t\in[T]} a_\beta K_{t-1}^2
    = 4a_\beta K^2 \sum_{t\in[T]} \eta_t^2 \|\nabla \cL_S(\bm{w}_t)\|^2.
  \]
  We conclude that with probability at least $1-3\delta$,
  \begin{equation}
    -\sum_{t\in[T]} 2\eta_t \inner{\epsilon_t}{\nabla \cL_S(\bm{w}_t)}
    \le 2\alpha \log\left(1/\delta\right)
    + \frac{4a_\beta K^2}{\alpha} \sum_{t\in[T]} \eta_t^2 \|\nabla \cL_S(\bm{w}_t)\|^2.
    \label{bound:A}
  \end{equation}

  \textbf{(B)} Assumption~\ref{asm:gradnoise}, Proposition~\ref{thm:square} and
  the homogeneity of the Orlicz norm (Lemma~\ref{lem:orlicz-props}(i)) give
  \[
    \left\|\,2b\eta_t^2 \|\epsilon_t\|^2\right\|_{\Psi_{\beta,2^{-\beta}}}
    \le 2b \eta_t^2 \left\|\,\|\epsilon_t\|\,\right\|_{\Psi_\beta}^2
    \le 2b \eta_t^2 K^2.
  \]
  Hence, by Corollary~\ref{thm:sum-beta-heavy} in the form
  \eqref{eq:sum-hp} with $c = 2^{-\beta}$, with probability at least $1-\delta$,
  \begin{equation}
    \sum_{t\in[T]} 2b \eta_t^2 \|\epsilon_t\|^2
    \le 2g_\beta bK^2 \exp\!\left(2\log^{1/\beta}(2/\delta)\right) \sum_{t\in[T]} \eta_t^2.
    \label{bound:B}
  \end{equation}

  Substituting \eqref{bound:A} and \eqref{bound:B} into \eqref{eq:inter1}, a
  union bound gives that with probability at least $1-4\delta$,
  \begin{align*}
    \sum_{t\in[T]} \eta_t \|\nabla \cL_S(\bm{w}_t)\|^2
    \le \  & 2\left(\cL_S(\bm{w}_1) - \cL_S(\bm{w}(S))\right)                                                            \\
    & + 2\alpha \log(1/\delta) + \frac{4a_\beta K^2}{\alpha} \sum_{t\in[T]} \eta_t^2 \|\nabla \cL_S(\bm{w}_t)\|^2 \\
    & + 2g_\beta bK^2 \exp\!\left(2\log^{1/\beta}(2/\delta)\right)\sum_{t\in[T]} \eta_t^2.
  \end{align*}
  Rearranging in terms of $\sum_{t\in[T]} \eta_t \|\nabla \cL_S(\bm{w}_t)\|^2$,
  \begin{align*}
    \sum_{t\in[T]}\left(1-\frac{4a_\beta K^2\eta_t}{\alpha}\right)\eta_t \|\nabla \cL_S(\bm{w}_t)\|^2
    \le \  & 2\left(\cL_S(\bm{w}_1) - \cL_S(\bm{w}(S))\right)                                                              \\
    & + 2\alpha \log(1/\delta) + 2g_\beta bK^2 \exp\!\left(2\log^{1/\beta}(2/\delta)\right)\sum_{t\in[T]} \eta_t^2.
  \end{align*}
  Since $\eta_t$ is nonincreasing in $t$, imposing $\alpha \ge 8a_\beta
  K^2\eta_1$ makes $4a_\beta K^2\eta_t/\alpha \le 1/2$ for every $t$, so
  the left-hand side is at least $\tfrac12\sum_{t\in[T]}\eta_t\|\nabla
  \cL_S(\bm{w}_t)\|^2$. Multiplying through by $2$,
  \begin{align}
    \sum_{t\in[T]}\eta_t \|\nabla \cL_S(\bm{w}_t)\|^2
    \le \  & 4\left(\cL_S(\bm{w}_1) - \cL_S(\bm{w}(S))\right) \notag                                                       \\
    & + 4\alpha \log(1/\delta) + 4g_\beta bK^2 \exp\!\left(2\log^{1/\beta}(2/\delta)\right)\sum_{t\in[T]} \eta_t^2.
    \label{eq:bound1}
  \end{align}

  It remains to choose $\alpha$ and to convert the confidence level. Taking
  $\alpha := \max(8a_\beta K^2\eta_1, \bF\mbar)$, both constraints imposed
  above are met, and \eqref{eq:bound1} holds with probability at least
  $1-4\delta$. Recalling \eqref{eq:freedef} with $N=T$, so that $\Lambda =
  \log(T/\delta)$, and $\mbar = 2GK$ from
  Assumption~\ref{asm:cap},
  \[
    \bF\mbar = \frac{2\left(\exp\!\left(\log^{1/\beta}(T/\delta)\right)-1\right)}{\log(T/\delta)}\,2GK
    = \mathcal{O}\!\left(\exp\!\left(\log^{1/\beta}(T/\delta)\right)\right),
  \]
  so that, bounding $\log(1/\delta) \le \log(e/\delta)$,
  $\alpha\log(1/\delta) = \mathcal{O}(\exp(\log^{1/\beta}(T/\delta))
  \log(e/\delta))$. Both this term and the coefficient
  $g_\beta\exp(2\log^{1/\beta}(2/\delta))$ of $\sum_{t\in[T]}\eta_t^2$ are dominated by
  $\exp(2\log^{1/\beta}(T/\delta))$ up to a factor depending only on $\beta$ for
  $T\ge2$. Since $\bm{w}_1=\bm{0}$, the initial-gap assumption in
  Section~\ref{sec:main-results} bounds the
  initial-gap term in \eqref{eq:bound1} by $4\Delta_0$ almost surely.
  This deterministic bound is absorbed into $\mathcal{O}(\cdot)$ because
  $\log(e/\delta)\ge1$; the implicit constant may depend on $\Delta_0$,
  but not on $S,n,d,T,\delta$. Finally,
  replacing $\delta$ by $\delta/4$, we obtain that with probability
  at least $1-\delta$,
  \[
    \sum_{t\in[T]} \eta_t \|\nabla \cL_S(\bm{w}_t)\|^2
    = \mathcal{O}\!\left(\exp\!\left(2\log^{1/\beta}(T/\delta)\right)
    \left(\log\tfrac e\delta+\sum_{t\in[T]}\eta_t^2\right)\right).
  \]
\end{proof}

\begin{proof}[Proof of Theorem~\ref{thm:main2}]
  Rewriting the SGD update rule as\\
  $\bm{w}_{t+1} = \bm{w}_t - \eta_t(\nabla \ell(\bm{w}_t;  z_{j_t}) - \nabla \cL_S(\bm{w}_t) + \nabla \cL_S(\bm{w}_t))$, we get
  \begin{align*}
    \bm{w}_{t+1}                           & = \sum_{i\in[t]} \left(-\eta_i(\nabla \ell(\bm{w}_i;z_{j_i}) - \nabla \cL_S(\bm{w}_i))\right)
    - \sum_{i\in[t]} \eta_i \nabla \cL_S(\bm{w}_i)                                                                                         \\
    \Longrightarrow \quad \|\bm{w}_{t+1}\| & \le
    \left\|\sum_{i\in[t]} -\eta_i(\nabla \ell(\bm{w}_i;z_{j_i}) - \nabla \cL_S(\bm{w}_i))\right\|
    + \left\|\sum_{i\in[t]} \eta_i \nabla \cL_S(\bm{w}_i)\right\|.
  \end{align*}
  For the first term, the vectors $\eta_i(\nabla \ell(\bm{w}_i;z_{j_i}) -
  \nabla \cL_S(\bm{w}_i))$ form an $\mathbb{R}^d$-valued MDS by
  the mean-zero condition in Assumption~\ref{asm:gradnoise}. Taking expectations
  in the Orlicz bound in Assumption~\ref{asm:gradnoise} and
  using the homogeneity of the Orlicz norm (Lemma~\ref{lem:orlicz-props}(i))
  bounds their Orlicz norms by $\sigma_i = \eta_i K$. Corollary~\ref{cor:delta}
  therefore gives, for any $\delta\in(0,1)$, with probability $1-\delta$,
  \[
    \max_{t\in[T]} \left\|\sum_{i\in[t]} \eta_i(\nabla \ell(\bm{w}_i;z_{j_i}) - \nabla \cL_S(\bm{w}_i))\right\|
    = \mathcal{O}\!\left(\sqrt{\log\tfrac2\delta \sum_{i\in[T]} \eta_i^2}
    \,\exp\!\left(\log^{1/\beta}(4/\delta)\right)\right),
  \]
  where $\Psi_\beta^{-1}(2/\delta) = \exp(\log^{1/\beta}(1+2/\delta))-1 \le
  \exp(\log^{1/\beta}(4/\delta))$ because $\delta\le2$.

  For the second term, by the Cauchy--Schwarz inequality and
  \eqref{eq:bound1}, for every $t\in[T]$,
  \begin{align*}
    \left\|\sum_{i\in[t]} \eta_i \nabla \cL_S(\bm{w}_i)\right\|^2
    & \le \left(\sum_{i\in[t]} \eta_i\right)\left(\sum_{i\in[t]} \eta_i \|\nabla \cL_S(\bm{w}_i)\|^2\right) \\
    & \le \left(\sum_{i\in[T]} \eta_i\right)
    \mathcal{O}\!\left(\exp\!\left(2\log^{1/\beta}(T/\delta)\right)
    \left(\log\tfrac e\delta+\sum_{t\in[T]}\eta_t^2\right)\right),
  \end{align*}
  where both factors were enlarged from $t$ to $T$ using the nonnegativity of
  the summands. Note that the second term is bounded by the square root of
  this quantity. Abbreviating
  \begin{align}
    \Sigma_1 := \sum_{t\in[T]}\eta_t, \qquad
    \Sigma_2 := \sum_{t\in[T]}\eta_t^2, \qquad
    E_\beta := \exp\!\left(2\log^{1/\beta}(T/\delta)\right), \qquad
    A := \log\tfrac e\delta+\Sigma_2,
    \label{eq:etaabbr}
  \end{align}
  and combining the two displays, we obtain
  \begin{equation}
    \max_{t\in[T]}\|\bm{w}_{t+1}\|
    \le \mathcal{O}\!\left(\sqrt{E_\beta \Sigma_2\log\tfrac e\delta}\right)
    + \mathcal{O}\!\left(\sqrt{\Sigma_1 E_\beta A}\right),
    \label{eq:worder}
  \end{equation}
  where the first term uses $\exp(\log^{1/\beta}(4/\delta)) =
  \mathcal{O}(\exp(\log^{1/\beta}(T/\delta))) = \mathcal{O}(\sqrt{E_\beta})$,
  by Lemma~\ref{lm:rescale} with $a=4$ and $1/\delta \le T/\delta$, and
  $\log(2/\delta) \le \log(e/\delta)$. Squaring \eqref{eq:worder} and using
  $\Sigma_2 \le \eta_1\Sigma_1$, which holds because $\eta_t$ is nonincreasing,
  together with $\log(e/\delta) \le A$,
  \begin{equation}
    \max_{t\in[T+1]}\|\bm{w}_t\|^2
    = \mathcal{O}\!\left(E_\beta \Sigma_2\log\tfrac e\delta + \Sigma_1 E_\beta A\right)
    = \mathcal{O}\!\left((1+\Sigma_1) E_\beta A\right)
    \label{eq:worder2}
  \end{equation}
  (the case $t=1$ being trivial, as $\bm{w}_1=\bm{0}$).

  By the initial-gap assumption in Section~\ref{sec:main-results}, the implicit constant in \eqref{eq:worder2}
  can be chosen deterministically, independently of $S,n,d,T,\delta$.
  Choose such a constant $C_R>0$ and set
  $R:=\sqrt{C_R(1+\Sigma_1)E_\beta A}$.
  Thus $R$ is deterministic, and every iterate $\bm{w}_1,\dots,\bm{w}_{T+1}$ lies in the ball $B_R$ on
  the event above. Since Lemma~\ref{lm:gradgen} bounds the generalization
  error of the gradient uniformly over $B_R$, it applies simultaneously to all
  iterates, giving
  \[
    \max_{t\in[T+1]}\|\nabla \cL(\bm{w}_t) - \nabla \cL_S(\bm{w}_t)\|
    \le \frac{b R + b'}{\sqrt n}
    \left(2 + 2\sqrt{48 e \sqrt2 (\log 2 + d\log(3e))} + \sqrt{2\log(1/\delta)}\right).
  \]
  Squaring both sides, bounding $(bR+b')^2 = \mathcal{O}(1+R^2)$ and
  substituting \eqref{eq:worder2}, we obtain
  \[
    \max_{t\in[T+1]}\|\nabla \cL(\bm{w}_t) - \nabla \cL_S(\bm{w}_t)\|^2
    \le \mathcal{O}\!\left(\frac{(1+\Sigma_1) E_\beta A \left(d+\log(1/\delta)\right)}{n}\right),
  \]
  which is \eqref{eq:gen-unif} after a union bound over the two events used
  above and a rescaling of $\delta$ by a constant factor, both absorbed into
  the $\mathcal{O}(\cdot)$ by Lemma~\ref{lm:rescale}.
\end{proof}

Finally, we prove Theorem~\ref{thm:main3}.

\begin{proof}[Proof of Theorem~\ref{thm:main3}]
  We retain the abbreviations $\Sigma_1,\Sigma_2,E_\beta,A$ introduced in the
  proof of Theorem~\ref{thm:main2}. Splitting the population gradient into the
  empirical gradient and the generalization error and using
  $\|\bm{u}+\bm{v}\|^2 \le 2\|\bm{u}\|^2+2\|\bm{v}\|^2$, we obtain
  \begin{align}
    \sum_{t\in[T]} \eta_t \|\nabla \cL(\bm{w}_t)\|^2
    & \le 2\sum_{t\in[T]} \eta_t \|\nabla \cL(\bm{w}_t) - \nabla \cL_S(\bm{w}_t)\|^2
    + 2\sum_{t\in[T]} \eta_t \|\nabla \cL_S(\bm{w}_t)\|^2 \notag                        \\
    & \le 2\Sigma_1 \max_{t\in[T]} \|\nabla \cL(\bm{w}_t) - \nabla \cL_S(\bm{w}_t)\|^2
    + 2\sum_{t\in[T]} \eta_t \|\nabla \cL_S(\bm{w}_t)\|^2.
    \label{eq:grad-sum-bound}
  \end{align}
  Because the first term involves the maximum over the whole trajectory,
  Theorem~\ref{thm:main2} was stated in the uniform form
  \eqref{eq:gen-unif}.

  Applying Theorem~\ref{thm:main2} to the first term of
  \eqref{eq:grad-sum-bound} and Theorem~\ref{thm:main1} to the second, and
  taking a union bound over the two events, we obtain with probability at
  least $1-2\delta$,
  \begin{align}
    \sum_{t\in[T]} \eta_t \|\nabla \cL(\bm{w}_t)\|^2
    & = \mathcal{O}\!\left(\frac{\Sigma_1(1+\Sigma_1) E_\beta A\left(d+\log(1/\delta)\right)}{n}\right)
    + \mathcal{O}\!\left(E_\beta A\right) \notag                                                         \\
    & = \mathcal{O}\!\left(E_\beta A
    \left(1 + \frac{\Sigma_1(1+\Sigma_1)\left(d+\log(1/\delta)\right)}{n}\right)\right).
    \label{eq:grad-bound}
  \end{align}
  Dividing \eqref{eq:grad-bound} by $\Sigma_1$ gives
  \[
    \frac{\sum_{t\in[T]} \eta_t \|\nabla \cL(\bm{w}_t)\|^2}{\Sigma_1}
    = \mathcal{O}\!\left(E_\beta A
    \left(\frac{1}{\Sigma_1} + \frac{(1+\Sigma_1)\left(d+\log(1/\delta)\right)}{n}\right)\right),
  \]
  which is \eqref{eq:main3} once $E_\beta$ and $A$ are expanded and $\delta$
  is replaced by $\delta/2$, the resulting change of $\log(e/\delta)$ into
  $\log(2e/\delta)$ and of $T/\delta$ into $2T/\delta$ being absorbed by the
  $\mathcal{O}(\cdot)$ by Lemma~\ref{lm:rescale}.
\end{proof}

\begin{proof}[Proof of Corollary~\ref{cor:horizon}]
  We write $D := d+\log(1/\delta)$, and let $c_2 \ge c_1 > 0$ be constants for
  which \eqref{eq:budget} reads $c_1\sqrt{n/D} \le \Sigma_1 \le c_2\sqrt{n/D}$.
  Since $n \ge D$, the lower bound gives $\Sigma_1 \ge c_1$, so
  $1+\Sigma_1 \le (1+1/c_1)\Sigma_1$. Substituting the two bounds into the
  last factor of \eqref{eq:main3}, we obtain
  \[
    \frac{1}{\Sigma_1} + \frac{(1+\Sigma_1)D}{n}
    \le \frac{1}{c_1}\sqrt{\frac{D}{n}}
    + \left(1+\frac{1}{c_1}\right)c_2\sqrt{\frac{n}{D}}\cdot\frac{D}{n}
    = \left(\frac{1}{c_1}+\left(1+\frac{1}{c_1}\right)c_2\right)\sqrt{\frac{D}{n}},
  \]
  and the claim follows from Theorem~\ref{thm:main3}.
\end{proof}

\section{Step-Size Instantiations}
\label{sec:proofs-instantiation}

\begin{proposition}[$1/t$ decay]
  \label{lem:decay-1-t}
  Fix $\eta_0 \in (0,1/(2b)]$ and set $\eta_t := \frac{\eta_0}{t}$ for
  $t=1,\dots,T$. Then the step-size sums satisfy
  \begin{equation}
    \sum_{t\in[T]} \eta_t = \eta_0(\log T + \mathcal{O}(1)), \qquad
    \sum_{t\in[T]} \eta_t^2 = \mathcal{O}(\eta_0^2).
    \label{eq:decay-1t-sum}
  \end{equation}
\end{proposition}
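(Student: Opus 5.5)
The plan is to compare both sums with elementary integrals, writing $H_T := \sum_{t\in[T]} 1/t$ for the harmonic number. Since $\eta_t = \eta_0/t$, we have $\sum_{t\in[T]}\eta_t = \eta_0 H_T$ and $\sum_{t\in[T]}\eta_t^2 = \eta_0^2\sum_{t\in[T]} t^{-2}$. Each statement therefore reduces to a bound that does not involve $\eta_0$. Note that $\eta_0\le 1/(2b)$ and that the sequence is positive and nonincreasing. These facts are needed only so that the schedule meets the standing step-size condition of Section~\ref{sec:main-results}; they play no role in the sums themselves.

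For the first sum, I would use that $x\mapsto 1/x$ is decreasing on $(0,\infty)$. For every $t\ge 1$ this gives
\[
  \int_t^{t+1}\frac{dx}{x}\le\frac1t\le\int_{t-1}^{t}\frac{dx}{x}\quad(t\ge2).
\]
Summing the left inequality over $t\in[T]$ gives $H_T\ge\log(T+1)\ge\log T$. For the upper bound, keep the $t=1$ term separate and sum the right inequality over $2\le t\le T$. This gives $H_T\le 1+\log T$. Together, $0\le H_T-\log T\le 1$, so $H_T=\log T+\mathcal{O}(1)$ with an absolute constant, and multiplying by $\eta_0$ gives the first claim.

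For the second sum, use the same comparison with $1/x^2$ for $t\ge2$:
\[
  \sum_{t\in[T]}t^{-2}\le 1+\int_1^\infty x^{-2}\,dx=2 .
\]
One could instead cite $\sum_t t^{-2}=\pi^2/6$. Either way $\sum_{t\in[T]}\eta_t^2\le 2\eta_0^2=\mathcal{O}(\eta_0^2)$, uniformly in $T$.

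There is no real obstacle here. The only care needed is the meaning of $\mathcal{O}(1)$: I would state explicitly that the constants are absolute, namely $[0,1]$ for the harmonic remainder and $2$ for the square sum. This ensures they are independent of $T$ and $\eta_0$, which is how they enter Table~\ref{tab:schedules} and the subsequent discussion.
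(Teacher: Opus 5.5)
Your proposal is correct and is the paper's own argument: integral comparison gives $\log(T+1)\le\sum_{t\in[T]}t^{-1}\le 1+\log T$ and $\sum_{t\in[T]}t^{-2}\le 2$, and multiplying by $\eta_0$ and $\eta_0^2$ gives both claims. Your extra remark that the constants are absolute, and hence independent of $T$ and $\eta_0$, makes explicit what the paper leaves implicit.
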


\begin{proof}[Proof of Proposition~\ref{lem:decay-1-t}]
  Integral comparison gives
  $\log(T+1)\le\sum_{t\in[T]}t^{-1}\le1+\log T$ and
  $\sum_{t\in[T]}t^{-2}\le2$. Multiplying by $\eta_0$ and $\eta_0^2$,
  respectively, proves \eqref{eq:decay-1t-sum}.
\end{proof}

\begin{proposition}[$1/\sqrt t$ decay]
  \label{lem:decay-1-sqrt-t}
  Fix $\eta_0 \in (0,1/(2b)]$ and set $\eta_t := \frac{\eta_0}{\sqrt t}$ for
  $t=1,\dots,T$. Then the step-size sums satisfy
  \begin{equation}
    \sum_{t\in[T]} \eta_t = \eta_0(2\sqrt T + \mathcal{O}(1)), \qquad
    \sum_{t\in[T]} \eta_t^2 = \eta_0^2(\log T + \mathcal{O}(1)).
    \label{eq:decay-1sqrtt-sum}
  \end{equation}
\end{proposition}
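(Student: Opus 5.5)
Both sums will be handled by comparing them with integrals of monotone functions, and then multiplying by $\eta_0$ and $\eta_0^2$, exactly as in the proof of Proposition~\ref{lem:decay-1-t}.

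For the first sum, note that $t\mapsto t^{-1/2}$ is positive and decreasing on $(0,\infty)$. Hence for every $t\ge1$,
\[
  \int_t^{t+1} s^{-1/2}\,ds \le t^{-1/2} \le \int_{t-1}^{t} s^{-1/2}\,ds .
\]
Summing over $t\in[T]$ gives
\[
  2\sqrt{T+1}-2 \;\le\; \sum_{t\in[T]} t^{-1/2} \;\le\; 2\sqrt{T}.
\]
Since $2\sqrt{T+1}-2\ge 2\sqrt T-2$, the sum equals $2\sqrt T+\mathcal{O}(1)$, with the error lying in $[-2,0]$. Multiplying by $\eta_0$ yields the first claim.

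For the second sum we have $\eta_t^2=\eta_0^2/t$, so $\sum_{t\in[T]}\eta_t^2=\eta_0^2\sum_{t\in[T]}t^{-1}$. The harmonic bounds already used for Proposition~\ref{lem:decay-1-t} give
\[
  \log(T+1)\le\sum_{t\in[T]}t^{-1}\le 1+\log T .
\]
This is $\log T+\mathcal{O}(1)$, because $\log(T+1)\ge\log T$. Multiplying by $\eta_0^2$ gives the second claim.

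No step is a genuine obstacle. The only care needed is to use the left-endpoint integral comparison on $[0,T]$ for the upper bound, which is valid because $s^{-1/2}$ is integrable at $0$. This keeps the $\mathcal{O}(1)$ constants absolute and independent of $T$, $\eta_0$ and $b$. The condition $\eta_0\le 1/(2b)$ plays no role in the computation; it only ensures that the schedule meets the step-size requirement of Section~\ref{sec:main-results}.
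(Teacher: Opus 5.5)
Your proposal is correct and follows the paper's proof: it uses integral comparison for $\sum_{t\in[T]} t^{-1/2}$, reuses the harmonic-sum bounds $\log(T+1)\le\sum_{t\in[T]}t^{-1}\le 1+\log T$, and scales by $\eta_0$ and $\eta_0^2$. The only difference is cosmetic: the paper keeps the $t=1$ term separate and gets the upper bound $2\sqrt T-1$ instead of your $2\sqrt T$, which does not affect the $\mathcal{O}(1)$ statement.
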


\begin{proof}[Proof of Proposition~\ref{lem:decay-1-sqrt-t}]
  Integral comparison gives
  $2\sqrt{T+1}-2\le\sum_{t\in[T]}t^{-1/2}\le2\sqrt T-1$.
  This and the harmonic-sum estimate in the preceding proof yield
  \eqref{eq:decay-1sqrtt-sum} after scaling by $\eta_0$ and $\eta_0^2$,
  respectively.
\end{proof}

\begin{proposition}[Cosine decay]
  \label{lem:decay-cosine}
  Fix $\eta_0\in(0,1/(2b)]$ and the number of training steps $T\in\mathbb{N}$,
  $T\ge2$, and set $\eta_t := \frac{\eta_0}{2}\left(1+\cos\!\left(\frac{\pi
  t}{T}\right)\right)$ for $t=1,\dots,T$. Then the step-size sums satisfy
  \begin{equation}
    \sum_{t\in[T]} \eta_t = \frac{\eta_0}{2}(T-1), \qquad
    \sum_{t\in[T]} \eta_t^2 = \frac{\eta_0^2}{8}(3T-4).
    \label{eq:decay-cosine-sum}
  \end{equation}
\end{proposition}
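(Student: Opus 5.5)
We need to prove the cosine sums. Set $\theta := \pi/T$, so that $\eta_t = \tfrac{\eta_0}{2}(1+\cos(t\theta))$ for $t\in[T]$. The plan is to reduce both sums to the two elementary trigonometric identities
\[
  \sum_{t=1}^{T}\cos\!\left(\frac{\pi t}{T}\right) = -1,
  \qquad
  \sum_{t=1}^{T}\cos^2\!\left(\frac{\pi t}{T}\right) = \frac{T}{2},
\]
valid for $T\ge2$, and then expand.

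For the first identity, we will evaluate the full sum over $t=0,\dots,2T-1$ of $e^{it\theta}$. This is a sum of the $2T$-th roots of unity, hence zero, so its real part vanishes. The terms with $t$ and $2T-t$ have the same cosine, so the full sum equals $1 + 2\sum_{t=1}^{T-1}\cos(t\theta) + \cos(\pi)$. Since $\cos(\pi)=-1$, this gives $\sum_{t=1}^{T-1}\cos(t\theta)=0$, and adding the $t=T$ term yields $-1$.

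For the second identity, we will write $\cos^2(t\theta) = \tfrac12(1+\cos(2t\theta))$. The values $e^{2it\theta}=e^{2\pi i t/T}$ for $t=1,\dots,T$ run over all $T$-th roots of unity. Their sum is zero when $T\ge2$, so $\sum_{t=1}^{T}\cos(2t\theta)=0$ and the identity follows. This is the only place where $T\ge2$ is needed.

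It then remains to expand the two step-size sums:
\[
  \sum_{t\in[T]}\eta_t = \frac{\eta_0}{2}(T-1),
\]
and
\[
  \sum_{t\in[T]}\eta_t^2=\frac{\eta_0^2}{4}\sum_{t\in[T]}\bigl(1+2\cos(t\theta)+\cos^2(t\theta)\bigr)=\frac{\eta_0^2}{4}\left(T-2+\frac T2\right)=\frac{\eta_0^2}{8}(3T-4).
\]
The only step requiring care is the bookkeeping of the endpoint $t=T$ in the first identity, namely the extra $\cos\pi=-1$. The constraint $\eta_0\le1/(2b)$ plays no role in the computation; it only certifies that the schedule is admissible, since $0\le\eta_t\le\eta_0$.
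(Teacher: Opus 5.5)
Your proof is correct and follows the paper's route: reduce both sums to $\sum_{t\in[T]}\cos(\pi t/T)=-1$ and $\sum_{t\in[T]}\cos^2(\pi t/T)=T/2$, then expand $\eta_t$ and $\eta_t^2$. The only difference is that the paper cites these two identities as standard trigonometric sums, while you verify them with roots of unity (including the $t=T$ endpoint and the role of $T\ge2$).
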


\begin{proof}[Proof of Proposition~\ref{lem:decay-cosine}]
  For $T\ge2$, the standard trigonometric sums give
  \[
    \sum_{t\in[T]}\cos\!\left(\frac{\pi t}{T}\right)=-1,
    \qquad
    \sum_{t\in[T]}\cos^2\!\left(\frac{\pi t}{T}\right)
    =\frac12\sum_{t\in[T]}\left(1+\cos\!\left(\frac{2\pi t}{T}\right)\right)
    =\frac T2.
  \]
  Expanding $\eta_t$ and $\eta_t^2$ and summing proves
  \eqref{eq:decay-cosine-sum}.
\end{proof}

\section{Proof of the Bound under the PL Condition}
\label{sec:proofs-pl}

\begin{proof}[Proof of Theorem~\ref{thm:pl}]
  We keep the notation $\epsilon_t = \nabla \ell(\bm{w}_t;z_{j_t}) - \nabla
  \cL_S(\bm{w}_t)$ and $\eta_t = \frac{4}{\mu_0(t+t_0)}$ for the step size of
  Theorem~\ref{thm:pl}. From \eqref{eq:tele} we already have
  \[
    \cL_S(\bm{w}_{t+1}) - \cL_S(\bm{w}_t)
    \le -\eta_t \inner{\epsilon_t}{\nabla \cL_S(\bm{w}_t)}
    - \tfrac12 \eta_t \|\nabla \cL_S(\bm{w}_t)\|^2
    + b \eta_t^2 \|\epsilon_t\|^2.
  \]
  The PL condition gives $-\tfrac14 \eta_t\|\nabla \cL_S(\bm{w}_t)\|^2 \le
  -\tfrac12 \mu_0\eta_t \left(\cL_S(\bm{w}_t) - \cL_S(\bm{w}(S))\right)$, where
  $\tfrac12\mu_0\eta_t = 2/(t+t_0)$. Substituting it into the above, we obtain
  \begin{multline}
    \tfrac14 \eta_t \|\nabla \cL_S(\bm{w}_t)\|^2
    + \cL_S(\bm{w}_{t+1}) - \cL_S(\bm{w}(S))\\
    \le \left(1 - \frac{2}{t+t_0}\right)
    \left(\cL_S(\bm{w}_t) - \cL_S(\bm{w}(S))\right)
    - \eta_t \inner{\epsilon_t}{\nabla \cL_S(\bm{w}_t)}
    + b \eta_t^2 \|\epsilon_t\|^2.
    \label{eq:pl-onestep}
  \end{multline}
  Multiplying \eqref{eq:pl-onestep} by $(t+t_0)(t+t_0-1) > 0$ and using
  \[
    (t+t_0)(t+t_0-1)\eta_t = \frac{4(t+t_0-1)}{\mu_0}, \qquad
    (t+t_0)(t+t_0-1)\eta_t^2 = \frac{16(t+t_0-1)}{\mu_0^2(t+t_0)}
    \le \frac{16}{\mu_0^2},
  \]
  we obtain
  \begin{multline*}
    \frac{t+t_0-1}{\mu_0}\|\nabla \cL_S(\bm{w}_t)\|^2
    + (t+t_0)(t+t_0-1)\left(\cL_S(\bm{w}_{t+1}) - \cL_S(\bm{w}(S))\right)
    \\
    - (t+t_0-1)(t+t_0-2)\left(\cL_S(\bm{w}_t) - \cL_S(\bm{w}(S))\right)
    \le -\frac{4(t+t_0-1)}{\mu_0}\inner{\epsilon_t}{\nabla \cL_S(\bm{w}_t)}
    + \frac{16b}{\mu_0^2}\|\epsilon_t\|^2.
  \end{multline*}
  Summing this inequality from $t=1$ to $T$ telescopes the second and third terms into
  $(T+t_0)(T+t_0-1)\left(\cL_S(\bm{w}_{T+1}) - \cL_S(\bm{w}(S))\right)$ and
  $-t_0(t_0-1)\left(\cL_S(\bm{w}_1) - \cL_S(\bm{w}(S))\right)$, whence
  \begin{multline}
    \sum_{t\in[T]} \frac{t+t_0-1}{\mu_0}\|\nabla \cL_S(\bm{w}_t)\|^2
    + (T+t_0)(T+t_0-1)\left(\cL_S(\bm{w}_{T+1}) - \cL_S(\bm{w}(S))\right) \\
    \le t_0(t_0-1)\left(\cL_S(\bm{w}_1) - \cL_S(\bm{w}(S))\right)
    + \underbrace{\left(-\sum_{t\in[T]} \frac{4(t+t_0-1)}{\mu_0}
    \inner{\epsilon_t}{\nabla \cL_S(\bm{w}_t)}\right)}_{\text{(A)}} \\
    + \underbrace{\sum_{t\in[T]} \frac{16b}{\mu_0^2}\|\epsilon_t\|^2}_{\text{(B)}}.
    \label{eq:pl-master}
  \end{multline}
  We now bound the terms (A) and (B) in \eqref{eq:pl-master}.

  \textbf{(A)} With $K$ denoting the uniform bound on the conditional Orlicz norms in
  Assumption~\ref{asm:gradnoise}, set
  \[
    \xi_t := -\frac{4(t+t_0-1)}{\mu_0}\inner{\epsilon_t}{\nabla \cL_S(\bm{w}_t)},
    \qquad
    K_{t-1} := \frac{4(t+t_0-1)}{\mu_0}\|\nabla \cL_S(\bm{w}_t)\|K.
  \]
  Conditions (A2) and (A3) of Proposition~\ref{thm:freedman} are verified as in
  the proof of Theorem~\ref{thm:main1}, so it remains to bound $K_{t-1}$ by a
  deterministic $m_t$, as (A1) requires. Since $\cL_S$ averages the per-sample
  gradients, $\|\nabla \cL_S(\bm{0})\| \le b'$, and Assumption~\ref{asm:smooth}
  then gives, for every $\bm{w}\in\cW$,
  \begin{equation}
    \|\nabla \cL_S(\bm{w})\|
    \le \|\nabla \cL_S(\bm{w}) - \nabla \cL_S(\bm{0})\| + \|\nabla \cL_S(\bm{0})\|
    \le b\|\bm{w}\| + b',
    \label{eq:pl-growth}
  \end{equation}
  so a deterministic radius confining the trajectory produces such an $m_t$. The
  schedule in Theorem~\ref{thm:pl} meets the step-size condition, so
  \eqref{eq:worder2} in the proof of Theorem~\ref{thm:main2} applies to it. With
  $\Sigma_1 = \mathcal{O}(\log T)$, $\Sigma_2 = \mathcal{O}(1)$ and hence
  $A = \mathcal{O}(\log(e/\delta))$ in the notation \eqref{eq:etaabbr}, it gives
  a deterministic
  \begin{equation}
    R = \mathcal{O}\!\left(\exp\!\left(\log^{1/\beta}(T/\delta)\right)
    \sqrt{\log T\,\log(e/\delta)}\right)
    \label{eq:pl-radius}
  \end{equation}
  for which the event $\mathcal{E}_R := \{\max_{t\in[T+1]}\|\bm{w}_t\| \le R\}$
  has probability at least $1-\delta$.

  Since \eqref{eq:pl-growth} bounds $K_{t-1}$ by
  $m_t := 4(t+t_0-1)(bR+b')K/\mu_0$ on $\mathcal{E}_R$ only, we apply
  Proposition~\ref{thm:freedman} to the sequence frozen outside that event,
  $\tilde\xi_t := \chi_t\xi_t$ and $\tilde K_{t-1} := \chi_t K_{t-1}$ with the
  $\cF_{t-1}$-measurable $\chi_t := \ind{\max_{s\in[t]}\|\bm{w}_s\| \le R}$. A
  predictable indicator preserves (A2) and (A3), so (A1)--(A3) hold with $N=T$
  and $\mbar := \max_{t\in[T]} m_t = 4(T+t_0-1)(bR+b')K/\mu_0$. Part (i) with
  $\gamma = 0$, $\alpha \ge \bF\mbar$, $\lambda = 1/(2\alpha)$ and
  $x = 2\alpha\log(1/\delta)$, argued on the complement of the event in
  \eqref{eq:main1} as in the proof of Theorem~\ref{thm:main1}, together with
  $\tilde\xi_t = \xi_t$ for all $t\in[T]$ on $\mathcal{E}_R$, gives after a union
  bound with $\mathcal{E}_R$ that with probability at least $1-4\delta$,
  \begin{equation}
    -\sum_{t\in[T]} \frac{4(t+t_0-1)}{\mu_0}\inner{\epsilon_t}{\nabla \cL_S(\bm{w}_t)}
    \le 2\alpha\log(1/\delta)
    + \frac{16 a_\beta K^2}{\mu_0^2\alpha}
    \sum_{t\in[T]} (t+t_0-1)^2\|\nabla \cL_S(\bm{w}_t)\|^2.
    \label{eq:pl-A}
  \end{equation}
  The sum in \eqref{eq:pl-A} is absorbed by the gradient sum in
  \eqref{eq:pl-master} once $16 a_\beta K^2(t+t_0-1)/(\mu_0\alpha) \le 1$ for
  every $t\in[T]$, that is, once $\alpha \ge 16 a_\beta K^2(T+t_0-1)/\mu_0$.

  \textbf{(B)} Assumption~\ref{asm:gradnoise}, Proposition~\ref{thm:square} and
  the homogeneity of the Orlicz norm (Lemma~\ref{lem:orlicz-props}(i)) give
  $\left\|\,\frac{16b}{\mu_0^2}\|\epsilon_t\|^2\right\|_{\Psi_{\beta,2^{-\beta}}}
  \le \frac{16bK^2}{\mu_0^2}$. Hence, by Corollary~\ref{thm:sum-beta-heavy} in
  the form \eqref{eq:sum-hp} with $c = 2^{-\beta}$, with probability at least
  $1-\delta$,
  \begin{equation}
    \sum_{t\in[T]} \frac{16b}{\mu_0^2}\|\epsilon_t\|^2
    \le \frac{16g_\beta bK^2}{\mu_0^2} \exp\!\left(2\log^{1/\beta}(2/\delta)\right) T.
    \label{eq:pl-B}
  \end{equation}

  Substituting \eqref{eq:pl-A} and \eqref{eq:pl-B} into \eqref{eq:pl-master},
  discarding the remaining nonnegative gradient sum and taking a union bound, we
  obtain that with probability at least $1-5\delta$,
  \begin{multline}
    (T+t_0)(T+t_0-1)\left(\cL_S(\bm{w}_{T+1}) - \cL_S(\bm{w}(S))\right)
    \le t_0(t_0-1)\left(\cL_S(\bm{w}_1) - \cL_S(\bm{w}(S))\right) \\
    + 2\alpha\log(1/\delta)
    + \frac{16g_\beta bK^2}{\mu_0^2} \exp\!\left(2\log^{1/\beta}(2/\delta)\right) T,
    \label{eq:pl-preassembly}
  \end{multline}
  for every $\alpha \ge \max\left\{16 a_\beta K^2(T+t_0-1)/\mu_0,\
  \bF\mbar\right\}$.

  It remains to choose $\alpha$ and to bound the three terms on the right-hand
  side of \eqref{eq:pl-preassembly}. We take $\alpha$ equal to this maximum,
  which is deterministic because $R$ is, and write
  $E_\beta = \exp(2\log^{1/\beta}(T/\delta))$ as in \eqref{eq:etaabbr}. Recalling
  \eqref{eq:freedef} with $N=T$, we have $\Lambda = \log(T/\delta)$. Since
  $T\ge2$, $t_0\ge1$ and $\delta<1$, we use below that $T+t_0-1\le t_0T$,
  $\log T\ge\log2$ and $\log(1/\delta)\le\Lambda$.

  For the first term, the PL condition \eqref{eq:pl} at $\bm{w}_1=\bm{0}$ and
  Assumption~\ref{asm:pl} give
  $\cL_S(\bm{w}_1)-\cL_S(\bm{w}(S)) \le \|\nabla \cL_S(\bm{0})\|^2/(2\mu_0)
  \le (b')^2/(2\mu_0)$, so this term is at most $t_0^2(b')^2/(2\mu_0)$.

  For the second term, \eqref{eq:freedef} gives
  $\bF \le 2\exp(\Lambda^{1/\beta})/\Lambda$. By \eqref{eq:pl-radius} and
  $\exp(\Lambda^{1/\beta})\sqrt{\log T\,\log(e/\delta)} \ge \sqrt{\log2}$, we
  have $bR+b' = \mathcal{O}(\exp(\Lambda^{1/\beta})\sqrt{\log T\,\log(e/\delta)})$.
  Together with $\log(1/\delta)\le\Lambda$ and
  $\exp(2\Lambda^{1/\beta}) = E_\beta$, this bound gives
  \[
    \bF\mbar\log(1/\delta)
    \le \frac{2\exp(\Lambda^{1/\beta})\log(1/\delta)}{\Lambda}
    \cdot\frac{4t_0T(bR+b')K}{\mu_0}
    = \mathcal{O}\!\left(TE_\beta\sqrt{\log T\,\log(e/\delta)}\right).
  \]
  The other branch of $\alpha$ gives
  $16a_\beta K^2(T+t_0-1)\log(1/\delta)/\mu_0 \le 16a_\beta K^2t_0T\Lambda/\mu_0$,
  and $\Lambda \le (\beta/(2e))^\beta E_\beta$ because
  $u\exp(-2u^{1/\beta}) \le (\beta/(2e))^\beta$ for every $u>0$. Because
  $\log(e/\delta)\ge1$ and $\log T\ge\log2$, both branches are
  $\mathcal{O}(TE_\beta\sqrt{\log T\,\log(e/\delta)})$, and hence so is
  $2\alpha\log(1/\delta)$.

  For the third term, $2/\delta\le T/\delta$ gives
  $\exp(2\log^{1/\beta}(2/\delta))\le E_\beta$, so this term is
  $\mathcal{O}(TE_\beta)$.

  Dividing \eqref{eq:pl-preassembly} by $(T+t_0)(T+t_0-1)\ge T^2$ and bounding
  the first and third terms by the form of the second through $1/T^2\le1/T$,
  $E_\beta\ge1$, $\log(e/\delta)\ge1$ and $\log T\ge\log2$, we obtain that with
  probability at least $1-5\delta$,
  \[
    \cL_S(\bm{w}_{T+1}) - \cL_S(\bm{w}(S))
    = \mathcal{O}\!\left(\frac{E_\beta\sqrt{\log T\,\log(e/\delta)}}{T}\right).
  \]
  Replacing $\delta$ by $\delta/5$ changes $E_\beta$ into
  $\exp(2\log^{1/\beta}(5T/\delta))$ and $\log(e/\delta)$ into $\log(5e/\delta)$,
  both absorbed by the $\mathcal{O}(\cdot)$ by Lemma~\ref{lm:rescale} with $a=5$
  and $x=T/\delta\ge1$, which proves the theorem.
\end{proof}

\begin{proof}[Proof of Theorem~\ref{thm:pl-excess}]
  We retain the abbreviation $E_\beta = \exp(2\log^{1/\beta}(T/\delta))$ of
  \eqref{eq:etaabbr}. The population PL condition in Assumption~\ref{asm:pl}
  applied at $\bm{w}_{T+1}$,
  followed by $\|\bm{u}+\bm{v}\|^2 \le 2\|\bm{u}\|^2+2\|\bm{v}\|^2$, gives
  \begin{equation}
    \cL(\bm{w}_{T+1}) - \cL^\star
    \le \frac{1}{2\mu}\|\nabla \cL(\bm{w}_{T+1})\|^2
    \le \frac{1}{\mu}\left(
      \|\nabla \cL(\bm{w}_{T+1}) - \nabla \cL_S(\bm{w}_{T+1})\|^2
    + \|\nabla \cL_S(\bm{w}_{T+1})\|^2\right).
    \label{eq:pl-excess-split}
  \end{equation}
  Assumption~\ref{asm:smooth} makes $\cL_S$ $b$-smooth and $\bm{w}(S)$ minimizes
  it over $\cW$, so Lemma~\ref{lem:selfbound} and Theorem~\ref{thm:pl} bound the
  second term by, with probability $1-\delta$,
  \begin{equation}
    \|\nabla \cL_S(\bm{w}_{T+1})\|^2
    \le 2b\left(\cL_S(\bm{w}_{T+1}) - \cL_S(\bm{w}(S))\right)
    = \mathcal{O}\!\left(\frac{E_\beta\sqrt{\log T\,\log(e/\delta)}}{T}\right).
    \label{eq:pl-excess-opt}
  \end{equation}
  The first term is bounded by \eqref{eq:pl-gradgen}. Substituting it and
  \eqref{eq:pl-excess-opt} into \eqref{eq:pl-excess-split} and taking a union
  bound, we obtain with probability at least $1-2\delta$,
  \[
    \cL(\bm{w}_{T+1}) - \cL^\star
    = \mathcal{O}\!\left(E_\beta\left(
        \frac{\sqrt{\log T\,\log(e/\delta)}}{T}
    + \frac{\left(d+\log\tfrac1\delta\right)\log\tfrac e\delta\,\log T}{n}\right)\right).
  \]
  Let $c n\le T\le C n$ with constants $0<c\le C$ as in
  Section~\ref{sec:notation}, and set $a := \max\{C,1\}$. For the first term,
  $d\ge1$ and $\log(e/\delta)\ge1$ give
  $\sqrt{\log(e/\delta)} \le \log\tfrac e\delta
  \le \left(d+\log\tfrac1\delta\right)\log\tfrac e\delta$, $T\ge2$ gives
  $\sqrt{\log T}\le\log T/\sqrt{\log2}$, and $1/T\le1/(cn)$, so the first term
  is dominated by the second. For the second term, $T\le an$ and $n\ge2$ give
  $\log T \le \log a + \log n \le (1+\log a/\log2)\log n$, and
  Lemma~\ref{lm:rescale} with this $a$ and $x=n/\delta\ge1$ gives
  $E_\beta \le \exp(2\log^{1/\beta}a)\exp(2\log^{1/\beta}(n/\delta))$. The same
  lemma absorbs the replacement of $\delta$ by $\delta/2$.
\end{proof}

\section{Proof of the Bound for Clipped SGD}
\label{sec:proofs-clip}

Throughout this appendix $\tau$ denotes the clipping level
\eqref{eq:clip-level}, and we abbreviate
\[
  \bm{g}_t := \nabla \ell(\bm{w}_t; z_{j_t})
  = \nabla \cL_S(\bm{w}_t) + \epsilon_t, \qquad
  \bar{\bm{g}}_t := \EE[\tilde{\bm{g}}_t \mid \cF_{t-1}],
\]
with $\tilde{\bm{g}}_t = \bm{g}_t\min\{1,\tau/\|\bm{g}_t\|\}$ the clipped
gradient of Algorithm~\ref{alg:clip}. We write
$\Gamma_t := \min\{\tau\|\nabla \cL_S(\bm{w}_t)\|,\|\nabla \cL_S(\bm{w}_t)\|^2\}$
for the quantity summed in Theorem~\ref{thm:clip} and
$A_t := \{\|\nabla \cL_S(\bm{w}_t)\|\le\tau/2\}$,
and retain the abbreviations
$\Sigma_1,\Sigma_2$ of \eqref{eq:etaabbr}. The
iterate $\bm{w}_t$ is $\cF_{t-1}$-measurable, hence so are $\bar{\bm{g}}_t$,
$\Gamma_t$ and $\ind{A_t}$.

\begin{lemma}
  \label{lem:clip-tail}
  Let $s_\beta := g_\beta\left(\exp(2\log^{1/\beta}2)-1\right)$. Under
  Assumption~\ref{asm:gradnoise}, for every $t\in[T]$ and every $x\ge0$,
  \[
    \EE\left[\|\epsilon_t\|^2 \mid \cF_{t-1}\right] \le s_\beta K^2,
    \qquad
    \PP\!\left(\|\epsilon_t\|>x \mid \cF_{t-1}\right)
    \le 2\exp\!\left(-\log^\beta\!\left(\frac xK+1\right)\right),
  \]
  and if $\tau$ is given by \eqref{eq:clip-level}, then
  \begin{equation}
    \PP\!\left(\|\epsilon_t\|>\tau/8 \mid \cF_{t-1}\right) \le \delta/(4T) .
    \label{eq:clip-quantile}
  \end{equation}
\end{lemma}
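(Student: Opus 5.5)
The three claims all follow from the conditional Orlicz bound in Assumption~\ref{asm:gradnoise}, used through the Markov-type arguments of Appendix~\ref{sec:orlicz}. Every step is a conditional version of a result already proved unconditionally, and conditioning on $\cF_{t-1}$ changes nothing in those arguments.

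\emph{Tail bound.} We repeat the proof of Proposition~\ref{thm:tail} with conditional expectations. Assumption~\ref{asm:gradnoise} gives $\EE[\Psi_\beta(\|\epsilon_t\|/K)+1\mid\cF_{t-1}]\le 2$, and $\Psi_\beta$ is increasing. The conditional Markov inequality applied to $\Psi_\beta(\|\epsilon_t\|/K)+1$ at level $\Psi_\beta(x/K)+1$ then yields $2\overline{\Psi}_\beta(x/K)=2\exp(-\log^\beta(x/K+1))$. This is exactly \eqref{eq:tail} with $K_{t-1}=K$.

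\emph{Quantile bound \eqref{eq:clip-quantile}.} Substitute $x=\tau/8=K\exp(\log^{1/\beta}(8T/\delta))$ into the tail bound. Since $\log(x/K+1)\ge\log(x/K)=\log^{1/\beta}(8T/\delta)$, we get $\log^\beta(x/K+1)\ge\log(8T/\delta)$. The right-hand side is therefore at most $2\delta/(8T)=\delta/(4T)$.

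\emph{Second moment.} The constant $s_\beta$ is precisely the prefactor of \eqref{eq:sum-hp} with $c=2^{-\beta}$, $n=1$ and $\delta=1$. This suggests following the route in which the paper handles term (B).
\begin{enumerate}
\item Proposition~\ref{thm:square}, applied conditionally, gives $\EE[\Psi_{\beta,2^{-\beta}}(\|\epsilon_t\|^2/K^2)\mid\cF_{t-1}]\le1$. Its pointwise proof, which rests on $\Psi_{\beta,2^{-\beta}}(u^2)\le\Psi_\beta(u)$, goes through unchanged under conditioning.
\item Apply Lemma~\ref{lem:convexify} with a single summand. Because $\Phi_\beta\le\Psi_{\beta,2^{-\beta}}$ and $\Phi_\beta$ is a Young function, conditional Jensen gives $\Phi_\beta(\EE[\|\epsilon_t\|^2\mid\cF_{t-1}]/K^2)\le\EE[\Phi_\beta(\|\epsilon_t\|^2/K^2)\mid\cF_{t-1}]\le1$.
\item Combine this with $\Psi_{\beta,2^{-\beta}}(y)\le\Phi_\beta(g_\beta y)$. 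Inverting gives $\EE[\|\epsilon_t\|^2\mid\cF_{t-1}]/K^2\le g_\beta\Psi_{\beta,2^{-\beta}}^{-1}(1)$, and $\Psi_{\beta,2^{-\beta}}^{-1}(1)=\exp(2\log^{1/\beta}2)-1$, which equals $s_\beta/g_\beta$.
\end{enumerate}

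The main obstacle is the inversion in step 3. From $\Phi_\beta(v)\le1$ we need $v\le g_\beta\Psi_{\beta,2^{-\beta}}^{-1}(1)$. Suppose instead that $v>g_\beta\Psi^{-1}_{\beta,2^{-\beta}}(1)$. Then $\Phi_\beta(v)\ge\Psi_{\beta,2^{-\beta}}(v/g_\beta)>1$, using the strict monotonicity of $\Psi_{\beta,2^{-\beta}}$, which contradicts $\Phi_\beta(v)\le1$.

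If that strictness argument runs into trouble with $\Phi_\beta$ merely nondecreasing, the fallback is to bound the second moment directly via the layer-cake formula, as in \eqref{eq:var}. That route gives $4J_\beta K^2$, a constant depending only on $\beta$, and the statement's constant would then have to be adjusted accordingly.
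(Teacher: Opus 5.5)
Your proposal is correct and follows the paper's proof essentially step for step. The tail bound comes from conditional Markov, \eqref{eq:clip-quantile} from substituting $x=\tau/8$, and the second moment from $\Psi_{\beta,2^{-\beta}}(u^2)\le\Psi_\beta(u)$, then $\Phi_\beta\le\Psi_{\beta,2^{-\beta}}$ with conditional Jensen, then inversion via $\Psi_{\beta,2^{-\beta}}(y)\le\Phi_\beta(g_\beta y)$. Your contradiction argument in step 3 relies only on the strict monotonicity of $\Psi_{\beta,2^{-\beta}}$, so it is complete as written (slightly more careful than the paper's appeal to the monotonicity of $\Phi_\beta$), and the layer-cake fallback is unnecessary.
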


\begin{proof}
  The pointwise inequality $1+u^2\le(1+u)^2$ for $u\ge0$ gives
  $\Psi_{\beta,2^{-\beta}}(u^2)\le\Psi_\beta(u)$. Taking conditional
  expectations and using Assumption~\ref{asm:gradnoise}, we obtain
  $\EE[\Psi_{\beta,2^{-\beta}}(\|\epsilon_t\|^2/K^2)\mid\cF_{t-1}]\le1$. Since
  $\Phi_\beta\le\Psi_{\beta,2^{-\beta}}$, the same bound holds with
  $\Phi_\beta$ in place of $\Psi_{\beta,2^{-\beta}}$, and $\Phi_\beta$ is convex
  and increasing, so Jensen's inequality gives
  $\Phi_\beta(\EE[\|\epsilon_t\|^2\mid\cF_{t-1}]/K^2)\le1$. Writing
  $r_\beta := \exp(2\log^{1/\beta}2)-1$ for the point at which
  $\Psi_{\beta,2^{-\beta}}(r_\beta)=1$, we have
  $\Phi_\beta(g_\beta r_\beta)\ge\Psi_{\beta,2^{-\beta}}(r_\beta)=1$, and the
  first bound follows because $s_\beta = g_\beta r_\beta$. For the second
  statement, conditional Markov's inequality and Assumption~\ref{asm:gradnoise} give
  \[
    \PP(\|\epsilon_t\|>x\mid\cF_{t-1})
    \le \frac{\EE[\Psi_\beta(\|\epsilon_t\|/K)+1\mid\cF_{t-1}]}
    {\Psi_\beta(x/K)+1}
    \le 2\exp\!\left(-\log^\beta(1+x/K)\right).
  \]
  By \eqref{eq:clip-level},
  $\log(1+\tau/(8K))\ge\log^{1/\beta}(8T/\delta)$, so the second bound with
  $x=\tau/8$ gives
  \[
    \PP(\|\epsilon_t\|>\tau/8\mid\cF_{t-1})
    \le 2\exp\!\left(-\log^\beta(1+\tau/(8K))\right)
    \le 2\exp(-\log(8T/\delta)) = \delta/(4T).
  \]
\end{proof}

\begin{proof}[Proof of Theorem~\ref{thm:clip}]
  We first relate the decrease in empirical risk to $\eta_t\Gamma_t$,
  keeping track of the bias and fluctuations of the clipped gradient.

  Since $\|\tilde{\bm{g}}_t\|\le\tau$, Lemma~\ref{thm:smooth} applied to the
  $b$-smooth function $\cL_S$ gives, for every $t\in[T]$,
  \begin{equation}
    \cL_S(\bm{w}_{t+1}) - \cL_S(\bm{w}_t)
    \le -\eta_t \inner{\tilde{\bm{g}}_t}{\nabla \cL_S(\bm{w}_t)} + \tfrac12 b \eta_t^2\tau^2 ,
    \label{eq:clip-descent}
  \end{equation}
  and we bound $-\eta_t\inner{\tilde{\bm{g}}_t}{\nabla \cL_S(\bm{w}_t)}$ separately on $A_t$
  and on its complement.

  On $A_t$, the empirical gradient norm is at most $\tau/2$, so clipping
  can occur only when the noise norm exceeds $\tau/2$. Clipping
  leaves $\tilde{\bm{g}}_t = \bm{g}_t$ when $\|\bm{g}_t\|\le\tau$ and gives
  $\|\tilde{\bm{g}}_t - \bm{g}_t\| = \|\bm{g}_t\|-\tau$ otherwise, so in both cases
  $\|\tilde{\bm{g}}_t - \bm{g}_t\| = \left(\|\bm{g}_t\|-\tau\right)_+$. Then we have
  \[
    \|\tilde{\bm{g}}_t - \bm{g}_t\|
    \le \left(\|\nabla \cL_S(\bm{w}_t)\|+\|\epsilon_t\|-\tau\right)_+
    \le \left(\|\epsilon_t\|-\tau/2\right)_+
    \le \|\epsilon_t\| \ind{\|\epsilon_t\|>\tau/2} .
  \]
  Taking conditional expectations and applying the Cauchy--Schwarz inequality,
  Lemma~\ref{lem:clip-tail} and $\tau/2\ge\tau/8$, on $A_t$ we obtain
  \begin{equation}
    \begin{aligned}
      \|\bar{\bm{g}}_t - \nabla \cL_S(\bm{w}_t)\|
      & = \left\|\EE\left[\tilde{\bm{g}}_t - \bm{g}_t \mid \cF_{t-1}\right]\right\| \\
      & \le \sqrt{\EE\left[\|\epsilon_t\|^2\mid\cF_{t-1}\right]
      \PP\!\left(\|\epsilon_t\|>\tau/2 \mid \cF_{t-1}\right)}
      \le \tfrac12\sqrt{s_\beta}\,K\sqrt{\frac{\delta}{T}} =: \bar B .
    \end{aligned}
    \label{eq:clip-bias}
  \end{equation}
  Decomposing $\inner{\tilde{\bm{g}}_t}{\nabla \cL_S(\bm{w}_t)} =
  \inner{\tilde{\bm{g}}_t-\bar{\bm{g}}_t}{\nabla \cL_S(\bm{w}_t)}
  + \inner{\bar{\bm{g}}_t-\nabla \cL_S(\bm{w}_t)}{\nabla \cL_S(\bm{w}_t)}
  + \|\nabla \cL_S(\bm{w}_t)\|^2$ and using \eqref{eq:clip-bias} and Young's inequality together with
  $\Gamma_t = \|\nabla \cL_S(\bm{w}_t)\|^2$ on $A_t$, we obtain on $A_t$
  \begin{equation}
    -\eta_t \inner{\tilde{\bm{g}}_t}{\nabla \cL_S(\bm{w}_t)}
    \le \xi_t - \tfrac12\eta_t\Gamma_t + \tfrac12\eta_t \bar B^2,
    \qquad
    \xi_t := -\eta_t\inner{\tilde{\bm{g}}_t-\bar{\bm{g}}_t}{\nabla \cL_S(\bm{w}_t)}\ind{A_t}.
    \label{eq:clip-caseA}
  \end{equation}
  The centering by $\bar{\bm{g}}_t$ makes $\xi_t$ a martingale difference:
  the factor $\eta_t\nabla\cL_S(\bm{w}_t)\ind{A_t}$ is
  $\cF_{t-1}$-measurable. Thus \eqref{eq:clip-caseA} separates the descent
  term from a centered fluctuation and a deterministic bias bound.

  Outside $A_t$ we work on the event
  $\mathcal{E} := \bigcap_{t\in[T]}\{\|\epsilon_t\|\le\tau/8\}$, which has
  $\PP(\mathcal{E}^c)\le\delta/4$ by \eqref{eq:clip-quantile} and a union
  bound. If $\|\nabla \cL_S(\bm{w}_t)\|>\tau/2$, then
  $\|\epsilon_t\|\le\tau/8\le\|\nabla \cL_S(\bm{w}_t)\|/4$ on $\mathcal{E}$, so
  $\inner{\bm{g}_t}{\nabla \cL_S(\bm{w}_t)} \ge \tfrac34\|\nabla \cL_S(\bm{w}_t)\|^2$
  and $\tfrac34\|\nabla \cL_S(\bm{w}_t)\| \le \|\bm{g}_t\|
  \le \tfrac54\|\nabla \cL_S(\bm{w}_t)\|$, whence
  \begin{align*}
    \inner{\tilde{\bm{g}}_t}{\nabla \cL_S(\bm{w}_t)}
    & = \frac{\min\{\tau,\|\bm{g}_t\|\}}{\|\bm{g}_t\|}
    \inner{\bm{g}_t}{\nabla \cL_S(\bm{w}_t)}                                       \\
    & \ge \frac{\min\{\tau,\tfrac34\|\nabla \cL_S(\bm{w}_t)\|\}}
    {\tfrac54\|\nabla \cL_S(\bm{w}_t)\|}\cdot \tfrac34\|\nabla \cL_S(\bm{w}_t)\|^2 \\
    & = \tfrac35\min\left\{\tau\|\nabla \cL_S(\bm{w}_t)\|,\
    \tfrac34\|\nabla \cL_S(\bm{w}_t)\|^2\right\}
    \ge \tfrac25\Gamma_t .
  \end{align*}
  Since $\xi_t=0$ outside $A_t$ and $\tfrac25\le\tfrac12$, this bound and
  \eqref{eq:clip-caseA} together give, on $\mathcal{E}$ and for every
  $t\in[T]$,
  \begin{equation}
    -\eta_t \inner{\tilde{\bm{g}}_t}{\nabla \cL_S(\bm{w}_t)}
    \le \xi_t - \tfrac25\eta_t\Gamma_t + \tfrac12\eta_t\bar B^2 .
    \label{eq:clip-step}
  \end{equation}
  Substituting \eqref{eq:clip-step} into \eqref{eq:clip-descent}, summing over
  $t\in[T]$ and using $\cL_S(\bm{w}_{T+1}) \ge \cL_S(\bm{w}(S))$, we obtain on
  $\mathcal{E}$
  \begin{equation}
    \tfrac25\sum_{t\in[T]}\eta_t\Gamma_t
    \le \cL_S(\bm{w}_1) - \cL_S(\bm{w}(S)) + \sum_{t\in[T]}\xi_t
    + \tfrac12\bar B^2\Sigma_1 + \tfrac12 b\tau^2\Sigma_2 .
    \label{eq:clip-master}
  \end{equation}

  We bound $\sum_{t\in[T]}\xi_t$ by Proposition~\ref{thm:freedman} with confidence parameter $\delta/4$ and
  $K_{t-1} := c_\beta\eta_t\|\nabla \cL_S(\bm{w}_t)\|K\ind{A_t}$, where
  $c_\beta := 2+\tfrac14\sqrt{s_\beta}$. Condition (A2) holds because
  $\eta_t,\nabla \cL_S(\bm{w}_t), \ind{A_t}$ are $\cF_{t-1}$-measurable. For (A3),
  the display preceding \eqref{eq:clip-bias} gives
  $\|\tilde{\bm{g}}_t-\nabla \cL_S(\bm{w}_t)\| \le \|\tilde{\bm{g}}_t-\bm{g}_t\| +
  \|\epsilon_t\| \le 2\|\epsilon_t\|$ on $A_t$, hence
  $\|\tilde{\bm{g}}_t-\bar{\bm{g}}_t\| \le 2\|\epsilon_t\| + \bar B$ and
  $|\xi_t| \le \eta_t\|\nabla \cL_S(\bm{w}_t)\|(2\|\epsilon_t\|+\bar B)\ind{A_t}$ by the
  Cauchy--Schwarz inequality. Since $T\ge4$ and $\delta<1$ give
  $\sqrt{\delta/T}\le\tfrac12$, the bias satisfies
  $\bar B \le \tfrac14\sqrt{s_\beta}K$ by \eqref{eq:clip-bias}, and the
  following direct estimate verifies (A3). Since $c_\beta=2+\tfrac14\sqrt{s_\beta}$,
  on the $\cF_{t-1}$-measurable
  event $\{K_{t-1}>0\}$,
  \[
    \frac{|\xi_t|}{K_{t-1}}
    \le \frac{2}{c_\beta}\frac{\|\epsilon_t\|}{K}
    +\frac{c_\beta-2}{c_\beta}.
  \]
  Since $\Psi_\beta$ is increasing and convex, and
  $\Psi_\beta(1)=\exp((\log 2)^\beta)-1\le1$ for $\beta>1$, we have on this event
  \[
    \EE\!\left[\Psi_\beta\!\left(\frac{|\xi_t|}{K_{t-1}}\right)
    \;\middle|\;\cF_{t-1}\right]
    \le \frac{2}{c_\beta}
    \EE\!\left[\Psi_\beta\!\left(\frac{\|\epsilon_t\|}{K}\right)
    \;\middle|\;\cF_{t-1}\right]
    +\frac{c_\beta-2}{c_\beta}\Psi_\beta(1)
    \le 1
  \]
  by Assumption~\ref{asm:gradnoise}. The degenerate case $K_{t-1}=0$ forces
  $\xi_t=0$ almost surely, as in the proof of Theorem~\ref{thm:main1}. For
  (A1), the definition of $A_t$ and $\eta_t\le\eta_1$ give the deterministic
  bound $0 \le K_{t-1} \le \tfrac12 c_\beta\eta_1\tau K =: \mbar$, which is where
  Assumption~\ref{asm:cap} entered the proof of Theorem~\ref{thm:main1}.

  To absorb the fluctuation bound into the left-hand side of
  \eqref{eq:clip-master}, we now bound its variance proxy $V_T$ by the
  same sum $\sum_{t\in[T]}\eta_t\Gamma_t$. Since
  $\Gamma_t = \|\nabla \cL_S(\bm{w}_t)\|^2 $ on $A_t$, we have
  \begin{equation}
    V_T = \sum_{t\in[T]} a_\beta K_{t-1}^2
    = c_\beta^2 a_\beta K^2 \sum_{t\in[T]}\eta_t^2\|\nabla \cL_S(\bm{w}_t)\|^2\ind{A_t}
    \le c_\beta^2 a_\beta K^2 \eta_1 \sum_{t\in[T]}\eta_t\Gamma_t .
    \label{eq:clip-V}
  \end{equation}
  Setting $\alpha := \max\{\bF\mbar,\, 10c_\beta^2 a_\beta K^2\eta_1\}$ and applying
  Proposition~\ref{thm:freedman}(i) with $\gamma=0$, $\lambda=1/(2\alpha)$ and
  $x = 2\alpha\log(4/\delta)$ bounds the probability of
  $\bigcup_{k\in[T]}\{S_k\ge x \text{ and } V_k \le \alpha S_k\}$ by
  $3\delta/4$. On the complement every $k\in[T]$
  satisfies $S_k \le \max(x, V_k/\alpha) \le x + V_k/\alpha$, so
  \eqref{eq:clip-V} and $\alpha\ge10c_\beta^2 a_\beta K^2\eta_1$ give
  $\sum_{t\in[T]}\xi_t \le 2\alpha\log(4/\delta) +
  \tfrac1{10}\sum_{t\in[T]}\eta_t\Gamma_t$ with probability at least
  $1-3\delta/4$. Substituting this into \eqref{eq:clip-master} and taking a
  union bound with $\mathcal{E}$, with probability at least $1-\delta$,
  \begin{equation}
    \tfrac{3}{10}\sum_{t\in[T]}\eta_t\Gamma_t
    \le \cL_S(\bm{w}_1) - \cL_S(\bm{w}(S)) + 2\alpha\log(4/\delta)
    + \tfrac12\bar B^2\Sigma_1 + \tfrac12 b\tau^2\Sigma_2 .
    \label{eq:clip-assembled}
  \end{equation}

  It remains to bound the terms in \eqref{eq:clip-assembled}.
  By the initial-gap assumption in Section~\ref{sec:main-results} and
  $\bm{w}_1=\bm{0}$, the initial gap is at most
  the deterministic constant $\Delta_0$ almost surely.
  Since $\Sigma_1\le\eta_1 T$, the bias contribution
  $\tfrac12\bar B^2\Sigma_1$ is at most $s_\beta K^2\eta_1\delta/8$,
  while the smoothness remainder equals
  $32bK^2\exp(2\log^{1/\beta}(8T/\delta))\Sigma_2$ by \eqref{eq:clip-level}. For
  the concentration term $2\alpha\log(4/\delta)$,
  \eqref{eq:freedef} with $N=T$ and confidence parameter $\delta/4$
  has $\Lambda = \log(4T/\delta)$ and
  \[
    \bF\mbar
    = \frac{2\left(\exp(\Lambda^{1/\beta})-1\right)}{\Lambda}\cdot
    \tfrac12 c_\beta\eta_1\tau K
    \le \frac{8c_\beta\eta_1K^2\exp(\Lambda^{1/\beta})
    \exp\!\left(\log^{1/\beta}(8T/\delta)\right)}{\Lambda},
  \]
  so $\log(4/\delta)\le\Lambda$ for the first branch of $\alpha$,
  $\log(4/\delta)\le(1+\log4)\log(e/\delta)$ for the second, and
  Lemma~\ref{lm:rescale} with $a=4$ and with $a=8$ give
  $2\alpha\log(4/\delta) = \mathcal{O}(\exp(2\log^{1/\beta}(T/\delta))
  + a_\beta K^2\eta_1\log(e/\delta))$. Every term of
  \eqref{eq:clip-assembled} is therefore
  $\mathcal{O}(\exp(2\log^{1/\beta}(T/\delta))(\log(e/\delta)+\Sigma_2))$, the
  additive constants, including $\Delta_0$, being absorbed because
  $\log(e/\delta)\ge1$.
\end{proof}

Implementing the clipping level in \eqref{eq:clip-level} requires the noise
parameters $K$ and $\beta$. Once these parameters, $T$, $\delta$, and the
step-size schedule are fixed, clipping provides a deterministic bound on
the trajectory without Assumption~\ref{asm:cap}. Indeed,
$\|\tilde{\bm{g}}_t\|\le\tau$ and $\bm{w}_1=\bm{0}$ imply, for every
realization and every $t\in[T+1]$,
\[
  \|\bm{w}_t\|
  \le \sum_{s=1}^{t-1}\eta_s\|\tilde{\bm{g}}_s\|
  \le \tau\sum_{s=1}^{t-1}\eta_s
  \le \tau\sum_{s\in[T]}\eta_s.
\]
Thus all clipped iterates lie in a ball centered at the initialization with
radius $R=\tau\sum_{s\in[T]}\eta_s$ known before training. This radius can be
used when applying the uniform gradient bound of Lemma~\ref{lm:gradgen} to
the clipped trajectory, under that lemma's hypotheses. It plays the role
of the radius that, for SGD, is only bounded probabilistically
in the proof of Theorem~\ref{thm:main2}.
\section{Bounded Noise and Dependence on the Tail Parameter}
\label{sec:bounded-noise-scales}

Even for the same bounded noise, the choice of $\beta$ affects the
constants in our convergence guarantee. The following calculation
parallels the discussion immediately after Remark~7 in
\citet{madden2024high}.

Suppose that $\|\epsilon_t\|\le\rho$ almost surely for every $t$, with a
deterministic $\rho>0$ uniform over datasets and iterates. This is stronger
than finite support at each iterate. Fixing $\rho$, for every $\beta>1$
a sufficient Orlicz scale is
\begin{equation}
  K(\beta,\rho)=\frac{\rho}{\exp((\log2)^{1/\beta})-1}.
  \label{eq:bounded-beta-scale}
\end{equation}
Indeed, $\Psi_\beta(\rho/K(\beta,\rho))=1$, so boundedness implies the conditional
Orlicz bound in Assumption~\ref{asm:gradnoise}. As $\beta$ increases,
$K(\beta,\rho)$ decreases from $\rho$ to the positive limit $\rho/(e-1)$.

However, a smaller scale does not by itself give a sharper convergence
bound. Under the remaining hypotheses of Theorem~\ref{thm:main1}, the
choice of $\alpha$ in \eqref{eq:bound1} also involves the coefficient
\begin{equation}
  a_\beta K(\beta,\rho)^2
  =\frac{8\rho^2\exp(\beta^\beta/2)J_\beta}
  {[\exp((\log2)^{1/\beta})-1]^2},
  \label{eq:bounded-beta-coefficient}
\end{equation}
where $J_\beta$ is defined in \eqref{eq:freedef}. Since
$J_\beta\ge\int_0^1 e^{2s-1/2}\,ds>0$, this coefficient diverges as
$\beta\to\infty$, despite the decrease in $K(\beta,\rho)$.

Thus, choosing a larger $\beta$ solely to reduce the displayed tail
factors can substantially loosen the guarantee once its constants are
included. This concerns the bound supplied by our proof; the noise law
and the SGD algorithm remain unchanged. Boundedness therefore does not
make the choice of $\beta$ immaterial: comparing guarantees requires
accounting for both the Orlicz scale and the convergence constants.
\end{document}